\documentclass[letterpaper,11pt]{article}
\usepackage[round,authoryear]{natbib}
\usepackage[margin=1in]{geometry}
\usepackage{amsmath,amssymb}

\usepackage{amsthm}
\usepackage{mathtools}
\usepackage{hyperref}
\usepackage[nameinlink]{cleveref}    
\usepackage{enumitem}
\setitemize{noitemsep,topsep=0pt,parsep=0pt,partopsep=0pt,leftmargin=*}
\usepackage{parskip} 
\usepackage{xcolor}
\usepackage{titlesec}
\usepackage{tikz}
\usepackage{dsfont}
\usepackage{booktabs}   
\usepackage{ragged2e}   
\usepackage{arydshln}

\usepackage[utf8]{inputenc} 
\usepackage[T1]{fontenc}    
\usepackage{url}            
\usepackage{booktabs}       
\usepackage{amsfonts}       
\usepackage{nicefrac}       
\usepackage{microtype}      

\usetikzlibrary{arrows.meta}
\usetikzlibrary{shapes,arrows,shadows,positioning}
\usetikzlibrary{backgrounds}
\usetikzlibrary{arrows,chains,matrix,positioning,scopes} 
\newtheorem{defn}{Definition}
\newtheorem{assum}{Assumption}
\newtheorem{lem}{Lemma}
\newtheorem{thm}{Theorem}

\newtheorem{prop}{Proposition}
\newtheorem{remark}{Remark}
\newenvironment{ack}{\section*{Acknowledgments}}{}
\hypersetup{
  colorlinks=true,
  linkcolor=blue,
  filecolor=magenta,
  urlcolor=blue,
  citecolor=blue,
}
\crefname{equation}{}{}
\Crefname{equation}{Equation}{Equations}
\crefname{assum}{Ass.}{Ass.}
\Crefname{assum}{Assumption}{Assumptions}
\crefname{prop}{Prop.}{Prop.}
\Crefname{prop}{Proposition}{Propositions}
\crefname{lem}{Lem.}{Lem.}
\Crefname{lem}{Lemma}{Lemmas}
\crefname{defn}{Defn.}{Defn.}
\Crefname{defn}{Definition}{Definitions}
\crefname{cor}{Cor.}{Cor.}
\Crefname{cor}{Corollary}{Corollaries}
\crefname{thm}{Thm.}{Thm.}
\Crefname{thm}{Theorem}{Theorems}
\creflabelformat{equation}{(#2#1#3)}

\usepackage{glossaries}
\newacronym{mpc}{MPC}{model predictive control}
\newacronym{ols}{OLS}{ordinary least-squares}
\newacronym{rv}{RV}{random variable}
\newacronym{lqr}{LQR}{linear quadratic regulator}
\newacronym{iid}{i.i.d.}{independently and identically distributed}
\newacronym{arx}{ARX}{autoregressive with exogenous input}
\newacronym{var}{VAR}{vector autoregressive}
\newcommand{\bA}{\boldsymbol{A}}
\newcommand{\bB}{\boldsymbol{B}}
\newcommand{\bC}{\boldsymbol{C}}
\newcommand{\bD}{\boldsymbol{D}}
\newcommand{\bE}{\boldsymbol{E}}
\newcommand{\bxi}{\boldsymbol{\xi}}

\newcommand{\bigO}{\mathcal{O}}

\newcommand{\poly}{\text{poly}}
\newcommand{\polyGburn}{\poly_3}
\newcommand{\polyGbound}{\poly_4}
\newcommand{\polyEburn}{\poly_5}
\newcommand{\polyEbound}{\poly_6}
\newcommand{\polyPburn}{\poly_1}
\newcommand{\polyPbound}{\poly_2}

\title{Learning~Linear~Systems under~Heavy-Tailed~Noise: A~Non-Asymptotic~Analysis from~A~Single~Trajectory}

\author{%
  Xiaomian Yang \\
  Department of Chemical Engineering \\
  Massachusetts Institute of Technology
  Cambridge, MA 02139 \\
  \texttt{xiaomian@mit.edu} \\
  \and
  Sungho Shin \\
  Department of Chemical Engineering \\
  Massachusetts Institute of Technology \\
  Cambridge, MA 02139 \\
  \texttt{sushin@mit.edu} \\
  \thanks{Corresponding Author: \url{sushin@mit.edu}}
}

\begin{document}

\maketitle

\begin{abstract}
  We establish non-asymptotic sample complexity bounds 
  for the least-squares estimation of vector autoregressive models 
  for exponentially stable systems
  with heavy-tailed noise based on a single observed trajectory.
  By assuming i.i.d. noise, bounded noise covariance, 
  and persistent excitation, we show that the estimation error 
  is $\widetilde{\bigO}(r^{1/2}T^{-1/2+1/p})$ under bounded $p$th moment for $p > 2$, 
  where $T$ is the number of samples, $r$ is the noise dimension, and $\widetilde{\bigO}(\cdot)$ 
  hides logarithmic terms.
  We also introduce a unifying approach to sample complexity analysis 
  applicable to broad classes of noise distributions 
  and showcase this by deriving error bounds
  for sub-exponential and sub-Gaussian noise distributions.
  Finally, we specialize our analysis to autoregressive models with exogenous inputs 
  and show that the dimension factor of the error bound 
  is independent of the model order.
\end{abstract}


\section{Introduction}
System identification has received renewed attention 
in recent years due to its central role in data-driven model-based control 
and decision-making algorithms 
\citep{dean2017samplecomplexitylinearquadratic,Simchowitz2018-if,oymakNonasymptoticIdentificationLTI2019,
Tsiamis2022-pb,ziemannTutorialNonAsymptoticTheory2024,Faradonbeh2017-ju,Bakshi2023-az,Kanakeri2024-xh,Sattar2025-lr}. 
In many applications---ranging from robotics \citep{Rosolia2018-xx} to process control \citep{qin1997overview}---
one must learn a dynamical model from data in order to design controllers 
that are both sample-efficient and constraint-aware. 
In particular, \gls*{mpc} relies explicitly 
on system models often learned via system identification
to make predictions and optimize control actions
\citep{Rawlings2017-nw,qin1997overview}. 
This paradigm contrasts with direct policy learning approaches
\citep{Fazel2018-hn,hu2023toward,Sutton2018-ji}, 
where control policies are learned end-to-end without explicitly learning the underlying system dynamics. 
When data are limited and safety or operational constraints must be respected, model-based control---learning the system model with system identification and subsequently synthesizing a controller (e.g., robust control or \gls*{mpc})---is often the only viable approach.

Classical system identification theory has primarily focused on asymptotic properties of estimators, 
establishing consistency and convergence as the number of samples tends to infinity \citep{ljung1998system}. 
While these results establish basic properties of the estimator, they are insufficient for understanding performance 
in the finite-sample regime relevant to learning-based control. 
In order to analyze end-to-end performance and to compare different paradigms,
non-asymptotic guarantees of estimation error for a finite number of samples are crucial, 
and a growing body of recent work has developed non-asymptotic sample complexity bounds 
for linear system identification \citep{dean2017samplecomplexitylinearquadratic,Simchowitz2018-if,
oymakNonasymptoticIdentificationLTI2019,ziemannTutorialNonAsymptoticTheory2024}.

Most existing non-asymptotic results on \gls*{ols} rely on 
\emph{strong assumptions regarding the noise distribution}. 
In particular, the majority of current analyses assume that the system noise
is sub-Gaussian 
\citep{dean2017samplecomplexitylinearquadratic,
Simchowitz2018-if,oymakNonasymptoticIdentificationLTI2019,
ziemannTutorialNonAsymptoticTheory2024}
while few focus on the sub-Weibull class\citep{Faradonbeh2017-ju},
which excludes distributions with polynomial tail bounds 
such as Pareto or Student's t-distributions. 
Heavy-tailed disturbances arise naturally in applications 
involving outliers, intermittent shocks, or imperfect sensing and actuation,
such as power grid blackout frequency that is often modeled by the Pareto distribution \citep{Nair2022-wg}. 
The existing theoretical guarantees based on light-tailed noise distributions 
and exponentially decaying heavy-tailed distributions are inapplicable in such scenarios.
This motivates the central question of this paper: 
\emph{Does model-based control remain statistically viable when system noise exhibits heavy tails?}

We address this question by studying the non-asymptotic sample complexity of \gls*{ols} estimators 
for linear dynamical systems
under heavy-tailed noise distributions.
Specifically, we focus on
(i) \gls*{ols} estimator, (ii) identification from single trajectory, 
(iii) exponentially stable system, 
and (iv) general heavy-tailed noise distributions for bounded $p$th moment with $p>2$.
Rationale for these settings is as follows:
\gls*{ols} is often employed in practice as the go-to method
even when the noise covariance is unknown due to its computational tractability 
and is the basis for a variety of practical algorithms \citep{qin1997overview}.
Single trajectory is often the most realistic data collection scenario in practice, 
as one typically has access to only one realization of the system evolution.
Most controllers designed by engineers are exponentially stable 
for reliability and performance guarantees,
such as power system frequency and voltage control, process control, and robotics.
Finally, relaxing light-tailed noise assumptions is crucial 
for understanding the robustness of model-based approaches in real-world settings 
where the system is subject to heavy-tailed noises.

\paragraph{Contributions}
This paper provides a non-asymptotic sample complexity analysis 
of \gls*{ols}-based linear system identification on the \gls*{var} model
for exponentially stable system under heavy-tailed noise distributions
with bounded $p$th moment with $p>2$.
We present a unifying approach for non-asymptotic sample complexity analysis 
that accommodates a broad class of noise distributions 
and yields explicit finite-sample error bounds expressed in terms of general system properties.
Finally, we derive non-asymptotic estimation error upper bounds 
for heavy-tailed distributions with bounded $p$th moment for $p>2$
and demonstrate its applicability to
sub-exponential
and sub-Gaussian distributions.
To the best of our knowledge, this is the first work to provide 
comprehensive non-asymptotic sample complexity guarantees 
for (i) \gls*{ols} estimators, (ii) from single trajectory, 
(iii) for exponentially stable systems,
(iv) under general heavy-tailed noise distributions.

By assuming exponential stability, 
persistent excitation, \gls*{iid} noise,
and bounded propagation of disturbances,
we show that the estimation error is bounded by
$\widetilde{\bigO}(r^{1/2}T^{-1/2+1/p})$ under bounded $p$th moment with $p> 2$,
where $T$ is the sample size and $r$ is the noise dimension.
Notably, our analysis decouples the noise and state dimensions
to address certain classes of rank-deficient systems,
where our bound is expressed entirely in $r$ 
and can be smaller than the state dimension.
Our analysis provides an
explicit characterization not only of the error scale but also of the dependence of 
system-specific constants that characterize stability, persistent excitation, and covariance bound.
This clearly reveals the effect of these properties on the parameter estimation error.
Our analysis demonstrates that for $p>2$ distributions, 
the estimation error can be controlled by increasing the number of samples, 
albeit at a slower rate than in the sub-Gaussian case.
\emph{This suggests that model-based approaches remain statistically viable 
even in the presence of moderately heavy-tailed (bounded $p$th moment with $p>2$) noise distributions}.

Our technique uses a fast mixing argument (via exponential stability) 
and a blocking strategy by introducing stacking of noise vectors,
which improves upon previous works \citep{Yu1994-uj} 
by removing temporal correlation completely.
The estimation error can then be expressed 
in terms of the empirical covariance tail bound of the augmented noise distribution, 
where standard concentration inequalities for independent random variables can be applied.
Another contribution, also at a more technical level, 
is deriving a tighter sample complexity bound for \gls*{arx} model identification.
We specialize our general analysis to \gls*{arx} models and show that, 
under suitable excitation and stability conditions, 
the dimension factor of the sample complexity depends only on the
combined state and input dimension and is independent of the autoregressive order.
This result is an improvement over existing bounds 
that scale with the order of the \gls*{arx} model \citep{ziemannTutorialNonAsymptoticTheory2024}, 
where the complexity scales with $q^{1/2}$ and $q$ is the order of the \gls*{arx} model.

\paragraph{Related Work}

Previous works dedicated 
to non-asymptotic sample complexity analysis
have primarily focused on sub-Gaussian noise
and generalized
results for exponentially stable systems to marginally stable systems. 
\cite{dean2017samplecomplexitylinearquadratic} first explored 
convergence of finite samples with
multi-trajectory inputs.
This was extended to single trajectory by
\cite{oymakNonasymptoticIdentificationLTI2019}
for the Markov parameter 
of the state-space model.
\cite{Tsiamis2022-pb} relaxed the noise distribution to i.i.d. sub-Gaussian noise,
and is generalized by \cite{ziemannTutorialNonAsymptoticTheory2024,Simchowitz2018-if}
to non-explosive or marginally stable systems.
In particular, \cite{Simchowitz2018-if} avoids the mixing time argument
and shows that marginally stable systems can be easier to estimate
due to better signal-to-noise ratio.
In comparison,
our work focuses on exponentially stable systems
and heavy-tailed noise
and does not address marginally stable systems.

Theoretical guarantees for sample complexity 
of systems affected by heavy-tailed distributions are limited.
Perhaps most similar to our work is \cite{Faradonbeh2017-ju}, 
which explored \gls*{ols} with sub-Weibull noise on a stable system, 
and obtained $\widetilde{\bigO}(T^{-1/2}d^{1/2})$.
However, their analysis only addressed
distributions with exponentially decaying tails
and not polynomially decaying tails.
Our work extends the scope of noise distributions, 
such as Pareto and Student's t-distributions, 
which can be captured by bounded $p$th moment assumptions.
\cite{Sattar2025-lr} also conducted analysis for \gls*{ols}
and relaxed the noise distribution
to only bounded covariance
but made a restrictive bounded input assumption.
Recent work by \cite{Bakshi2023-az} considered noise distributions
with only bounded $4$th moment
and guaranteed an upper bound of $\widetilde{\bigO}(T^{-1/2})$. The estimator studied therein 
requires the solution of a constraint system that is only guaranteed 
to be feasible under non-trivial conditions, thus limiting its applicability.
\cite{Kanakeri2024-xh}
established a similar bound 
with a robust linear least-squares algorithm,
but only for multi-trajectory data.

We summarize existing results and compare to our work below. Let 
$N$ be the number of trajectories, $d$ the state dimension, 
$r$ the noise dimension, $T$ the number of samples,
$d_y$ the output dimension, $d_u$ the input dimension, $q$ the order of the \gls*{arx} model,
and $\Delta$ the estimation error.

\begin{itemize}
\item \cite{dean2017samplecomplexitylinearquadratic}: coarse-ID; \gls*{ols} estimator; 
multi-trajectory; Gaussian noise; $\Delta=\bigO(d^{1/2}N^{-1/2})$.
\item \cite{oymakNonasymptoticIdentificationLTI2019}: Markov parameter of state-space model; \gls*{ols} estimator; 
single-trajectory; stable system; Gaussian noise; $\Delta=\widetilde{\bigO}(d^{1/2}T^{-1/2})$.
\item \cite{ziemannTutorialNonAsymptoticTheory2024}: ARX; \gls*{ols} estimator; 
single-trajectory; marginally stable system; sub-Gaussian noise; $\Delta=\widetilde{\bigO}((q(d_y+d_u))^{1/2}T^{-1/2})$.
\item \cite{Faradonbeh2017-ju}: \gls*{var}; \gls*{ols} estimator; single-trajectory; stable system; 
sub-Weibull noise; $\Delta=\widetilde{\bigO}(d^{1/2}T^{-1/2})$.
\item \cite{Bakshi2023-az}: Markov parameter of state-space model; an estimator based on a custom constraint system; 
single-trajectory; marginally stable system; bounded $4$th moment; $T = \Omega(\log^3(1/\Delta)/\Delta^2)$.
\item \cite{Kanakeri2024-xh}: \gls*{var}; robust \gls*{ols} estimator; 
multi-trajectory; bounded $4$th moment; $\Delta=\bigO(d^{3/2}T^{-1/2})$.
\item \textbf{This work}: \gls*{var} and ARX; \gls*{ols} estimator; 
single-trajectory; stable system; under bounded $p$th moment for $p>2$; $\Delta=\widetilde{\bigO}(T^{-1/2+1/p}r^{1/2})$.
\end{itemize}

\paragraph{Organization}
The rest of the paper is organized as follows:
\Cref{sec:setting} introduces the problem setting.
\Cref{sec:unifying} presents a unifying lemma for non-asymptotic sample complexity analysis.
\Cref{sec:examples} applies the unifying lemma to bounded $p$th moment distributions.
\Cref{sec:arx} specializes the results to \gls*{arx} models.
\Cref{sec:conclusion} concludes the paper.
All the proofs are deferred to \Cref{sec:main-thm-proof,sec:thm-subgauss,sec:thm-subexp,sec:thm-pthmom,sec:arx-appendix,sec:thm-arx}.
Numerical results are shown in \Cref{sec:numerical}.

\paragraph{Notation}
We denote the set of real numbers and integers by $\mathbb{R}$ and $\mathbb{Z}$, respectively. 
The set of positive (nonnegative) real numbers and integers 
are denoted by $\mathbb{R}_{>0}$ ($\mathbb{R}_{\geq 0}$) and $\mathbb{Z}_{> 0}$ ($\mathbb{Z}_{\geq 0}$), respectively.
Vector or matrix $p$-norms are represented by $\|\cdot\|_p$, and we let $\|\cdot\| = \|\cdot\|_2$.
$\lambda_{\text{min}}(A)$ is the minimum eigenvalue of matrix $A$
while $\lambda_{\text{max}}(A)$ is the maximum eigenvalue.
$\mathds{1}(\cdot)$ is the indicator function.
$\mathbb{S}^{d-1}$ denotes the unit sphere in $\mathbb{R}^d$.
For vectors $x_1,x_2,\cdots,x_n$, 
we denote the concatenation of these vectors 
as $(x_1,x_2,\cdots,x_n) = [x_1^\top\;x_2^\top\;\cdots\;x_n^\top]^\top$.
We denote $\log^p(x) = (\log(x))^p$ for $p\in\mathbb{R}$.

\section{Problem Formulation}\label{sec:setting}

We consider the \gls*{var} model, 
a linear dynamical system of the following form:
\begin{align}\label{eqn:sys}
  \begin{aligned}[t]
    z_{0} & \;=\; \Psi\xi_0                                         \\
    z_{t} & \;=\; \Theta z_{t-1} + \Psi\xi_t,\quad t=1,2,\cdots, T,
  \end{aligned}
\end{align}
where
$d,r,T\in \mathbb{Z}_{>0}$ are system dimension, noise dimension, and trajectory length,
$\Theta\in\mathbb{R}^{d\times d}$ and $\Psi \in\mathbb{R}^{d \times r}$ 
are unknown system parameters,
$\{z_t\}_{t=0}^{T}$ is the sequence of system states in $\mathbb{R}^d$,
and $\{\xi_t\}_{t=0}^{T}$ is a sequence of random noise vectors in $\mathbb{R}^r$.
In system identification and model-based control, estimating $\Theta$ is the first step towards 
identifying system dynamics and designing control policies.
Thus, we focus on the problem of identifying unknown $\Theta$ 
from the observations $\{z_t\}_{t=0}^{T}$ 
using the \gls*{ols} estimator of the following form:
\begin{align}\label{eqn:est}
  \widehat{\Theta} \in \arg\min_{\Theta} \sum_{t=1}^{T} \left\|z_{t} - \Theta z_{t-1}\right\|^2.
\end{align}
We will derive an upper bound on the estimation error $\|\widehat{\Theta}-\Theta\|$ 
under a finite trajectory length $T$ that holds with probability at least $1-\delta$ 
for some user-specified failure probability
$\delta\in(0,1)$, 
expressed in terms of the trajectory length $T$, noise dimension $r$, 
failure probability $\delta$, and system-specific constants.

\paragraph{Generality of the Formulation}
System identification is often studied under \gls*{arx} model formulations, 
which include control (exogenous input) terms.
In \Cref{sec:arx}, we will show the results derived for the estimator in \cref{eqn:est} 
can be directly applied to identification of \gls*{arx} model with exogenous inputs
sampled from i.i.d. zero-mean distributions.
Another popular setting is the state-space model, where the system states are not directly measured.
One can perform ARX model identification as an intermediate step to identify the Markov parameters
and recover the state-space model from the identified ARX model 
using subspace identification methods, such as the Ho-Kalman method
\citep{Van_Overschee1994-th,Qin2006-iz,oymakNonasymptoticIdentificationLTI2019}.
We do not explicitly analyze the error bound for state-space models 
since it can be derived from the \gls*{arx} model.

\section{Unifying Lemma for Error Bounds}\label{sec:unifying}

We present the unifying lemma 
for analyzing the finite-sample error bound
of the least squares estimator in \cref{eqn:est}.
We derive the error bound 
for the estimate based on a set of sufficient conditions
discussed in \Cref{assum:main}, 
such as \gls*{iid} and zero-mean noise, exponential stability, bounded effect of noise, 
and persistent excitation.
We also make an assumption on the empirical covariance tail bound of the augmented noise vector
discussed in \Cref{assum:tail}, 
which is applicable to various noise distributions.
We do not assume specific noise distributions 
for $\xi_t$ in this section and defer that to
\Cref{sec:examples},
where we will establish sufficient conditions for \Cref{assum:tail} 
under specific noise distributions.
The following are the main assumptions regarding the properties of the system in \cref{eqn:sys}.
\begin{assum}\label{assum:main}
  Consider the system in \cref{eqn:sys} with $d,r,T\in \mathbb{Z}_{>0}$, $\Theta\in\mathbb{R}^{d\times d}$, $\Psi\in\mathbb{R}^{d\times r}$, 
  and $\xi_t \in \mathbb{R}^r$.
  There exist $L,\beta,\sigma \in \mathbb{R}_{>0}$,
  $\alpha\in(0,1)$, and $\overline{t}_c\in\mathbb{Z}_{>0}$, such that the following conditions hold:
  \begin{enumerate}[label=(\alph*),leftmargin=*,ref=\theassum(\alph*)]
  \item\label[assum]{assum:iid} \textbf{\Gls*{iid}, zero-mean, unit covariance}: 
    \begin{align}\label{eqn:iid}
      \xi_0, \xi_1, \cdots, \xi_T \text{ are \gls*{iid}}
      , \quad
      \mathbb{E}[\xi_t]  = 0, \quad \mathbb{E}[\xi_t \xi_t^\top] = I , \quad \forall t=0,1,\cdots, T,
    \end{align}
  \item\label[assum]{assum:stab} \textbf{Stability}: 
    \begin{align}\label{eqn:stab}
      \|\Theta^t\| \leq L\alpha^t,\quad \forall t=0,1,\cdots
    \end{align}
  \item\label[assum]{assum:bounded-cov} \textbf{Bounded effect of noise}:
    \begin{align}\label{eqn:bounded-cov}
      \Psi\Psi^\top \preceq \sigma^2 I 
    \end{align}
  \item\label[assum]{assum:ctrb-sigma} \textbf{Persistent excitation}:
    \begin{align}\label{eqn:ctrb-sigma}
      \begin{bmatrix}
        \Psi & \Theta \Psi & \cdots &\Theta^{\overline{t}_c-1} \Psi
      \end{bmatrix}
      \begin{bmatrix}
        \Psi & \Theta \Psi & \cdots &\Theta^{\overline{t}_c-1} \Psi
      \end{bmatrix}^\top
      \succeq
      \beta^2 I
    \end{align}
  \end{enumerate}
\end{assum}

In \Cref{assum:iid},
  the zero-mean assumption is without loss of generality, 
  as any non-zero mean can be absorbed into the system dynamics 
  by redefining the state variable appropriately if $\Theta$ is stable. 
  Unit-covariance is assumed (almost) without loss of generality, 
  as any positive definite covariance matrix can be absorbed into $\Psi$ 
  by redefining $\Psi \leftarrow \Psi \Sigma^{1/2}$ where $\Sigma$ is the covariance matrix of $\xi_t$.
\Cref{assum:stab} guarantees the system exponentially converges to the origin in the absence of noise
  \citep{oymakNonasymptoticIdentificationLTI2019}.
  This condition is satisfied when $|\lambda|<1$, for all eigenvalues $\lambda$ of $\Theta$.
  A weaker form is the non-explosive system 
  employed by \cite{Simchowitz2018-if,ziemannTutorialNonAsymptoticTheory2024,Bakshi2023-az}, 
  where $|\lambda|\leq 1$.
  In this work, we assume the stronger condition of exponential stability, 
  as it allows us to express the estimation error bound 
  in the form of augmented noise vectors with bounded length (see \Cref{assum:tail}) 
  by using the mixing time of the system.
\Cref{assum:bounded-cov} bounds the covariance of the \emph{effect of noise} $\Psi\xi_t$; 
  that is, $\mathbb{E}\left[\Psi\xi_t(\Psi\xi_t)^\top\right]=\Psi\Psi^\top\preceq \sigma^2 I$.
\Cref{assum:ctrb-sigma} ensures persistent excitation 
\citep{Van-Overschee1996-ee}, 
which requires that the controllability matrix 
of the system up to time $\overline{t}_c$ 
is full rank with the minimum singular value lower bounded by $\beta$.
This means that, by selecting excitation signals appropriately, 
the system can reach any state in $\mathbb{R}^d$ within $\overline{t}_c$ time steps.
The input signals can excite all modes of the system, 
so the system is informative enough to identify the system parameters.

Our next assumption introduces the covariance tail bound
for the \emph{augmented} noise defined below.
\newcommand{\tail}{\sum_{t=1}^T \bxi_{t:t-k+1}\bxi_{t:t-k+1}^\top  - \mathbb{E}\left[\sum_{t=1}^T \bxi_{t:t-k+1}\bxi_{t:t-k+1}^\top\right]}
\begin{assum}[\textbf{Bounded Tail Probability}] \label[assum]{assum:tail}
  Consider the sequence of \glspl*{rv} $\{\xi_t\}_{t=0}^{T}$ in \cref{eqn:sys}. 
  There exist $\epsilon \in \mathbb{R}_{>0}$, $\delta \in (0,1)$, and $k\in\mathbb{Z}_{>0}$ such that the following holds:
  \begin{align}\label{eqn:tail}
      \mathbb{P}\left[\left\|
      \tail
      \right\|\geq \epsilon
      \right] \leq \delta,
    \end{align}
    where
    $\bxi_{t:t-k+1}=
    (
      \xi_{t}   ,
      \xi_{t-1} ,
      \cdots    ,
      \xi_{t-k+1} )$,
      and $\xi_{t}=0$ if $t<0$ or $t>T$.
\end{assum}
The order of augmentation $k$ is a tunable variable, 
which serves as the mixing time of the system in \Cref{lem:main}.
This assumption is a generalization of the common concentration inequalities 
for empirical covariance matrices 
\cite[chapter 6]{wainwrightHighDimensionalStatisticsNonAsymptotic2019}.
\Cref{assum:tail} ensures that the difference 
between the empirical covariance and the true covariance 
is bounded with high probability.
In \Cref{sec:examples}, we will derive $\epsilon$ explicitly 
for different classes of noise distributions 
by leveraging distribution-related properties 
and applying concentration inequalities.
The bound $\epsilon$
depends on the failure probability $\delta$, 
sample size $T$, noise dimension $r$,
and distribution-dependent variables for $\xi_t$ (e.g., $p$th moment).

\newcommand{\epsbound}{\frac{\beta^2}{8}\left(\frac{\sigma^2 L^2}{1-\alpha^2} + \frac{\beta^2}{8}\right)^{-1} T}
We present the unifying lemma for the estimation error bound under
\Cref{assum:main,assum:tail}:
\begin{lem}\label{lem:main}
  Suppose that \Cref{assum:main,assum:tail} hold with:
  \begin{subequations}\label{eqn:main-lem-assum}
    \begin{align}
      T &\geq 2\overline{t}_c,\label{eqn:main-lem-assum-T}\\
      k &= \max\left(\overline{t}_c,\left\lceil\log_{(1/\alpha)}\left(\frac{16\sigma^2 L^2 }{\beta^2 (1-\alpha)^2}\right)\right\rceil,
      \left\lceil\log_{(1/\alpha)}T\right\rceil \right) + 1,\label{eqn:main-lem-assum-k}\\
      \epsilon &\leq \epsbound.\label{eqn:main-lem-assum-eps}
    \end{align}
  \end{subequations}
  Then, with probability at least $1-\delta$, the estimator $\widehat{\Theta}$ in \cref{eqn:est} is unique and the following holds:
  \begin{align}\label{eqn:lem-main-res}
    \left\|\widehat{\Theta} - \Theta\right\| \leq 
    \frac{ 8 \sigma^2 L}{(1-\alpha) \beta^2}\left(\frac{\epsilon}{T} + \frac{1}{T^{1/2}}\right),
  \end{align}
  where $r,T,\Theta$ are defined in \cref{eqn:sys}, $L, \alpha, \beta, \sigma, \overline{t}_c$ are defined in \Cref{assum:main},
  and $k, \delta, \epsilon$ are defined in \Cref{assum:tail}. 
\end{lem}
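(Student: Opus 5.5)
The plan is to start from the closed form of the estimator. On the event that $Z := \sum_{t=1}^T z_{t-1}z_{t-1}^\top \succ 0$, the minimizer in \cref{eqn:est} is unique and
\begin{align*}
\widehat{\Theta}-\Theta = \Big(\sum_{t=1}^T \Psi\xi_t z_{t-1}^\top\Big) Z^{-1}.
\end{align*}
The proof therefore needs an upper bound on the ``noise--state'' cross term and a lower bound $Z \succeq c\,\beta^2 T I$. Both will be derived on the single event of \Cref{assum:tail}, which has probability at least $1-\delta$.

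The central tool is a truncation at the mixing time. Unrolling \cref{eqn:sys} gives
\begin{align*}
z_{t-1} = M\,\bxi_{t-1:t-k+1} + \Theta^{k-1} z_{t-k},\qquad M=\begin{bmatrix}\Psi & \Theta\Psi & \cdots & \Theta^{k-2}\Psi\end{bmatrix}.
\end{align*}
\Cref{assum:stab,assum:bounded-cov} give $\|M\|^2\le \sigma^2L^2/(1-\alpha^2)$. Because $k-1\ge \overline{t}_c$, \Cref{assum:ctrb-sigma} gives $MM^\top\succeq\beta^2 I$. The choice of $k$ in \cref{eqn:main-lem-assum-k} makes $L\alpha^{k-1}$ at most both $1/T$ and a small multiple of $\beta(1-\alpha)/(\sigma L)$. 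The remainder $\Theta^{k-1}z_{t-k}$ is therefore small relative to the excitation.

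\emph{Lower bound on $Z$.} I will use the elementary inequality $(a+b)(a+b)^\top\succeq \tfrac12 aa^\top - bb^\top$. This gives
\begin{align*}
Z\succeq \tfrac12 M\Big(\sum_t \bxi\bxi^\top\Big)M^\top - \Theta^{k-1}\Big(\sum_t z_{t-k}z_{t-k}^\top\Big)\Theta^{(k-1)\top}.
\end{align*}
\begin{itemize}
\item \emph{First term.} On the event of \Cref{assum:tail}, $\sum_t\bxi\bxi^\top \succeq \mathbb{E}[\cdot]-\epsilon I$. The expectation is the identity times $T$, minus boundary effects from the zero padding. Each diagonal block counts at least $T-k$ terms, and $T\ge 2\overline{t}_c$ will be used to absorb this. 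The first term is then $\succeq \tfrac12\big(\beta^2(T-O(k)) - \epsilon\,\sigma^2L^2/(1-\alpha^2)\big)I$.
\item \emph{Second term.} This needs a crude upper bound $\|Z\|\lesssim \frac{\sigma^2L^2}{(1-\alpha)^2}(T+\epsilon)$. I will get it by expanding $z_{t-1}=\sum_{m\ge0}\Theta^{m(k-1)}M\bxi_{t-1-m(k-1):\,\cdot}$ into successive length-$(k-1)$ blocks. By Minkowski's inequality for the matrix square-root norm, $\|Z\|^{1/2}\le\sum_m L\alpha^{m(k-1)}\|M\|\,\|\sum_t\bxi\bxi^\top\|^{1/2}$. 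Each shifted block sum is bounded by $T+\epsilon$ on the same event, because shifting only changes which zero-padded indices appear.
\end{itemize}
Multiplying the $\|Z\|$ bound by $L^2\alpha^{2(k-1)}$ and using the second argument of the max in $k$ makes the remainder at most a fraction of $\beta^2T$. Condition \cref{eqn:main-lem-assum-eps} is exactly what makes $\epsilon\|M\|^2\le\beta^2T/8$-type slack hold. Together these give $Z\succeq \tfrac{\beta^2}{8}T\,I$ up to the constants appearing in \cref{eqn:lem-main-res}, and hence uniqueness.

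\emph{Upper bound on the cross term.} Split it as
\begin{align*}
\sum_t\Psi\xi_t z_{t-1}^\top=\Psi\Big(\sum_t\xi_t\bxi_{t-1:t-k+1}^\top\Big)M^\top+\Psi\sum_t\xi_t z_{t-k}^\top\Theta^{(k-1)\top}.
\end{align*}
\begin{itemize}
\item \emph{Main part.} The matrix $\sum_t\xi_t\bxi_{t-1:t-k+1}^\top$ is an off-diagonal block of $\sum_t\bxi_{t:t-k+1}\bxi_{t:t-k+1}^\top$. Its expectation vanishes by independence and zero mean, so its norm is at most $\epsilon$ on the event. This part is thus at most $\sigma\epsilon\|M\|$.
\item \emph{Remainder.} By Cauchy--Schwarz, this part is at most $\sigma\|\sum\xi_t\xi_t^\top\|^{1/2}\|Z\|^{1/2}L\alpha^{k-1}$. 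Since $\|\sum\xi_t\xi_t^\top\|^{1/2}\le (T+\epsilon)^{1/2}$, $\|Z\|^{1/2}\lesssim (T+\epsilon)^{1/2}$, and $\alpha^{k-1}\le 1/T$ from the third argument of $k$, this is $O(\sigma^2L^2(1-\alpha)^{-1})\cdot(T+\epsilon)/T$. After dividing by the lower bound, it yields the $\epsilon/T$ and $T^{-1/2}$ terms; the bound is loose enough to give $T^{-1/2}$.
\end{itemize}
Dividing by $\lambda_{\min}(Z)\gtrsim\beta^2T$ gives \cref{eqn:lem-main-res}.

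The main obstacle is the lower bound on $Z$. It requires controlling the boundary deficit of $\mathbb{E}\sum\bxi\bxi^\top$, the $\epsilon$-deviation, and the remainder simultaneously, with constants compatible with \cref{eqn:main-lem-assum-eps}. The step I expect to be most delicate is the upper bound on $\|Z\|$ through the block expansion, since every shifted block sum must be controlled by the single event of \Cref{assum:tail}.
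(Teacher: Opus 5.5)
Your architecture matches the paper's: the closed form, truncation at the mixing time $k$, excitation from the head, the zero-mean off-diagonal block $\sum_t\xi_t\bxi_{t-1:t-k+1}^\top$ bounded by $\epsilon$, and every zero-padded shifted Gram block bounded by $T+\epsilon$ (the paper's $\|\Xi_j\|^2=\|\widehat{Q}_{j,j}\|\le\|\widehat{Q}_{0,0}\|\le T+\epsilon$). However, two steps do not go through under the lemma's hypotheses and constants.

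\emph{Boundary deficit.} You lower-bound every diagonal block of the expected Gram matrix by $T-k$, which gives $\beta^2(T-O(k))$. The hypothesis $T\ge 2\overline{t}_c$ cannot absorb an $O(k)$ loss, because $k$ contains $\lceil\log_{1/\alpha}T\rceil$, and this can exceed $T$ when $\alpha$ is near $1$. You must discard all but the first $\overline{t}_c$ blocks: $\sum_{j=0}^{k-2}(T-j)\Theta^j\Psi\Psi^\top\Theta^{j\top}\succeq(T-\overline{t}_c+1)\sum_{j<\overline{t}_c}\Theta^j\Psi\Psi^\top\Theta^{j\top}\succeq\tfrac{T}{2}\beta^2 I$. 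This is what the paper does, and it is where \cref{eqn:main-lem-assum-T} is used.

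\emph{Double-counted $L$ in the remainder.} Writing the remainder as $\Theta^{k-1}z_{t-k}$ and bounding it by $\|\Theta^{k-1}\|\,\|Z\|^{1/2}$ charges $L$ twice: once through $\|\Theta^{k-1}\|\le L\alpha^{k-1}$, and once inside $\|Z\|^{1/2}\lesssim\sigma L(1-\alpha)^{-1}(T+\epsilon)^{1/2}$. Two smaller slips point the same way: \cref{eqn:main-lem-assum-k} gives $\alpha^{k-1}\le 1/T$, not $L\alpha^{k-1}\le 1/T$, and your Minkowski estimate as written yields $L^4$, not $L^2$, in $\|Z\|$. The consequences are as follows.
\begin{itemize}
\item In the noise--state cross term you get $\sigma^2L^2\alpha^{k-1}(1-\alpha)^{-1}(T+\epsilon)$. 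After dividing by $\lambda_{\min}\gtrsim\beta^2T$ this contributes order $\sigma^2L^2/((1-\alpha)\beta^2T)$. That is below the lemma's $8\sigma^2L\,T^{-1/2}/((1-\alpha)\beta^2)$ only if $T\gtrsim L^2$, which is not a hypothesis, so ``loose enough to give $T^{-1/2}$'' is unjustified.
\item In the lower bound the remainder is $\sigma^2L^4\alpha^{2(k-1)}(1-\alpha)^{-2}(T+\epsilon)$. The second argument of the max in $k$ removes $\sigma^2L^2\alpha^{k-1}/(1-\alpha)^2\le\beta^2/16$. It leaves a factor $L^2\alpha^{k-1}\le\beta^2(1-\alpha)^2/(16\sigma^2)$, which is $O(1)$ only via the unstated inequality $\beta^2\le\sigma^2/(1-\alpha^2)$. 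That inequality is true (test the Gramian against an eigenvector of $\Theta^\top$ and use $\rho(\Theta)\le\alpha$), but it is missing from your plan.
\end{itemize}

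\emph{The fix.} Follow the paper and expand $[z_0\,\cdots\,z_{T-1}]=\sum_j\Theta^j\Psi\Xi_j$, bounding the tail $\sum_{j\ge k-1}\Theta^j\Psi\Xi_j$ termwise by $\sigma L\alpha^{k-1}(1-\alpha)^{-1}(T+\epsilon)^{1/2}$, which has a single factor of $L$. The noise--state tail is then at most $\sigma^2L\alpha^{k-1}(1-\alpha)^{-1}(T+\epsilon)$, which $\alpha^{k-1}\le 1/T$ turns into an $O(1/T)$, hence $T^{-1/2}$, term plus an $\epsilon$ piece. For the Gram matrix, the paper drops the PSD tail--tail product and bounds the two cross terms by $2(T+\epsilon)\sigma^2L^2\alpha^{k-1}/(1-\alpha)^2\le(T+\epsilon)\beta^2/8$, which is exactly what $k$ is calibrated for. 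This gives $Z\succeq\tfrac{T\beta^2}{4}I$ in your notation. With the tail bounded this way, your $\tfrac12aa^\top-bb^\top$ split also closes, with a slightly worse constant that still fits under the lemma's $8$.
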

\Cref{lem:main} suggests that the estimation error 
can be expressed in the form of $\bigO(\epsilon/T + T^{-1/2})$, 
and it can be rationalized as follows.
The constants in front are related to system stability, persistent excitation, and the bounded effects of noise.
As the system becomes less stable with $L \rightarrow \infty$ and $\alpha \rightarrow 1$,  
the error bound becomes looser. 
We emphasize the dichotomy between persistent excitation
and noise covariance.
When the system is more excitable with increasing $\beta$ and 
data becomes more informative, the error bound becomes tighter;
but when the noise covariance upper bound $\sigma^2$ increases 
and data is obscured by white noise,
the error becomes larger.
Our result depends on the empirical covariance 
tail bound $\epsilon$ defined in \Cref{assum:tail}, 
which can be derived from distribution-specific 
tail properties of the noise. 
Therefore, $\epsilon/T$ can be interpreted 
as the inverse of a signal-to-noise ratio,
dictating the error of the parameter estimate.
The sample complexity analysis reduces to finding the explicit form of $\epsilon$ 
in terms of $T$, $r$, $k$, $\delta$, 
and distribution-dependent variables for different classes of noise distributions,
which we will discuss in the next section.

The proof of \Cref{lem:main} is given in \Cref{sec:main-thm-proof}. 
A sketch is provided here. 
The estimation error of the \gls*{ols} estimator in \cref{eqn:est} 
can be decomposed into two terms:
\begin{align}\label{eqn:ls-decomp}
  \left\|\widehat{\Theta} - \Theta\right\| \leq \left\|\overline\Xi Z^\top \right\| \left\|(Z Z^\top)^{-1}\right\|,
\end{align}
where $Z = [z_0\; z_1\; \cdots\; z_{T-1}]$ and $\overline\Xi = [\Psi\xi_1\; \Psi\xi_2\; \cdots\; \Psi\xi_{T}]$.
We analyze the upper bound on the norms of $\overline\Xi Z^\top $ and $(Z Z^\top)^{-1}$.
For upper bounding $(ZZ^\top)^{-1}$,
the key challenge is the temporal correlation of the states $\{z_t\}_{t=0}^{T-1}$.
However, the correlation becomes exponentially weak due to (exponential) stability 
in \Cref{assum:stab}.
In other words, the stochastic process $\{z_t\}_{t=0}^{T-1}$ mixes exponentially fast.
We exploit this property to decompose $ZZ^\top$ in terms of the order of augmentation $k$,
which is bounded below by the persistent excitation condition in \Cref{assum:ctrb-sigma},
and the magnitude of the empirical covariance tail decay is controlled by \Cref{assum:tail}.
Upper bounding the norm of $\overline\Xi Z^\top$ follows similar strategies. 
The result in \Cref{lem:main} is then obtained 
by combining the upper bounds for $(ZZ^\top)^{-1}$ 
and $\overline\Xi Z^\top$ in \cref{eqn:ls-decomp}.

\section{Sample Complexity Bounds for Distributions with Bounded $p$th Moments}\label{sec:examples}
\label{sec:bound-p}
We now discuss several distribution-specific sufficient conditions for \Cref{assum:tail}.
We first describe the $k$ blocking strategy that decouples temporal correlation
of the augmented noise vectors $\{\bxi_{t:t-k+1}\}_{t=1}^{T}$ defined in \Cref{assum:tail}. 
This decomposition allows the application of distribution-specific concentration inequalities
to derive $\epsilon$ and $\delta$ explicitly
and establish the sample complexity bound of 
heavy-tailed distributions with bounded $p$th moment 
for $p>2$ in \Cref{thm:pthmom}.
We demonstrate our technique on 
sub-Gaussian distributions in \Cref{sec:thm-subgauss}
and sub-exponential distributions in \Cref{sec:thm-subexp}
for completeness.
Our estimation error bounds will depend polynomially on the system-specific constants
introduced in \Cref{assum:main}:
\begin{align}\label{eqn:cal-S}
  \mathcal{S}:=\{L, \sigma, 1/(1-\alpha), \log^{-1}(1/\alpha), 1/\beta, \overline{t}_c\},
\end{align}
along with a few more distribution-dependent terms to be introduced later.

\paragraph{Sketch of deriving empirical covariance tail bound}
We first decompose the term in \cref{eqn:tail} 
to a more convenient form 
for applying the subsequent concentration results.
Recall the \emph{augmented noise vectors} $\{\bxi_{t:t-k+1}\}_{t=1}^T$ defined in \cref{eqn:tail}.
The main technical challenge is the temporal correlation of the empirical covariance matrix of
the augmented noise vectors, 
as they share overlapping noise terms.

\begin{figure}[t]
  \centering
  \begin{tikzpicture}[font=\small]
    \matrix (mat) [matrix of math nodes, left delimiter={(}, right delimiter={)}, column sep=.4cm, row sep=.1cm]
    {
      \xi_1 & \xi_2 & \xi_3 & \xi_4 & \xi_5 & \xi_6 & \cdots & \xi_{T-2} & \xi_{T-1} & \xi_{T}   \\
      \xi_0 & \xi_1 & \xi_2 & \xi_3 & \xi_4 & \xi_5 & \cdots & \xi_{T-3} & \xi_{T-2} & \xi_{T-1} \\
      0     & \xi_0 & \xi_1 & \xi_2 & \xi_3 & \xi_4 & \cdots & \xi_{T-4} & \xi_{T-3} & \xi_{T-2} \\
    };
    \node[anchor=east,xshift=-.5cm] (eq) at (mat.west) {$\boldsymbol{\Xi}_k=$};
    \begin{scope}[on background layer]
      \draw[thick,gray!10!white,line width=25pt, line cap=round] (mat-1-1.center) -- (mat-3-1.center) node[below,overlay,gray] {$\bxi_{1:-1}$};      \draw[thick,gray!10!white,line width=25pt, line cap=round] (mat-1-2.center) -- (mat-3-2.center) node[below,overlay,gray] {$\bxi_{2:0}$};
      \draw[thick,gray!10!white,line width=25pt, line cap=round] (mat-1-3.center) -- (mat-3-3.center) node[below,overlay,gray] {$\bxi_{3:1}$};
      \draw[thick,blue!10!white,line width=25pt, line cap=round] (mat-1-4.center) -- (mat-3-4.center) node[below,overlay,blue] {$\bxi_{4:2}$};
      \draw[thick,green!10!white,line width=25pt, line cap=round] (mat-1-5.center) -- (mat-3-5.center) node[below,overlay,green!80!black] {$\bxi_{5:3}$};
      \draw[thick,red!10!white,line width=25pt, line cap=round] (mat-1-6.center) -- (mat-3-6.center) node[below,overlay,red] {$\bxi_{6:4}$};
      \draw[thick,blue!10!white,line width=25pt, line cap=round] (mat-1-8.center) -- (mat-3-8.center) node[below,overlay,blue] {$\bxi_{T-2:T-4}$};
      \draw[thick,green!10!white,line width=25pt, line cap=round] (mat-1-9.center) -- (mat-3-9.center) node[below,overlay,green!80!black] {$\bxi_{T-1:T-3}$};
      \draw[thick,red!10!white,line width=25pt, line cap=round] (mat-1-10.center) -- (mat-3-10.center) node[below,overlay,red] {$\bxi_{T:T-2}$};
    \end{scope}
  \end{tikzpicture}
  \vspace{.1in}
  \caption{Intuition behind empirical covariance decomposition using $k$ blocking when $k = 3$.}
  \label{fig:tail}
\end{figure}

We address this by dividing the augmented noise vectors $\bxi_\tau^{(k,i)}$ into $k$ groups 
such that $\bxi_\tau^{(k,i)}$ in each group are independent.
\Cref{fig:tail} shows the intuition, where $\bxi_\tau^{(k,i)}$ of the same color
are in one group.
The first $k$ augmented noise vectors are treated as a special case 
because some components are zero-padded vectors, 
so they do not satisfy the identical distribution condition.
Writing algebraically, we can express \cref{eqn:tail} in the form of grouped summations as follows:

\newcommand{\termonemono}{\sum_{\tau=1}^{T^{(k,i)}} \bxi^{(k,i)}_\tau ( \bxi^{(k,i)}_\tau)^\top-\mathbb{E}\left[\sum_{\tau=1}^{T^{(k,i)}} \bxi^{(k,i)}_\tau ( \bxi^{(k,i)}_\tau)^\top\right]}
\newcommand{\termtwomono}{\bxi^{(k,i)}_0 ( \bxi^{(k,i)}_0)^\top-\mathbb{E}\left[ \bxi^{(k,i)}_0 ( \bxi^{(k,i)}_0)^\top\right]}
\newcommand{\termonemonounder}{\sum_{\tau=1}^{T^{(k,i)}} \underline{\bxi}^{(k,i)}_\tau ( \underline{\bxi}^{(k,i)}_\tau)^\top-\mathbb{E}\left[\sum_{\tau=1}^{T^{(k,i)}} \underline{\bxi}^{(k,i)}_\tau ( \underline{\bxi}^{(k,i)}_\tau)^\top\right]}
\newcommand{\termtwomonounder}{\underline{\bxi}^{(k,i)}_0 ( \underline{\bxi}^{(k,i)}_0)^\top-\mathbb{E}\left[ \underline{\bxi}^{(k,i)}_0 ( \underline{\bxi}^{(k,i)}_0)^\top\right]}
\begin{align}\label{eqn:emp-cov-decomp}
  &\tail 
  \\ &=\underbrace{\sum_{i=0}^{k-1} \left(\underbrace{\termonemono}_{\Delta_{1,k}}\right)}_{\Delta_1} +\underbrace{\sum_{i=0}^{k-1} \left(\underbrace{\termtwomono}_{\Delta_{2,k}}\right)}_{\Delta_2}, \nonumber
\end{align}
where $\bxi^{(k,i)}_\tau= \bxi_{\tau k + i + 1 : \tau k + i + 2 - k}$ and  $T^{(k,i)}= \left\lfloor\frac{T-i-1}{k}\right\rfloor $.

We note that in \cref{eqn:emp-cov-decomp},
$\Delta_1$ represents the majority of the $T$ samples
and contains $k$ groups of summation, where $\Delta_{1,k}$ is composed of
the summation of independent \glspl*{rv}.
$\Delta_2$ contains the ``left-over'' terms with summation of $k$ terms, 
and each $\Delta_{2,k}$ is a single random matrix.
This decomposition allows us to apply the standard concentration inequalities 
for the empirical covariance of the \gls*{iid} augmented noise vectors $\bxi^{(k,i)}_\tau$ to bound $\Delta_1$
and that for a single vector $\bxi^{(k,i)}_0$ to bound $\Delta_2$.
Finally, we can combine these empirical covariance tail bounds 
to establish the bound in \cref{eqn:tail}.
This is the primary approach used for deriving $\epsilon$ 
of the noise distributions considered in the following result.

\paragraph{Main Result}
We establish the main result of this work,
sample complexity bound for distributions with bounded $p$th moment for $p>2$,
which covers common heavy-tail distributions such as Student's t, Pareto, and log-normal distributions.
Our main result \Cref{thm:pthmom} shows that as the number of bounded moments becomes limited,
the sample complexity upper bound degrades by $T^{1/p}$.

\newcommand{\pthmomepsilonone}{
  \left(\frac{2^{1+p}\kappa k^{p+1}}{\delta}\right)^{1/p}\log^{1/2}\left(\frac{8rk^2}{\delta}\right)r^{1/2}T^{1/2+1/p}
}
\newcommand{\pthmomepsilontwo}{
  \left(\frac{2^{1+p}\kappa  k^{p+1}}{\delta}\right)^{2/p}\log\left(\frac{8rk^2}{\delta}\right) rT^{2/p}
}
\newcommand{\pthmomepsilon}{
  \pthmomepsilonone + \pthmomepsilontwo
}
\newcommand{\pthmomboundone}{
  \left(8k\left(\frac{2\kappa r^{p/2} k^{p/2+1}}{\delta}\right)^{2/p}\log\left(\frac{8rk^2}{\delta}\right)\right)^{p/(p-2)}
}
\newcommand{\pthmomboundtwo}{
  \left(4\left(\frac{2\kappa r^{p/2} k^{p/2+1}}{\delta}\right)^{2/p}\log\left(\frac{8rk^2}{\delta}\right)\right)^{-p/2} 
}
\newcommand{\pthmomthmboundone}{
  \left(\frac{16r^{1/2}}{\beta^2}\left(\frac{2^{1+p}\kappa k^{p+1}}{\delta}\right)^{1/p}\log^{1/2}\left(\frac{8rk^2}{\delta}\right)\left(\frac{\sigma^2 L^2}{1-\alpha^2} + \frac{\beta^2}{8}\right)\right)^{1/(1/2-1/p)}
}
\newcommand{\pthmomthmboundtwo}{
  \left(\frac{16r}{\beta^2}\left(\frac{2^{1+p}\kappa  k^{p+1}}{\delta}\right)^{2/p}\log\left(\frac{8rk^2}{\delta}\right) \left(\frac{\sigma^2 L^2}{1-\alpha^2} + \frac{\beta^2}{8}\right)\right)^{1/(1-2/p)}
}
\newcommand{\pthmomfinalbound}{
  \frac{ 8 \sigma^2 L}{ (1-\alpha) \beta^2}\left(
    1+\left(\frac{2^{1+p}\kappa k^{p+1}}{\delta}\right)^{1/p}\log^{1/2}\left(\frac{8rk^2}{\delta}\right)
  \right)^2
  r^{1/2}T^{-1/2+1/p}
  + \frac{8\sigma^2 L}{ (1-\alpha) \beta^2 } T^{-1/2}
}

\begin{thm}[Sample Complexity for Bounded $p$th Moment]\label{thm:pthmom}
  Suppose that \Cref{assum:main} holds, and for all $t=0,1,\cdots,T$,
  \begin{align}\label{eqn:pthmom-assum-moment}
    \mathbb{E}[\|\xi_t\|^p] \leq \kappa r^{p/2}
  \end{align}
  with fixed $p$ such that $p>2$ and $\kappa \in\mathbb{R}_{>0}$.
  There exist $p$-dependent polynomials $\polyPburn(\cdot)$ and $\polyPbound(\cdot)$ such that for given $\delta\in(0,1)$, 
  if
  \begin{align} 
    \begin{aligned}[t]
      T \geq \polyPburn \left(\mathcal{S} \cup \left\{ r, 1/\delta, \kappa \right\}\right)
    \end{aligned}
    \label{eqn:pthmom-thm-assum-poly}
  \end{align}
  then the following holds with probability at least $1-\delta$:
  \begin{align}
    &\left\|\widehat{\Theta} - \Theta\right\| \leq 
    \polyPbound\left(\mathcal{S} \cup \left\{ \log(T), \log(r), 1/\delta, \kappa\right\}\right)r^{1/2}T^{-1/2+1/p},
    \label{eqn:pthmom-thm}
  \end{align}
  where
  $r,T,\Theta$ are defined in \cref{eqn:sys},
  $\widehat{\Theta}$ is defined in \cref{eqn:est},
  $L, \alpha, \beta, \sigma, \overline{t}_c$ are defined in \Cref{assum:main},
  and $\mathcal{S}$ is defined in \cref{eqn:cal-S}.

\end{thm}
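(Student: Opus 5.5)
The plan is to reduce \Cref{thm:pthmom} to \Cref{lem:main}. Everything then comes down to verifying \Cref{assum:tail} with an explicit $\epsilon$ for bounded $p$th moment noise. Fix $k$ as in \cref{eqn:main-lem-assum-k}; it is $\bigO(\overline{t}_c+\log(\sigma L/(\beta(1-\alpha)))/\log(1/\alpha)+\log T/\log(1/\alpha))$, hence polynomial in $\mathcal{S}\cup\{\log T\}$. I will then use the blocking decomposition \cref{eqn:emp-cov-decomp} and bound $\Delta_1$ and $\Delta_2$ separately, each with failure probability $\delta/2$. Within each group, split further by a union bound over the $k$ groups with failure probability $\delta/(2k)$ each.

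\textbf{Bounding $\Delta_2$.} Each $\bxi^{(k,i)}_0$ is a single augmented vector, and $\|\bxi^{(k,i)}_0\|^2\le\sum_{j}\|\xi_j\|^2$ over $k$ terms. Markov's inequality on the $p/2$-th moment gives
\[
\mathbb{P}\bigl[\|\bxi^{(k,i)}_0\|^2\ge s\bigr]\le k^{p/2}\kappa r^{p/2}/s^{p/2}.
\]
So with probability $1-\delta/(2k)$ the term is at most $(2\kappa k^{p/2+1}/\delta)^{2/p}r$, plus the expectation term, which is at most $k$. This contributes an $\bigO(r\,\delta^{-2/p})$ term independent of $T$.

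\textbf{Bounding $\Delta_1$ (the main obstacle).} Each $\Delta_{1,k}$ is a sum of $T^{(k,i)}\le T/k$ i.i.d.\ rank-one matrices $X_\tau=\bxi_\tau\bxi_\tau^\top$ in dimension $rk$. Their $p/2$-th moments are bounded by $\mathbb{E}\|\bxi_\tau\|^p\le k^{p/2}\kappa r^{p/2}$, but they have no exponential moments. I will truncate at level $R$. The event $\{\max_\tau\|\bxi_\tau\|^2>R\}$ has probability at most $(T/k)\,k^{p/2}\kappa r^{p/2}R^{-p/2}$, so setting this to $\delta/(4k)$ gives $R\asymp r\,k\,(\kappa k^{p/2+1}T/\delta)^{2/p}$, i.e.\ $R\sim rT^{2/p}$.

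On the complement, apply matrix Bernstein to the truncated and recentred matrices. The individual norm bound is $R$. The variance proxy satisfies $\|\mathbb{E}\|\bxi\|^2\bxi\bxi^\top\|\le R\|\mathbb{E}\bxi\bxi^\top\|=R$, giving a total variance of $TR/k$. The truncation bias $\|\mathbb{E}[X\mathds{1}(\|\bxi\|^2>R)]\|\le \mathbb{E}\|\bxi\|^p/R^{p/2-1}$ is negligible after multiplying by $T$, given the choice of $R$.

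Bernstein then yields
\[
\|\Delta_{1,k}\|\lesssim\sqrt{TR\log(rk^2/\delta)}+R\log(rk^2/\delta).
\]
This matches the two terms of $\epsilon$,
\[
\epsilon\lesssim \pthmomepsilonone+\pthmomepsilontwo,
\]
up to the stated constants, after summing over the $k$ groups. I expect that tracking the constants and the dimension factor $rk$ in the logarithm is where care is needed.

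\textbf{Plugging into \Cref{lem:main}.} The condition \cref{eqn:main-lem-assum-eps} reads $\epsilon\le c\,T$ with $c=\frac{\beta^2}{8}\bigl(\frac{\sigma^2L^2}{1-\alpha^2}+\frac{\beta^2}{8}\bigr)^{-1}$. It holds if each of the two terms is at most $cT/2$, i.e.\ if $T^{1/2-1/p}$ and $T^{1-2/p}$ exceed the respective prefactors. Since $p$ is fixed, this is a polynomial burn-in requirement. The only subtlety is that the prefactors contain $k$, and $k$ grows like $\log T$. This is absorbed because $\log T\le C_\eta T^\eta$ for small $\eta>0$, so $T\ge\polyPburn(\cdot)$ with $T\ge2\overline{t}_c$ suffices.

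Finally, \cref{eqn:lem-main-res} gives an error of $\frac{8\sigma^2L}{(1-\alpha)\beta^2}(\epsilon/T+T^{-1/2})$. Here $\epsilon/T\lesssim r^{1/2}T^{-1/2+1/p}(\cdot)+rT^{-1+2/p}(\cdot)$. The second term is bounded by $r^{1/2}T^{-1/2+1/p}$ times the square of the first prefactor, because $rT^{-1+2/p}=(r^{1/2}T^{-1/2+1/p})^2\le r^{1/2}T^{-1/2+1/p}$ under the burn-in. Collecting all prefactors, which are polynomial in $\mathcal{S}\cup\{\log T,\log r,1/\delta,\kappa\}$, into $\polyPbound$ gives \cref{eqn:pthmom-thm}.
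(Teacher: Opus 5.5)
Your proposal is correct and follows essentially the same route as the paper: reduce to \Cref{lem:main}, split via the $k$-blocking decomposition \cref{eqn:emp-cov-decomp}, and truncate the augmented vectors at a level $b\sim rT^{2/p}$ chosen by a Markov and union bound. You then apply matrix Bernstein (which the paper invokes as Wainwright's Corollary 6.20) to the truncated sums, and absorb the $\log T$ growth of $k$ into the polynomial burn-in; for your last step that burn-in must also contain $T\ge r^{p/(p-2)}$, as the paper's does. You are more careful than the paper about the truncation bias $\sum_t\|\mathbb{E}[\bxi_{t:t-k+1}\bxi_{t:t-k+1}^\top\mathds{1}\{\|\bxi_{t:t-k+1}\|^2>b\}]\|$, which the paper's triangle-inequality step drops when it uses the event that no vector is truncated; your per-term bound $\mathbb{E}\|\bxi\|^p/b^{p/2-1}$ shows it totals $\bigO(b)$ and is absorbed into the $kb\log(8rk^2/\delta)$ term.
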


The proof is detailed in \Cref{sec:thm-pthmom}. 
We first find an explicit form of $\epsilon$ using \Cref{lem:pthmom}
by deriving the empirical covariance tail bound of the augmented noise vectors 
in \cref{eqn:emp-cov-decomp} using the Markov inequality
and bounded $p$th moment property in \Cref{prop:pthmom-indcov}. 
For sufficiently large $T$,
we can apply the general concentration inequality 
\cite[Corollary 6.20]{wainwrightHighDimensionalStatisticsNonAsymptotic2019} 
for bounded distributions
using a truncation technique on the augmented noise vectors.
Lastly, we apply \Cref{lem:main} to find the final sample complexity bound. 

Our result shows that 
for fixed $p$, 
the estimation error is $\widetilde{\bigO}(r^{1/2}T^{-1/2+1/p})$ with probability at least $1-\delta$ 
for sufficiently large $T$ satisfying the conditions in \cref{eqn:pthmom-thm-assum-poly}.
Our sample complexity bound 
retains the dimension scale $r^{1/2}$ similar to the sub-Gaussian distributions in \Cref{thm:subgauss}
and sub-exponential distributions in \Cref{thm:subexp}.
Additionally, the bound is expressed in terms of the noise dimension $r$,
not the system dimension $d$, whereas previous works
have established the bound with respect to $d$. 
Thus, our result can be more advantageous for certain rank-deficient systems where
$r < d$. See \Cref{sec:arx} on the ARX model for more discussion.

Different from the sample complexity bounds recovered for light-tailed distributions, 
the heavy-tailed result deteriorates by a factor of $T^{1/p}$,
and the error depends on $1/\delta$ rather than $\log(1/\delta)$.
More precisely, the error depends on $(1/\delta)^{1/p}\log(1/\delta)$,
which converges to $\log(1/\delta)$ when all moments are finite.
The estimation error also increases polynomially with $\kappa$, 
the upper bound of the $p$th moment.
When $p \rightarrow \infty$, 
the error bound matches with that of light-tailed distributions,
which is expected as they have infinitely many bounded moments. 
As $p \rightarrow 2$, 
the estimation deteriorates by $T^{1/p}$,
indicating slower convergence when the number of bounded moments is limited
and the tail of the noise distribution becomes heavier.

\begin{remark}
Our sample complexity bounds only address exponentially stable systems
and do not apply to marginally stable systems,
which can be observed from the prefactors of $\polyPbound(\cdot)$:
$1/(1-\alpha)$ that diverge as $\alpha \rightarrow 1$.
Our approach relies on \Cref{lem:main}, 
which requires system stability 
to decouple the temporal noise correlation.
Previous works from \cite{Simchowitz2018-if,ziemannTutorialNonAsymptoticTheory2024}
have shown sample complexity bounds for marginally stable systems 
with sub-Gaussian noise.
Specifically, \cite{Simchowitz2018-if} employs the small-ball method 
and \cite{ziemannTutorialNonAsymptoticTheory2024} 
uses a less stringent lower empirical covariance tail bound 
specific to sub-Gaussian distributions. 
\cite{ziemannTutorialNonAsymptoticTheory2024}'s result 
depends on $1/\lambda_{\min}(\mathbb{E}[z_tz_t^\top])$, 
where $\lambda_{\min}(\mathbb{E}[z_tz_t^\top])$ may grow linearly in $T$
and makes the error bound $\bigO(T^0)$
in the worst case. But it provides a more granular sample complexity bound 
when the above-mentioned minimum eigenvalue grows slower than linearly in $T$. 
\end{remark}
\begin{remark}
Numerical simulation results are provided in \Cref{sec:numerical} 
to demonstrate the sample complexity bounds for heavy-tailed noise distributions with bounded $p$th moment
using the \gls*{arx} model.
The simulations do not show that heavy-tailed distributions,
such as the Student's t and Pareto distributions,
deviate from $T^{-1/2}$ when $p$ is close to $2$,
which suggests that the error bound 
is tighter than what our results state.
Thus, investigating the optimality 
of the $T^{1/p}$ dependence and a matching lower bound
in the estimation error bound
is an important future direction.
\end{remark}

\section{Special Case: ARX Model}\label{sec:arx}
We now consider the \gls*{arx} model 
as a special case to our setting in \Cref{sec:setting}
and describe the sufficient conditions such that \Cref{assum:main} is satisfied
for the ARX model.
Consider the following \gls*{arx} model with randomly sampled inputs:
\begin{align}\label{eqn:arx}
  y_t = \sum_{i=1}^{q} \left(A_i y_{t-i} + B_i u_{t-i}\right) + Fw_t,\quad u_t = Gv_t,
\end{align}
where $u_t\in\mathbb{R}^{d_u}$ is the input, 
$y_t\in\mathbb{R}^{d_y}$ is the output,
$v_t$ is an \gls*{rv} taking values in $\mathbb{R}^{d_u}$,
$w_t$ is the random noise taking values in $\mathbb{R}^{d_y}$, 
$A_i\in\mathbb{R}^{d_y\times d_y}$, $B_i\in\mathbb{R}^{d_y\times d_u}$, 
$F\in\mathbb{R}^{d_y\times d_y}$, and $G\in\mathbb{R}^{d_u\times d_u}$
are unknown system parameters.
The random sequence $\{u_t\}_{t=-q+1}^T$ is injected 
as inputs shown in \cref{eqn:arx} for persistent excitation.
We observe that \cref{eqn:arx} can be rewritten in the form of 
\cref{eqn:sys},
recovering the structure of the \gls*{var} model.
This is explicitly shown in \cref{eqn:arx-augmented} of \Cref{sec:arx-appendix}.

The system parameters 
$\{A_i\}_{i=1}^q$ and $\{B_i\}_{i=1}^q$ are estimated
from the input-output data $\{(u_t,y_t)\}_{t=-q+1}^T$,
where $(u_t,y_t)$ are zero vectors for $t < 0$. 
The least-squares estimator can be defined as follows:
\begin{align}\label{eqn:arx-estimator}
  \{\widehat A_i,\widehat B_i\}_{i=1}^q\in
  \arg\min_{\{A_i,B_i\}_{i=1}^q}\sum_{t=1}^{T} \left\|y_t - \sum_{i=1}^{q} \left(A_i y_{t-i} + B_i u_{t-i}\right)\right\|^2.
\end{align}
Due to row-wise independence of the least-squares estimation, 
$\{\widehat{A}_i,\widehat{B}_i\}_{i=1}^q$ can be recovered from $\widehat{\Theta}$, 
which is estimated using \gls*{ols}.
Therefore, it suffices to analyze the performance of the estimator in \cref{eqn:est} 
applied to the ARX model,
where the sample complexity results established in \Cref{sec:unifying,sec:examples} 
can be directly used for \gls*{arx} model identification \cref{eqn:arx-estimator}. 
We now introduce such sufficient conditions.
\begin{assum} \label{assum:arx}
  Consider the \gls*{arx} model in \cref{eqn:arx}, written explicitly in the augmented form in \cref{eqn:arx-augmented}.
  \begin{enumerate}[label=(\alph*),leftmargin=*,ref=\theassum(\alph*)]
    \item \label[assum]{assum:arx-iid}
      We assume $\xi_t=(w_t,v_t)$ satisfies the conditions in \Cref{assum:iid}.
    \item \label[assum]{assum:arx-main}
      For given $c_A,c_B,c_C,c_D \in \mathbb{R}_{>0}$ and $\gamma\in(0,1)$, we assume the following:
      \begin{align*}
        \|\bA^t\| &\leq c_A \gamma^t,\quad \forall t=0,1,\cdots,\quad \|\bB \|  \leq c_B,\quad \|\bC \|  \leq c_C,\quad \|\bD \|  \leq c_D,
      \end{align*}
      where 
      \begin{align}
        \begin{aligned}[t]
          \bA &=
                \begin{bmatrix}
                  A_1 & \cdots & A_{q-1} & A_q \\
                  I   & \cdots & 0       & 0   \\
                  \vdots   & \ddots  & \vdots & \vdots \\
                  0   & \cdots & I       & 0  \\
                \end{bmatrix}\\ 
          \bC &=
                \begin{bmatrix}
                    I & -A_1 & \cdots & -A_{q-1} \\
                    0 &  \ddots    & \ddots & \vdots \\
                    \vdots &  \ddots    &  \ddots      & -A_1 \\
                    0  & \cdots     &   0     & I
                \end{bmatrix}\quad
        \end{aligned}
        \begin{aligned}[t]
          \bB &=
                \begin{bmatrix}
                  B_1 & \cdots & B_{q-1} & B_q \\
                  0   & \cdots & \cdots       & 0   \\
                  \vdots   & \ddots  & \ddots & \vdots \\
                  0   & \cdots & \cdots       & 0  \\
                \end{bmatrix}\\
          \bD &=
                \begin{bmatrix}
                    0 & -B_1 & \cdots & -B_{q-1} \\
                    0 &  \ddots    & \ddots & \vdots \\
                    \vdots &  \ddots    &  \ddots      & -B_1 \\
                    0  & \cdots     &   0     & 0
                \end{bmatrix}.
        \end{aligned} \label{eqn:arx-ABCD-def}
      \end{align}
    \item \label[assum]{assum:arx-noise}
      $F$ and $G$ are square matrices, and the following holds for given $\overline{\sigma}_F,\underline{\sigma}_F,\overline{\sigma}_G,\underline{\sigma}_G>0$:
      \begin{align*}
        \underline{\sigma}^2_F I \preceq FF^\top &\preceq \overline{\sigma}^2_F I,\quad \underline{\sigma}^2_G I \preceq GG^\top \preceq \overline{\sigma}^2_G I.
      \end{align*}
  \end{enumerate}
\end{assum}

\Cref{assum:arx-iid} restates \Cref{assum:iid} for the noise term $\xi_t$.
\Cref{assum:arx-main} guarantees the stability of the submatrix $\bA$ and the boundedness of $\bB$, 
which is sufficient to ensure the stability of the augmented system. 
The bounds on $\bC$ and $\bD$ are introduced 
to show the persistent excitation property of the covariates.
\Cref{assum:arx-noise} assumes the noise injection matrices $F$ and $G$ are bounded and 
thus invertible. 
The upper bound condition is related to the bounded effect of noise condition in \Cref{assum:bounded-cov}, 
while the lower bound condition is related to the persistent excitation condition in \Cref{assum:ctrb-sigma}. 
  
We are now ready to state the specialization of our main results on the \gls*{arx} model.
\begin{thm}\label{thm:arx}
  Under \Cref{assum:arx}, the conditions in \Cref{assum:main} are satisfied with
  \begin{align}\label{eqn:special-arx-params}
    \begin{aligned}[t]
      d &= q(d_y + d_u),\quad r =d_y + d_u, \quad L =  \left(\max(c_A, \gamma^{-q}) + c_A c_B \gamma^{-1}\frac{\gamma^{-q}-1}{\gamma^{-1}-1} \right),\;\;\alpha = \gamma,\\
      \beta &= \frac{\min(\underline{\sigma}_F,\underline{\sigma}_G)}{\max(1, c_C) + c_D},\;\; \sigma = \max(\overline\sigma_F,\overline\sigma_G), \quad \overline{t}_c = q,
    \end{aligned}
      \end{align}
  where $q, d_y, d_u$ are defined in \cref{eqn:arx}, $c_A, c_B, c_C, c_D, \gamma, \underline{\sigma}_F, \overline\sigma_F, \underline{\sigma}_G, \overline\sigma_G$
  are defined in \Cref{assum:arx}.
  Thus, \Cref{thm:subgauss,thm:subexp,thm:pthmom} hold with \Cref{assum:main} replaced by \Cref{assum:arx} with $d, r, L,\alpha,\beta,\sigma,\overline{t}_c $ in \cref{eqn:special-arx-params}. 
\end{thm}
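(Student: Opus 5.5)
The approach is to write the ARX model \cref{eqn:arx} in the augmented VAR form \cref{eqn:arx-augmented}. The natural choice of state is
\[
z_t = (y_t, y_{t-1}, \dots, y_{t-q+1}, u_t, u_{t-1}, \dots, u_{t-q+1}) \in \mathbb{R}^{q(d_y+d_u)},
\]
with noise $\xi_t = (w_t, v_t)\in\mathbb{R}^{d_y+d_u}$. The noise injection is $\Psi$, which places $F w_t$ in the first $y$-block and $G v_t$ in the first $u$-block. The transition matrix is block upper triangular,
\[
\Theta = \begin{bmatrix} \bA & \bB \\ 0 & \bS \end{bmatrix},
\]
where $\bS$ is the nilpotent block shift on the input history, so $\bS^q = 0$. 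This gives $d$ and $r$ immediately. \Cref{assum:iid} is inherited from \Cref{assum:arx-iid}. \Cref{assum:bounded-cov} holds with $\sigma=\max(\overline\sigma_F,\overline\sigma_G)$, because $\Psi\Psi^\top=\mathrm{blkdiag}(FF^\top,0,\dots,GG^\top,0,\dots)$. I will then verify stability and persistent excitation separately.

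For stability, I write
\[
\Theta^t = \begin{bmatrix} \bA^t & \sum_{j=0}^{t-1}\bA^{t-1-j}\bB\bS^j \\ 0 & \bS^t\end{bmatrix}.
\]
Since $\bS^j=0$ for $j\ge q$ and $\|\bS^j\|\le 1$, the off-diagonal block has norm at most $c_A c_B\sum_{j=0}^{\min(t,q)-1}\gamma^{t-1-j}$. This equals $c_Ac_B\gamma^{t}\gamma^{-1}\sum_{j<q}\gamma^{-j} = c_A c_B\gamma^t\gamma^{-1}\frac{\gamma^{-q}-1}{\gamma^{-1}-1}$. For the diagonal blocks, $\|\bA^t\|\le c_A\gamma^t$, and $\|\bS^t\|\le \mathds{1}(t<q)\le \gamma^{-q}\gamma^t$. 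Bounding the norm of the block matrix by the max of the diagonal norms plus the off-diagonal norm gives exactly $L$ with $\alpha=\gamma$.

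For persistent excitation with $\overline t_c=q$, the controllability matrix $\mathcal{C}=[\Psi\;\Theta\Psi\;\cdots\;\Theta^{q-1}\Psi]$ is a linear image of the $q$-step noise. It maps the stacked $(w,v)$ to the state $z$ reached from zero initial condition. I plan to invert this map explicitly rather than bound singular values of $\mathcal C$ directly. Given a target $z=(y_0,\dots,y_{-q+1},u_0,\dots,u_{-q+1})$, the ARX recursion shows that the required inputs are $v$-values $G^{-1}u_{-j}$. The required noises are $F^{-1}$ applied to the residuals $y_{-j}-\sum_i A_i y_{-j-i} - \sum_i B_i u_{-j-i}$, with terms outside the window set to zero. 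Stacked, these residuals are exactly $\bC\,\mathbf y + \bD\,\mathbf u$, up to the ordering conventions in \cref{eqn:arx-ABCD-def}; that is why $\bC$ has $I$ on the diagonal and $-A_i$ on the superdiagonals. Hence $\mathcal C$ has a right inverse $\mathcal C^\dagger$, and $\|\mathcal C^\dagger\| \le \frac{1}{\min(\underline\sigma_F,\underline\sigma_G)}\bigl\|\begin{bmatrix}\bC & \bD\\ 0 & I\end{bmatrix}\bigr\|$. This block is at most $\max(1,c_C)+c_D$, by bounding the block-diagonal part by $\max(\|\bC\|,1)$ and the off-diagonal part by $\|\bD\|$. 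Then $\mathcal C\mathcal C^\top\succeq \|\mathcal C^\dagger\|^{-2} I=\beta^2 I$, which is the stated $\beta$.

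The main obstacle is the bookkeeping in this last step. I must check that the truncation at $q$ steps with zero-padded past yields exactly the upper triangular structure of $\bC,\bD$, with $\bD$ having zero first column and zero last row. I must also check that the ordering of $\mathcal C$'s columns (the $\Theta^j\Psi$ blocks) matches the reversed-time stacking. Once all four conditions of \Cref{assum:main} hold with these constants, \Cref{thm:subgauss,thm:subexp,thm:pthmom} apply verbatim. The reason is that their only hypotheses are \Cref{assum:main} and the distributional conditions on $\xi_t$, which are unchanged.
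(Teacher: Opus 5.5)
Your proposal is correct and follows the paper's proof essentially step for step: the same block-triangular split of $\Theta$ with a nilpotent input-shift block, bounded by $\max(\|\bA^t\|,\|\bE^t\|)$ plus the off-diagonal sum for stability, and the same explicit inversion of the $q$-step controllability map via $\mathrm{blkdiag}(F^{-1},\dots,G^{-1},\dots)\begin{bmatrix}\bC & \bD\\ 0 & I\end{bmatrix}$ for persistent excitation. The only differences are cosmetic: your right-inverse argument does not need the controllability matrix to be square, whereas the paper uses squareness and treats $M$ as its inverse up to a permutation; and your off-diagonal sum ``equals'' the stated expression only for $t\ge q$, while for $t<q$ it is merely bounded by it.
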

The proof is detailed in \Cref{sec:thm-arx}.
Our result shows that the error bound is 
$\widetilde{\bigO}((d_y+d_u)^{1/2}T^{-1/2+1/p})$ for bounded $p$th moment noise with $p>2$, 
and $\widetilde{\bigO}((d_y+d_u)^{1/2}T^{-1/2})$ 
for sub-Gaussian and sub-exponential noise,
with high probability and sufficiently large $T$.

\begin{remark}
  Compared to the existing error bounds for \gls*{arx} model identification in the literature, 
  our result in \Cref{thm:arx} has the following advantage: 
  \Cref{thm:arx} does not explicitly depend on the order of the \gls*{arx} model $q$. 
  Thus, when the model order is large,
   our result enjoys a tighter sample complexity bound than the existing results that depend on $q$ explicitly 
   \citep{ziemannTutorialNonAsymptoticTheory2024}. 
   This improvement was made by decoupling the system dimension $d$ and the noise dimension $r$
   in \cref{eqn:sys}, 
   and by expressing the sample complexity in terms of $r$ instead of $d$.
   For the \gls*{arx} model in \cref{eqn:arx}, the noise dimension $r=d_y+d_u$ 
   is substantially smaller than the system dimension $d=q(d_y+d_u)$ 
   when the model order $q$ is large.
   Note that system-dependent coefficients
   can be dependent of $q$. Specifically, the $q$ dependence appears
   in the stability coefficient $L$ due to explicit bounding of $A$,
   which can be removed if expressed in terms of the system matrices.
   When $T$ becomes large, specifically $T\geq \alpha^{-\overline{t}_c}$, 
   the dependence on $\overline{t}_c$ becomes negligible,
   as expressed in \cref{eqn:main-lem-assum-k}.
\end{remark}

\section{Conclusion}\label{sec:conclusion}
This work analyzed non-asymptotic sample complexity 
of linear system identification concerning exponentially stable systems
subjected to heavy-tailed noise distributions 
with bounded $p$th moment for $p>2$. 
We provide a unifying lemma applicable to general noise distributions,
and we further demonstrate our approach to sub-Gaussian and sub-exponential distributions,
establishing competitive sample complexity bounds for each noise distribution class.
Our results reveal that sample complexity 
depends on the controllability properties of the system and the tail behavior of the noise distribution.
Specifically, we found that for systems affected by heavy-tailed distributions, 
the sample complexity deteriorates with decreasing number of bounded moments $p$.
Lastly, we specialized our results to the \gls*{arx} model, 
where the model dimension of the parameter error does not explicitly depend on the autoregressive order,
thus demonstrating that our framework can improve upon existing results in literature.

There remain open questions from our work. 
Our technique relies on a mixing time argument
due to exponential stability assumption,
which can be avoided for \gls*{ols} of 
light-tailed noise \citep{Simchowitz2018-if,ziemannTutorialNonAsymptoticTheory2024}.
Can this also be avoided for heavy-tailed noise?
Additionally, the $T^{1/p}$ dependence of heavy-tailed distribution 
suggested by our theoretical results 
is not corroborated by the numerical results.
This suggests that our approach,
applying a truncation technique on a Markov inequality, may be loose, 
and it is possible that a more refined analysis can yield a tighter bound.
Future work includes deriving the lower sample complexity bound,
extending our analysis to marginally stable systems, and 
addressing heavy-tailed noise distributions with bounded $p$th moment for $p \leq 2$.

\begin{ack}
This material is based upon work supported by the U.S. Department of
Energy, Office of Science, Office of Advanced Scientific Computing Research, Department of
Energy Computational Science Graduate Fellowship under Award Number DE-SC0025528.

Disclaimer: This report was prepared as an account of work sponsored by an agency of the
United States Government. Neither the United States Government nor any agency thereof, nor
any of their employees, makes any warranty, express or implied, or assumes any legal liability
or responsibility for the accuracy, completeness, or usefulness of any information, apparatus,
product, or process disclosed, or represents that its use would not infringe privately owned
rights. Reference herein to any specific commercial product, process, or service by trade name,
trademark, manufacturer, or otherwise does not necessarily constitute or imply its
endorsement, recommendation, or favoring by the United States Government or any agency
thereof. The views and opinions of authors expressed herein do not necessarily state or reflect
those of the United States Government or any agency thereof.
\end{ack}

\bibliography{main}

@misc{ziemannTutorialNonAsymptoticTheory2024,
  title = {A {{Tutorial}} on the {{Non-Asymptotic Theory}} of {{System Identification}}},
  author = {Ziemann, Ingvar and Tsiamis, Anastasios and Lee, Bruce and Jedra, Yassir and Matni, Nikolai and Pappas, George J.},
  year = {2024},
  month = jun,
  number = {arXiv:2309.03873},
  eprint = {2309.03873},
  primaryclass = {eess},
  publisher = {arXiv},
  doi = {10.48550/arXiv.2309.03873},
  urldate = {2024-12-31},
  archiveprefix = {arXiv}
}

@book{wainwrightHighDimensionalStatisticsNonAsymptotic2019,
  title = {High-{{Dimensional Statistics}}: {{A Non-Asymptotic Viewpoint}}},
  shorttitle = {High-{{Dimensional Statistics}}},
  author = {Wainwright, Martin J.},
  year = {2019},
  series = {Cambridge {{Series}} in {{Statistical}} and {{Probabilistic Mathematics}}},
  publisher = {Cambridge University Press},
  address = {Cambridge},
  doi = {10.1017/9781108627771},
  urldate = {2025-01-12},
  isbn = {978-1-108-49802-9}
}

@online{oymakNonasymptoticIdentificationLTI2019,
  title = {Non-Asymptotic {{Identification}} of {{LTI Systems}} from a {{Single Trajectory}}},
  author = {Oymak, Samet and Ozay, Necmiye},
  year = {2019},
  date = {2019-02-03},
  eprint = {1806.05722},
  eprinttype = {arXiv},
  eprintclass = {cs},
  doi = {10.48550/arXiv.1806.05722},
  url = {http://arxiv.org/abs/1806.05722},
  urldate = {2025-04-06},
  pubstate = {prepublished},
}

@ARTICLE{Van_Overschee1994-th,
  title   = "{N4SID}: subspace algorithms for the identification of combined
             deterministic- stochastic systems",
  author  = "Van Overschee, Peter and de Moor, Bart",
  journal = "No. I. pp.",
  volume  =  75,
  pages   =  93,
  year    =  1994
}

@ARTICLE{Qin2006-iz,
  title     = "An overview of subspace identification",
  author    = "Qin, S Joe",
  journal   = "Comput. Chem. Eng.",
  publisher = "Elsevier BV",
  volume    =  30,
  number    = "10-12",
  pages     = "1502--1513",
  month     =  "12~" # sep,
  year      =  2006,
  doi       = "10.1016/j.compchemeng.2006.05.045",
  language  = "en"
}

@ARTICLE{Tsiamis2022-pb,
  title         = "Statistical learning theory for control: A finite sample
                   perspective",
  author        = "Tsiamis, Anastasios and Ziemann, Ingvar and Matni, Nikolai
                   and Pappas, George J",
  journal       = "arXiv [eess.SY]",
  month         =  "12~" # sep,
  year          =  2022,
  archivePrefix = "arXiv",
  primaryClass  = "eess.SY"
}

@INPROCEEDINGS{Bakshi2023-az,
  title     = "A new approach to learning linear dynamical systems",
  author    = "Bakshi, Ainesh and Liu, Allen and Moitra, Ankur and Yau, Morris",
  booktitle = "Proceedings of the 55th Annual ACM Symposium on Theory of
               Computing",
  publisher = "ACM",
  address   = "New York, NY, USA",
  pages     = "335--348",
  month     =  "2~" # jun,
  year      =  2023,
  doi       = "10.1145/3564246.3585247",
  language  = "en"
}

@ARTICLE{Simchowitz2018-if,
  title         = "Learning without mixing: Towards A sharp analysis of linear
                   system identification",
  author        = "Simchowitz, Max and Mania, Horia and Tu, Stephen and Jordan,
                   Michael I and Recht, Benjamin",
  journal       = "arXiv [cs.LG]",
  month         =  "22~" # feb,
  year          =  2018,
  archivePrefix = "arXiv",
  primaryClass  = "cs.LG"
}

@ARTICLE{Rosolia2018-xx,
  title     = "Data-driven predictive control for autonomous systems",
  author    = "Rosolia, Ugo and Zhang, Xiaojing and Borrelli, Francesco",
  journal   = "Annu. Rev. Control Robot. Auton. Syst.",
  publisher = "Annual Reviews",
  volume    =  1,
  number    =  1,
  pages     = "259--286",
  month     =  "28~" # may,
  year      =  2018,
  doi       = "10.1146/annurev-control-060117-105215",
  language  = "en"
}

@ARTICLE{Fazel2018-hn,
  title     = "Global convergence of policy gradient methods for the linear
               quadratic regulator",
  author    = "Fazel, Maryam and Ge, Rong and Kakade, S and Mesbahi, M",
  editor    = "Dy, Jennifer and Krause, Andreas",
  journal   = "ICML",
  publisher = "PMLR",
  volume    =  80,
  pages     = "1466--1475",
  month     =  "15~" # jan,
  year      =  2018
}

@incollection{ljung1998system,
  title={System identification},
  author={Ljung, Lennart},
  booktitle={Signal analysis and prediction},
  pages={163--173},
  year={1998},
  publisher={Springer}
}

@article{hu2023toward,
  title={Toward a theoretical foundation of policy optimization for learning control policies},
  author={Hu, Bin and Zhang, Kaiqing and Li, Na and Mesbahi, Mehran and Fazel, Maryam and Ba{\c{s}}ar, Tamer},
  journal={Annual Review of Control, Robotics, and Autonomous Systems},
  volume={6},
  number={1},
  pages={123--158},
  year={2023},
  publisher={Annual Reviews}
}

@ARTICLE{Kanakeri2024-xh,
  title         = "Outlier-robust linear system identification under
                   heavy-tailed noise",
  author        = "Kanakeri, Vinay and Mitra, Aritra",
  journal       = "arXiv [eess.SY]",
  month         =  "31~" # dec,
  year          =  2024,
  archivePrefix = "arXiv",
  primaryClass  = "eess.SY"
}

@INPROCEEDINGS{Sattar2025-lr,
  title     = "Learning linear dynamics from bilinear observations",
  author    = "Sattar, Yahya and Jedra, Yassir and Dean, Sarah",
  booktitle = "2025 American Control Conference (ACC)",
  publisher = "IEEE",
  pages     = "3109--3115",
  month     =  "8~" # jul,
  year      =  2025,
  doi       = "10.23919/acc63710.2025.11107791",
  language  = "en"
}

@misc{dean2017samplecomplexitylinearquadratic,
      title={On the Sample Complexity of the Linear Quadratic Regulator}, 
      author={Sarah Dean and Horia Mania and Nikolai Matni and Benjamin Recht and Stephen Tu},
      year={2017},
      eprint={1710.01688},
      archivePrefix={arXiv},
      primaryClass={math.OC},
      url={https://arxiv.org/abs/1710.01688}, 
}

@BOOK{Rawlings2017-nw,
  title     = "Model predictive control: Theory, computation, and design",
  author    = "Rawlings, James Blake and Mayne, David Q and Diehl, Moritz",
  publisher = "No ll Publishing",
  address   = "Madison, Wisconsin",
  edition   =  2,
  year      =  2017,
  language  = "en",
  chapter   = {1},
}

@BOOK{Sutton2018-ji,
  title     = "Reinforcement Learning: An Introduction",
  author    = "Sutton, Richard S and Barto, Andrew G",
  publisher = "Bradford Books",
  address   = "Cambridge, MA",
  edition   =  2,
  series    = "Adaptive Computation and Machine Learning series",
  month     =  "13~" # nov,
  year      =  2018,
  language  = "en",
  chapter   = {1}
}

@BOOK{Nair2022-wg,
  title     = "The fundamentals of heavy tails: Properties, emergence, and
               estimation",
  author    = "Nair, Jayakrishnan and Wierman, Adam and Zwart, Bert",
  publisher = "Cambridge University Press",
  address   = "Cambridge, England",
  series    = "Cambridge Series in Statistical and Probabilistic Mathematics",
  month     =  "9~" # jun,
  year      =  2022,
  doi       = "10.1017/9781009053730",
  language  = "en",
  chapter   = {1}
}

@inproceedings{qin1997overview,
  title={An overview of industrial model predictive control technology},
  author={Qin, S Joe and Badgwell, Thomas A},
  booktitle={AIche symposium series},
  volume={93},
  number={316},
  pages={232--256},
  year={1997},
  organization={New York, NY: American Institute of Chemical Engineers, 1971-c2002.}
}

@BOOK{Van-Overschee1996-ee,
  title     = "Subspace identification for linear systems",
  author    = "Van Overschee, Peter and De Moor, Bart",
  publisher = "Springer US",
  address   = "Boston, MA",
  year      =  1996,
  doi       = "10.1007/978-1-4613-0465-4"
}

@ARTICLE{Faradonbeh2017-ju,
  title         = "Finite time identification in unstable linear systems",
  author        = "Faradonbeh, Mohamad Kazem Shirani and Tewari, Ambuj and
                   Michailidis, George",
  journal       = "arXiv [cs.SY]",
  month         =  "4~" # oct,
  year          =  2017,
  archivePrefix = "arXiv",
  primaryClass  = "cs.SY"
}

@ARTICLE{Yu1994-uj,
  title     = "Rates of convergence for empirical processes of stationary mixing
               sequences",
  author    = "Yu, Bin",
  journal   = "Ann. Probab.",
  publisher = "Institute of Mathematical Statistics",
  volume    =  22,
  number    =  1,
  pages     = "94--116",
  month     =  "1~" # jan,
  year      =  1994,
  doi       = "10.1214/aop/1176988849"
}
\bibliographystyle{plainnat}

\newpage
\appendix
\crefalias{section}{appendix}

\section{Proof of \Cref{lem:main}}\label{sec:main-thm-proof}

This section proves \Cref{lem:main}, 
which generalizes the upper bound of parameter estimation error
as expressed in \cref{eqn:ls-decomp} 
for any noise distribution. 
We first find the analytical solution 
to the parameter error in \Cref{sec:error-decompo}. 
We then decompose the empirical covariance
and define our notation for the proof in \Cref{sec:emp-cov-decomp}.
\Cref{subsec:low-bound-ZZT} finds the lower bound of $\|ZZ^\top\|$
and 
\Cref{subsec:upper-bound-XiZT} shows the upper bound of $\|\overline\Xi Z^\top\|$.
Lastly, the final bound is observed in \Cref{subsec:lemma-final-bound}.

\subsection{Decomposition of error terms}\label{sec:error-decompo}
We first consider a decomposition of the estimation error.
The least-squares formulation in \cref{eqn:est} can be rewritten as follows:
\begin{align}\label{eqn:est-2}
  \min_{\Theta}\left\|\overline{Z}-\Theta Z\right\|_F^2,
\end{align}
where $Z = [z_0\; z_1\; \cdots\; z_{T-1}]$ and $\overline{Z} = [z_1\; z_2\; \cdots\; z_{T}]$.
Temporarily assuming $Z Z^\top\succ 0$ (it will be shown in \Cref{subsec:low-bound-ZZT} that this condition holds
under \Cref{assum:main,assum:tail}),
the solutions to \cref{eqn:est-2} can be expressed as follows:
\begin{align} \label{eqn:theta-hat}
  \widehat{\Theta} = \overline{Z}Z^\top (Z Z^\top)^{-1}.
\end{align}
Furthermore, one can observe that
\begin{align}\label{eqn:matform}
  \begin{aligned}
    \overline{Z}
    = \Theta Z + \overline\Xi,
  \end{aligned}
\end{align}
where $\overline\Xi = [\Psi\xi_1\; \Psi\xi_2\; \cdots\; \Psi\xi_{T}]$.
By right-multiplying $Z^\top (Z Z^\top)^{-1}$ to both sides of \cref{eqn:matform}, we obtain:
\begin{align}\label{eqn:theta-diff}
\widehat{\Theta} - \Theta = \overline\Xi Z^\top (Z Z^\top)^{-1}.
\end{align}
The error decomposition in \cref{eqn:ls-decomp} follows from \cref{eqn:theta-diff}.

\subsection{Empirical covariance decomposition}\label{sec:emp-cov-decomp}
We first derive a decomposition of the empirical covariance matrix
\begin{align}\label{eqn:emp-cov}
  \begin{aligned}[t]
    \sum_{t=1}^T \bxi_{t:t-k+1}\bxi_{t:t-k+1}^\top
    &=
      \begin{bmatrix}
        \xi_1
        & \xi_2 &\cdots & \xi_{k-1}   &\cdots &\xi_{T} \\
        \xi_0& \xi_1 & \cdots & \xi_{k-2}&\cdots &\xi_{T-1}    \\
        0& \xi_0 & \cdots & \xi_{k-3}  &\cdots &\xi_{T-2}  \\
        \vdots&\vdots&\ddots&\vdots&\ddots&\vdots\\
        0& 0 & \cdots & \xi_{0}    &\cdots &\xi_{T-k+1}\\
      \end{bmatrix}
      \begin{bmatrix}
        \xi_1
        & \xi_2 &\cdots & \xi_{k-1}   &\cdots &\xi_{T} \\
        \xi_0& \xi_1 & \cdots & \xi_{k-2}&\cdots &\xi_{T-1}    \\
        0& \xi_0 & \cdots & \xi_{k-3}  &\cdots &\xi_{T-2}  \\
        \vdots&\vdots&\ddots&\vdots&\ddots&\vdots\\
        0& 0 & \cdots & \xi_{0}    &\cdots &\xi_{T-k+1}\\
      \end{bmatrix}^\top\\
    &=
      \begin{bmatrix}
        \Xi_{-1}\\
        \Xi_{0}\\
        \vdots\\
        \Xi_{k-2}
      \end{bmatrix}
      \begin{bmatrix}
        \Xi_{-1}\\
        \Xi_{0}\\
        \vdots\\
        \Xi_{k-2}\\
      \end{bmatrix}^\top\\
    &=
      \begin{bmatrix}
        \Xi_{-1}\Xi_{-1}^\top
        & \Xi_{-1}\Xi_{0}^\top&\cdots&\Xi_{-1}\Xi_{k-2}^\top  \\
        \Xi_{0}\Xi_{-1}^\top & \Xi_{0}\Xi_{0}^\top&\cdots&\Xi_{0}\Xi_{k-2}^\top  \\
        \vdots&\vdots&\ddots&\vdots\\
        \Xi_{k-2}\Xi_{-1}^\top &\Xi_{k-2}\Xi_{0}^\top &\cdots&\Xi_{k-2}\Xi_{k-2}^\top\\
      \end{bmatrix}\\
    &=
      \begin{bmatrix}
        \widehat{Q}_{-1,-1}
        & \widehat{Q}_{-1,0}&\cdots&\widehat{Q}_{-1,k-2}  \\
        \widehat{Q}_{0,-1} & \widehat{Q}_{0,0}&\cdots&\widehat{Q}_{0,k-2}  \\
        \vdots&\vdots&\ddots&\vdots\\
        \widehat{Q}_{k-2,-1} &\widehat{Q}_{k-2,0} &\cdots&\widehat{Q}_{k-2,k-2}\\
      \end{bmatrix}
  \end{aligned}
\end{align}
where
\begin{align}\label{eqn:Qij-defn}
  {{\Xi}}_t := \begin{bmatrix} \xi_{-t}&\xi_{-t+1}&\cdots&\xi_{-t+T-1}\\ \end{bmatrix},\quad
  Q_{t,t'} := \mathbb{E}[{\Xi}_t {\Xi}_{t'}^\top],\quad \widehat{Q}_{t,t'} := {\Xi}_t {\Xi}_{t'}^\top.
\end{align}
Similarly,
\begin{align}\label{eqn:emp-cov-exp}
  \mathbb{E}\left[\sum_{t=1}^T \bxi_{t:t-k+1}\bxi_{t:t-k+1}^\top\right] =
  \begin{bmatrix}
    {Q}_{-1,-1}
    & {Q}_{-1,0}&\cdots&{Q}_{-1,k-2}  \\
    {Q}_{0,-1} & {Q}_{0,0}&\cdots&{Q}_{0,k-2}  \\
    \vdots&\vdots&\ddots&\vdots\\
    {Q}_{k-2,-1} &{Q}_{k-2,0} &\cdots&{Q}_{k-2,k-2}\\
  \end{bmatrix}.
\end{align}

\subsection{Lower bound of $Z Z^\top$} \label{subsec:low-bound-ZZT}
We express the matrix $Z$ as follows:
\begin{align*}
  Z
   & =
  \begin{bmatrix}
    z_0 & z_1 & z_2 & \cdots & z_{T-1} \\
  \end{bmatrix}   \\
   & =
  \begin{bmatrix}
    \Psi & \Theta \Psi& \Theta^2 \Psi& \cdots & \Theta^{T-1}\Psi
  \end{bmatrix}
  \begin{bmatrix}
    \xi_0  & \xi_1  & \cdots & \xi_{T-1} \\
    0      & \xi_0  & \cdots & \xi_{T-2} \\
    \vdots & \vdots & \ddots & \vdots    \\
    0      & 0      & \cdots & \xi_{0}   \\
  \end{bmatrix} \\
   & =
  \begin{bmatrix}
    \Psi & \Theta \Psi& \Theta^2 \Psi& \cdots & \Theta^{T-1}\Psi
  \end{bmatrix}
  \begin{bmatrix}
    \Xi_0  \\
    \Xi_1  \\
    \vdots \\
    \Xi_{T-1}
  \end{bmatrix}.
\end{align*}

We now decompose the matrix $Z Z^\top$ as follows:
\begin{align}\label{eqn:ZZtdecomp}
  Z Z^\top
  = &
  \begin{aligned}[t]
     & \begin{bmatrix}
         \Psi & \Theta \Psi & \Theta^2 \Psi & \cdots & \Theta^{k-2} \Psi
       \end{bmatrix}
    \begin{bmatrix}
      \Xi_0  \\
      \Xi_1  \\
      \vdots \\
      \Xi_{k-2}
    \end{bmatrix}
    \begin{bmatrix}
      \Xi_0  \\
      \Xi_1  \\
      \vdots \\
      \Xi_{k-2}
    \end{bmatrix}^\top
    \begin{bmatrix}
      \Psi & \Theta \Psi & \Theta^2 \Psi & \cdots & \Theta^{k-2} \Psi
    \end{bmatrix}^\top    \\
     & +
    \begin{bmatrix}
      \Theta^{k-1} \Psi& \Theta^k \Psi& \cdots & \Theta^{T-1}\Psi
    \end{bmatrix}
    \begin{bmatrix}
      \Xi_{k-1}   \\
      \Xi_k \\
      \vdots    \\
      \Xi_{T-1}
    \end{bmatrix}
    \begin{bmatrix}
      \Xi_{0} \\
      \Xi_{1} \\
      \vdots  \\
      \Xi_{k-2}
    \end{bmatrix}^\top
    \begin{bmatrix}
      \Psi & \Theta \Psi & \Theta^2 \Psi & \cdots & \Theta^{k-2} \Psi
    \end{bmatrix}^\top    \\
     & +
    \begin{bmatrix}
      \Psi & \Theta \Psi & \Theta^2 \Psi & \cdots & \Theta^{k-2} \Psi
    \end{bmatrix}
    \begin{bmatrix}
      \Xi_0  \\
      \Xi_1  \\
      \vdots \\
      \Xi_{k-2}
    \end{bmatrix}
    \begin{bmatrix}
      \Xi_{k-1}   \\
      \Xi_k \\
      \vdots    \\
      \Xi_{T-1}
    \end{bmatrix}^\top
    \begin{bmatrix}
      \Theta^{k-1} \Psi& \Theta^k \Psi& \cdots & \Theta^{T-1}\Psi
    \end{bmatrix}^\top  \\
     & +
    \begin{bmatrix}
      \Theta^{k-1} \Psi& \Theta^k \Psi& \cdots & \Theta^{T-1}\Psi
    \end{bmatrix}
    \begin{bmatrix}
      \Xi_{k-1}   \\
      \Xi_k \\
      \vdots    \\
      \Xi_{T-1}
    \end{bmatrix}
    \begin{bmatrix}
      \Xi_{k-1}   \\
      \Xi_k \\
      \vdots    \\
      \Xi_{T-1}
    \end{bmatrix}^\top
    \begin{bmatrix}
      \Theta^{k-1} \Psi& \Theta^k \Psi& \cdots & \Theta^{T-1}\Psi
    \end{bmatrix}^\top.
  \end{aligned}
\end{align}
We can further decompose the first term in \cref{eqn:ZZtdecomp}
in the following way:
\begin{subequations}\label{eqn:decomp-2}
  \begin{align}
     & \begin{bmatrix}
         \Psi & \Theta \Psi & \Theta^2 \Psi & \cdots & \Theta^{k-2} \Psi
       \end{bmatrix}
    \begin{bmatrix}
      \Xi_0  \\
      \Xi_1  \\
      \vdots \\
      \Xi_{k-2}
    \end{bmatrix}
    \begin{bmatrix}
      \Xi_0  \\
      \Xi_1  \\
      \vdots \\
      \Xi_{k-2}
    \end{bmatrix}^\top
    \begin{bmatrix}
      \Psi & \Theta \Psi & \Theta^2 \Psi & \cdots & \Theta^{k-2} \Psi
    \end{bmatrix}^\top           \\
     & \stackrel{\mathclap{\cref{eqn:emp-cov}}}{=}\begin{bmatrix}
          \Psi & \Theta \Psi & \Theta^2 \Psi & \cdots & \Theta^{k-2} \Psi
        \end{bmatrix}
    \begin{bmatrix}
      \widehat{Q}_{0,0}   & \cdots & \widehat{Q}_{0,k-2}   \\
      \vdots              &        & \vdots                \\
      \widehat{Q}_{k-2,0} & \cdots & \widehat{Q}_{k-2,k-2}
    \end{bmatrix}
    \begin{bmatrix}
      \Psi & \Theta \Psi & \Theta^2 \Psi & \cdots & \Theta^{k-2} \Psi
    \end{bmatrix}^\top           \\
     & \stackrel{\mathclap{\cref{eqn:emp-cov-exp}}}{=}\begin{bmatrix}
          \Psi & \Theta \Psi & \Theta^2 \Psi & \cdots & \Theta^{k-2} \Psi
        \end{bmatrix}
    \begin{bmatrix}
      \widehat{Q}_{0,0}-Q_{0,0}     & \cdots & \widehat{Q}_{0,k-2}-Q_{0,k-2}     \\
      \vdots                        &        & \vdots                            \\
      \widehat{Q}_{k-2,0}-Q_{k-2,0} & \cdots & \widehat{Q}_{k-2,k-2}-Q_{k-2,k-2}
    \end{bmatrix}
    \begin{bmatrix}
      \Psi & \Theta \Psi & \Theta^2 \Psi & \cdots & \Theta^{k-2} \Psi
    \end{bmatrix}^\top\label{eqn:decomp-2-1}           \\
     & \qquad+\begin{bmatrix}
                \Psi & \Theta \Psi & \Theta^2 \Psi & \cdots & \Theta^{k-2} \Psi
              \end{bmatrix}
    \begin{bmatrix}
      Q_{0,0}   & \cdots & Q_{0,k-2}   \\
      \vdots    &        & \vdots      \\
      Q_{k-2,0} & \cdots & Q_{k-2,k-2}
    \end{bmatrix}
    \begin{bmatrix}
      \Psi & \Theta \Psi & \Theta^2 \Psi & \cdots & \Theta^{k-2} \Psi
    \end{bmatrix}^\top.
    \label{eqn:decomp-2-2}
  \end{align}
\end{subequations}
\paragraph{Lower bound of the first term in \cref{eqn:ZZtdecomp}}

Recall \cref{eqn:Qij-defn} and 
observe that
\begin{align}\label{eqn:Qbounds}
  Q_{t,t'} =
  \begin{cases}
    0, & t\neq t', \\
    (T-t)I, & t=t'\geq 0, \\
    TI, & t=t'=-1,
  \end{cases}
\end{align}
due to \Cref{assum:iid}.
Thus, we can lower bound the term in \cref{eqn:decomp-2-2} in the following way:
\begin{align}\label{eqn:decomp-3}
  \begin{aligned}[t]
     & \begin{bmatrix}
         \Psi & \Theta \Psi & \Theta^2 \Psi & \cdots & \Theta^{k-2} \Psi
       \end{bmatrix}
    \begin{bmatrix}
      Q_{0,0}   & \cdots & Q_{0,k-2}   \\
      \vdots    &        & \vdots      \\
      Q_{k-2,0} & \cdots & Q_{k-2,k-2}
    \end{bmatrix}
    \begin{bmatrix}
      \Psi & \Theta \Psi & \Theta^2 \Psi & \cdots & \Theta^{k-2} \Psi
    \end{bmatrix}^\top                                                  \\
     & =
       \begin{bmatrix}
         \Psi & \Theta \Psi & \Theta^2 \Psi & \cdots & \Theta^{k-2} \Psi
       \end{bmatrix}
       \begin{bmatrix}
         TI     & 0          & \cdots & 0          \\
            0          & (T-1) I    & \cdots & 0          \\
            \vdots     & \vdots     & \ddots & \vdots     \\
         0          & 0          & \cdots & (T-k+2) I
       \end{bmatrix}
       \begin{bmatrix}
         \Psi & \Theta \Psi & \Theta^2 \Psi & \cdots & \Theta^{k-2} \Psi
       \end{bmatrix}^\top                                   \\
     &
       \stackrel{\mathclap{\cref{eqn:main-lem-assum-k}}}{\succeq}
       (T-\overline{t}_c+1)
       \begin{bmatrix}
         \Psi & \Theta \Psi & \Theta^2 \Psi & \cdots & \Theta^{\overline{t}_c-1} \Psi
       \end{bmatrix}
       \begin{bmatrix}
         \Psi & \Theta \Psi & \Theta^2 \Psi & \cdots & \Theta^{\overline{t}_c-1} \Psi
       \end{bmatrix}^\top
    \\
     & \stackrel{\mathclap{\cref{assum:ctrb-sigma}}}{\succeq}
        (T-\overline{t}_c+1)\beta^2 I.
  \end{aligned}
\end{align}

By \Cref{assum:tail}, the following holds with probability at least $1-\delta$:
\begin{align}\label{eqn:decomp-4}
  \left| \left| \begin{bmatrix}
                  \widehat{Q}_{0,0}-Q_{0,0}     & \cdots & \widehat{Q}_{0,k-2}-Q_{0,k-2}     \\
                  \vdots                        &        & \vdots                            \\
                  \widehat{Q}_{k-2,0}-Q_{k-2,0} & \cdots & \widehat{Q}_{k-2,k-2}-Q_{k-2,k-2}
                \end{bmatrix} \right| \right| \leq \epsilon;
\end{align}
Here, recall the definition of $\widehat{Q}_{i,j}$ and $Q_{i,j}$ in \cref{eqn:Qij-defn}.
We note that \Cref{assum:tail} provides an upper bound to the matrix in \cref{eqn:decomp-4}
appended by more rows and columns.
Since adding rows or columns to a matrix will only increase its norm,
\cref{eqn:decomp-4} also holds with probability at least $1-\delta$.
By compiling \cref{eqn:decomp-2,eqn:decomp-3,eqn:decomp-4}, we obtain the following with probability at least $1-\delta$:
\begin{align}
  \label{eqn:decomp-lower}
  \begin{aligned}
     & \begin{bmatrix}
         \Psi & \Theta \Psi & \Theta^2 \Psi & \cdots & \Theta^{k-2} \Psi
       \end{bmatrix}
    \begin{bmatrix}
      \Xi_0  \\
      \Xi_1  \\
      \vdots \\
      \Xi_{k-2}
    \end{bmatrix}
    \begin{bmatrix}
      \Xi_0  \\
      \Xi_1  \\
      \vdots \\
      \Xi_{k-2}
    \end{bmatrix}^\top
    \begin{bmatrix}
      \Psi & \Theta \Psi & \Theta^2 \Psi & \cdots & \Theta^{k-2} \Psi
    \end{bmatrix}^\top                                   \\
     & \succeq -\epsilon \sum_{\tau=0}^{k-2}\Theta^\tau \Psi\Psi^\top \Theta^{\tau\top} + (T-\overline{t}_c+1)\beta^2 I \\
     & \succeq \left((T-\overline{t}_c+1)\beta^2 - \frac{L^2 \sigma^2}{1-\alpha^2}\epsilon\right)I.
  \end{aligned}
\end{align}
The last inequality follows from the following observation, which derives from \Cref{assum:stab}:
\begin{align*}
  \|\Psi\Psi^\top \|+\|\Theta \Psi\Psi^\top\Theta^\top\| + \cdots + \|\Theta^{k-2} \Psi\Psi^\top\Theta^{(k-2)\top}\|
  & \leq \sigma^2\left(\|I\|+\|\Theta\|\|\Theta^\top\| + \cdots \right)   \\
   & \leq \sigma^2 L^2\left(1+\alpha^2+\alpha^4+\cdots\right)            \\
  & = \sigma^2L^2/(1-\alpha^2).
\end{align*}

\paragraph{Upper bound of the first term in \cref{eqn:ZZtdecomp}}

For the upper bound, we use a similar logic
as the proof of the inequality above.
Specifically, the following is obtained from \cref{eqn:decomp-2}:
\begin{align*}
   & \begin{bmatrix}
       \Psi & \Theta \Psi & \Theta^2 \Psi & \cdots & \Theta^{k-2} \Psi
     \end{bmatrix}
  \begin{bmatrix}
    \Xi_0  \\
    \Xi_1  \\
    \vdots \\
    \Xi_{k-2}
  \end{bmatrix}
  \begin{bmatrix}
    \Xi_0  \\
    \Xi_1  \\
    \vdots \\
    \Xi_{k-2}
  \end{bmatrix}^\top
  \begin{bmatrix}
    \Psi & \Theta \Psi & \Theta^2 \Psi & \cdots & \Theta^{k-2} \Psi
  \end{bmatrix}^\top                              \\
   &
     \stackrel{\mathclap{\cref{eqn:Qbounds}, \cref{assum:tail}}}{\;\;\;\preceq\;\;\;}
     (T +\epsilon)
     \begin{bmatrix}
       \Psi & \Theta \Psi & \Theta^2 \Psi & \cdots & \Theta^{k-2} \Psi
     \end{bmatrix}
     \begin{bmatrix}
       \Psi & \Theta \Psi & \Theta^2 \Psi & \cdots & \Theta^{k-2} \Psi \\
     \end{bmatrix}^\top                              \\
   &
     \stackrel{\mathclap{\cref{assum:bounded-cov}}}{\;\;\;\preceq\;\;\;}
     (T+\epsilon)\sigma^2
  \begin{bmatrix}
    I& \Theta  & \Theta^2  & \cdots & \Theta^{k-2} 
  \end{bmatrix}
  \begin{bmatrix}
    I& \Theta  & \Theta^2  & \cdots & \Theta^{k-2}  \\
  \end{bmatrix}^\top                              \\
   & \;\;\;\preceq\;\;\; (T+\epsilon)\sigma^2
     (\| I\| + \|\Theta\Theta^\top\| + \cdots + \|\Theta^{k-2}\Theta^{(k-2)\top}\|)I \\
   &
     \stackrel{\mathclap{\cref{assum:stab}}}{\;\;\;\preceq\;\;\;}
     \frac{\sigma^2 L^2}{1-\alpha^2}(T+\epsilon)I .
\end{align*}

\paragraph{Bound of the second term in \cref{eqn:ZZtdecomp}}
By the definition of $\widehat{Q}_{t,t}$ in \cref{eqn:Qij-defn}, 
one can see that for all $t=0,1,\cdots,T-1$,
\begin{align*}
  \|\Xi_{t}\| = \|\widehat{Q}_{t,t}\|^{1/2}
\end{align*}
and
\begin{align*}
  \widehat{Q}_{0,0} \succeq \widehat{Q}_{t,t}.
\end{align*}
Therefore,
\begin{align*}
  \|\Xi_{t}\|
   & = \|\widehat{Q}_{t,t}\|^{1/2}                            \\
   & \leq \|\widehat{Q}_{0,0}\|^{1/2}                         \\
   & \leq (\|\widehat{Q}_{0,0}-Q_{0,0}\| + \|Q_{0,0}\|)^{1/2}
\end{align*}
By \Cref{assum:tail},
$\|\widehat{Q}_{0,0}-Q_{0,0}\| \leq \epsilon$, and
by \cref{eqn:Qbounds},
$$\|Q_{0,0}\| \leq T.$$
Thus, we have for all $t=0,1,\cdots,T-1$,
\begin{align}
  \|\Xi_{t}\| \leq (\epsilon + T)^{1/2}.
  \label{eqn:ximag}
\end{align}
This implies
\begin{align}\label{eqn:upper-1}
  \begin{aligned}
    \left\|
    \begin{bmatrix}
      \Theta^{k-1} \Psi& \Theta^k \Psi& \cdots & \Theta^{T-1}\Psi
    \end{bmatrix}
    \begin{bmatrix}
      \Xi_{k-1}   \\
      \Xi_k \\
      \vdots    \\
      \Xi_{T-1}
    \end{bmatrix}
    \right\| 
    & \leq \|\Theta^{k-1} \Psi \Xi_{k-1} \|+ \cdots + \|\Theta^{T-1} \Psi \Xi_{T-1}\| \\
               &\stackrel{\mathclap{\cref{eqn:ximag}}}{\leq}(\epsilon +T)^{1/2}\|\Psi\|(\|\Theta^{k-1}\| + \cdots + \|\Theta^{T-1}\|) \\
               &\stackrel{\mathclap{\cref{assum:stab}}}{\leq} (\epsilon +T)^{1/2}\frac{\sigma L\alpha^{k-1}}{1-\alpha}.
  \end{aligned}
\end{align}
Furthermore,
\begin{align}\label{eqn:upper-2}
  \begin{aligned}
    \left\|
    \begin{bmatrix}
      \Psi & \Theta \Psi & \Theta^2 \Psi & \cdots & \Theta^{k-2} \Psi
    \end{bmatrix}
    \begin{bmatrix}
      \Xi_0  \\
      \Xi_1  \\
      \vdots \\
      \Xi_{k-2}
    \end{bmatrix} 
    \right\| 
    &\leq \|\Xi_0\|\|\Psi\| + \|\Xi_1\|\|\Theta\|\|\Psi\| + \cdots + \|\Xi_{k-2}\|\|\Theta^{k-2}\|\|\Psi\| \\
    &\stackrel{\mathclap{\cref{eqn:ximag},\cref{assum:bounded-cov}}}{\leq} (\epsilon + T)^{1/2} \sigma (1+\|\Theta\|+\cdots +\|\Theta^{k-2}\|)\\
    &
    \stackrel{\mathclap{\cref{assum:stab}}}{\leq}
    (\epsilon + T)^{1/2} \frac{\sigma L}{1-\alpha}.
  \end{aligned}
\end{align}
Finally, we obtain the following upper bound from \cref{eqn:upper-1,eqn:upper-2}:
\begin{align}\label{eqn:upper-3}
  \begin{aligned}
     & \left\|
    \begin{bmatrix}
      \Theta^{k-1} \Psi& \Theta^k \Psi& \cdots & \Theta^{T-1}\Psi
    \end{bmatrix}
    \begin{bmatrix}
      \Xi_{k-1}   \\
      \Xi_k \\
      \vdots    \\
      \Xi_{T-1}
    \end{bmatrix}
    \begin{bmatrix}
      \Xi_0  \\
      \Xi_1  \\
      \vdots \\
      \Xi_{k-2}
    \end{bmatrix}^\top
    \begin{bmatrix}
      \Psi & \Theta \Psi & \Theta^2 \Psi & \cdots & \Theta^{k-2} \Psi
    \end{bmatrix}^\top
    \right\|                                                                                       \\
     & \leq (\epsilon + T)^{1/2}\frac{\sigma L\alpha^{k-1}}{1-\alpha} (\epsilon + T)^{1/2} \frac{\sigma  L}{1-\alpha} \\
     & = (\epsilon + T)\frac{\sigma^2 L^2\alpha^{k-1}}{(1-\alpha)^2}.
  \end{aligned}
\end{align}

\paragraph{Putting everything together}

From \cref{eqn:upper-3}, we have
\begin{align}\label{eqn:cross-term}
  \begin{aligned}
    \begin{bmatrix}
      \Theta^{k-1} \Psi& \Theta^k \Psi& \cdots & \Theta^{T-1}\Psi
    \end{bmatrix}
    \begin{bmatrix}
      \Xi_{k-1}   \\
      \Xi_k \\
      \vdots    \\
      \Xi_{T-1}
    \end{bmatrix}
    \begin{bmatrix}
      \Xi_0  \\
      \Xi_1  \\
      \vdots \\
      \Xi_{k-2}
    \end{bmatrix}^\top
    \begin{bmatrix}
      \Psi & \Theta \Psi & \Theta^2 \Psi & \cdots & \Theta^{k-2} \Psi
    \end{bmatrix}^\top   \\
    +
    \begin{bmatrix}
      \Psi & \Theta \Psi & \Theta^2 \Psi & \cdots & \Theta^{k-2} \Psi
    \end{bmatrix}
    \begin{bmatrix}
      \Xi_0  \\
      \Xi_1  \\
      \vdots \\
      \Xi_{k-2}
    \end{bmatrix}
    \begin{bmatrix}
      \Xi_{k-1}   \\
      \Xi_k \\
      \vdots    \\
      \Xi_{T-1}
    \end{bmatrix}^\top
    \begin{bmatrix}
      \Theta^{k-1} \Psi& \Theta^k \Psi& \cdots & \Theta^{T-1}\Psi
    \end{bmatrix}^\top \\
    \succeq - 2(\epsilon + T)\frac{\sigma^2 L^2\alpha^{k-1}}{(1-\alpha)^2}I.
  \end{aligned}
\end{align}
Additionally,
we additionally observe that
\begin{align}\label{eqn:nonneg}
  \begin{aligned}
    \begin{bmatrix}
      \Theta^{k-1} \Psi& \Theta^k \Psi& \cdots & \Theta^{T-1}\Psi
    \end{bmatrix}
    \begin{bmatrix}
      \Xi_{k-1}   \\
      \Xi_k \\
      \vdots    \\
      \Xi_{T-1}
    \end{bmatrix}
    \begin{bmatrix}
      \Xi_{k-1}   \\
      \Xi_k \\
      \vdots    \\
      \Xi_{T-1}
    \end{bmatrix}^\top
    \begin{bmatrix}
      \Theta^{k-1} \Psi& \Theta^k \Psi& \cdots & \Theta^{T-1}\Psi
    \end{bmatrix}^\top
    \succeq 0.
  \end{aligned}
\end{align}
By compiling \cref{eqn:ZZtdecomp,eqn:decomp-lower,eqn:cross-term,eqn:nonneg}, we obtain
\begin{align}\label{eqn:key-bound-1}
  Z Z^\top \succeq
  \left((T-\overline{t}_c+1)\beta^2 - \epsilon \frac{\sigma^2 L^2}{1-\alpha^2} - 2(\epsilon + T)\frac{\sigma^2 L^2\alpha^{k-1}}{(1-\alpha)^2}\right)I.
\end{align}
By \cref{eqn:main-lem-assum}, we have that
\begin{align*}
  T-\overline{t}_c+1 & \geq T/2\\
  \frac{2\sigma^2 L^2}{(1-\alpha)^2}\alpha^{k-1} & \leq \frac{\beta^2}{8}\\
  \epsilon \left(
  \frac{\sigma^2 L^2}{1-\alpha^2} + \frac{\beta^2}{8}\right)&\leq \frac{T\beta^2}{8},
\end{align*}
yielding
\begin{align}\label{eqn:key-bound-1-1}
  Z Z^\top \succeq \frac{T\beta^2}{4} I.
\end{align}

\subsection{Upper bound of $\overline\Xi Z^\top$} \label{subsec:upper-bound-XiZT}
Recall the definition of $\overline\Xi$ in \cref{eqn:matform} 
and that of $\Xi_t$ in \cref{eqn:Qij-defn}, we have
$\overline\Xi = \Psi \Xi_{-1}$, so
\begin{align*}
  \overline\Xi Z^\top
   & = \overline\Xi
  \begin{bmatrix}
    \Xi_0  \\
    \Xi_1  \\
    \vdots \\
    \Xi_{T-1}
  \end{bmatrix}^\top
  \begin{bmatrix}
    \Psi & \Theta \Psi& \Theta^2 \Psi& \cdots & \Theta^{T-1} \Psi
  \end{bmatrix}^\top                                                             \\
   & =
  \begin{bmatrix}
    \Xi_{0} \Xi_{-1}^\top \Psi^\top\\
    \Xi_{1} \Xi_{-1}^\top \Psi^\top\\
    \vdots                  \\
    \Xi_{T-1} \Xi_{-1}^\top \Psi^\top
  \end{bmatrix}^\top
  \begin{bmatrix}
    \Psi & \Theta \Psi& \Theta^2 \Psi& \cdots & \Theta^{T-1}\Psi
  \end{bmatrix}^\top                                                             \\
   & =\sum_{t=0}^{T-1}  \Psi \widehat{Q}_{-1,t} \Psi^\top (\Theta^{t})^\top                                                       \\
\end{align*}
We can decompose $\overline\Xi Z^\top$ into the following,
\begin{align}
  \overline\Xi Z^\top = 
  \sum_{t=0}^{k-2} \Psi \widehat{Q}_{-1,t} \Psi^\top (\Theta^{t})^\top 
  + 
  \sum_{t=k-1}^{T-1} \Psi \widehat{Q}_{-1,t} \Psi^\top (\Theta^{t})^\top,
  \label{eqn:XiZ-decompose}
\end{align}
and we bound the two terms separately.
\paragraph{Upper bound of the first term in \cref{eqn:XiZ-decompose}}
\begin{align}\label{eqn:upper-decomp}
  \begin{aligned}
  \left\|\sum_{t=0}^{k-2} \Psi \widehat{Q}_{-1,t} \Psi^\top (\Theta^{t})^\top  \right\|
  &\leq \left\|\sum_{t=0}^{k-2}  \Psi (\widehat{Q}_{-1,t} - Q_{-1,t}) \Psi^\top (\Theta^{t})^\top + \sum_{t=0}^{k-2}  \Psi Q_{-1,t} \Psi^\top (\Theta^{t})^\top\right\|  \\
  &\leq \sum_{t=0}^{k-2}  \|(\widehat{Q}_{-1,t} - Q_{-1,t})\| \|\Theta^{t}\|\| \Psi\|^2 + \sum_{t=0}^{k-2}  \|Q_{-1,t}\| \|\Theta^{t}\|\| \Psi\|^2,
  \end{aligned}
\end{align}
by triangle and sub-multiplicativity inequality.
We then invoke \Cref{assum:main,assum:tail} to obtain
\begin{align}
  \begin{aligned}
    \sum_{t=0}^{k-2}  \|(\widehat{Q}_{-1,t} - Q_{-1,t})\| \|\Theta^{t}\| \|\Psi\|^2& 
    \leq \epsilon \sum_{t=0}^{k-2}  L\alpha^t \sigma^2, \\
    & \leq  \epsilon \frac{\sigma^2 L}{1-\alpha}.
  \end{aligned}
  \label{eq:decomp2-3-1}
\end{align}
Additionally,
since $\{\xi_t\}$ is an independent sequence with zero mean (due to \Cref{assum:main}), we have
\begin{align}
  \begin{aligned}
    \sum_{t=0}^{k-2}  \|Q_{-1,t}\| \|\Theta^{t}\| = 0.
  \end{aligned}
  \label{eq:decomp2-3-2}
\end{align}
Thus, combining \cref{eqn:upper-decomp,eq:decomp2-3-1,eq:decomp2-3-2},
we obtain
\begin{align}
  \left\|\sum_{t=0}^{k-2} \Psi \widehat{Q}_{-1,t} \Psi^\top (\Theta^{t})^\top  \right\| 
  \leq \epsilon \frac{\sigma^2 L}{1-\alpha}.
  \label{eq:decomp2-3-3}
\end{align}
\paragraph{Upper bound of the second term in \cref{eqn:XiZ-decompose}}
We can not invoke \Cref{assum:tail} for the second term of \cref{eqn:XiZ-decompose} 
due to $t \geq k-1$, 
so we bound $\Xi_{-1}$ and $\Xi_{t}$ separately. 
Similar to \cref{eqn:ximag}, we can upper bound
\begin{align} \label{eqn:ximag-1}
  \|\Xi_{-1}\| \leq (\epsilon + T)^{1/2}
\end{align}
by \Cref{assum:iid} and \Cref{assum:tail}.
By triangle and sub-multiplicativity inequality,
\begin{align}
  \begin{aligned}
  \left\|\sum_{t=k-1}^{T-1} \Psi \widehat{Q}_{-1,t} \Psi^\top (\Theta^{t})^\top\right\| 
  &\stackrel{\mathclap{\cref{assum:bounded-cov}}}{\;\;\;\leq\;\;} 
  \sigma^2 \sum_{t=k-1}^{T-1} \|\Xi_{-1}\| \|\Xi_{t}\| \|\Theta^{t}\| \\
  &\stackrel{\mathclap{\cref{eqn:ximag}, \cref{eqn:ximag-1}}}{\;\;\;\leq\;\;} \sigma^2 (\epsilon + T) \sum_{t=k-1}^{T-1} \|\Theta^{t}\| \\
  &\;\;\;\leq\;\; \sigma^2 (\epsilon + T)\frac{L\alpha^{k-1}}{1-\alpha}
  \end{aligned}
  \label{eq:decomp2-3-4}
\end{align}
\paragraph{Putting everything together}
Combining \cref{eq:decomp2-3-3,eq:decomp2-3-4}, we obtain
\begin{align}
  \begin{aligned}
  \left\|\overline\Xi Z^\top\right\|&\leq \epsilon \frac{\sigma^2 L}{1-\alpha} + (\epsilon + T)\frac{\sigma^2L\alpha^{k-1}}{1-\alpha} \\
  &\stackrel{\mathclap{\cref{eqn:main-lem-assum}}}{\leq} \epsilon (1+\alpha^{k-1})\frac{\sigma^2 L}{1-\alpha} + \frac{\sigma^2L}{1-\alpha}T^{1/2} \\
  & \leq \frac{\sigma^2L}{1-\alpha} (2\epsilon + T^{1/2})
  \end{aligned}
  \label{eqn:key-bound-2}
\end{align}

\subsection{Final bound} \label{subsec:lemma-final-bound}
Finally, by combining \cref{eqn:ls-decomp,eqn:key-bound-1-1,eqn:key-bound-2}, we have that with probability at least $1-\delta$, the estimator is unique (due to \cref{eqn:key-bound-1-1}) and obtain the bound in \cref{eqn:lem-main-res}.

\section{Sample Complexity of sub-Gaussian Distribution} 
\label{sec:thm-subgauss}
This section considers sub-Gaussian noise distributions. 
We provide this result to demonstrate the general applicability of \Cref{lem:main} 
and to provide a baseline for comparison with other distribution classes. 
First, we provide the definition of sub-Gaussian distributions.

\begin{defn}
  An $r$-dimensional, zero-mean \gls*{rv} $X$ is called $\rho$-sub-Gaussian with $\rho\in\mathbb{R}_{>0}$ if 
  $$\mathbb{E}\left[e^{\lambda\langle v, X\rangle}\right] \leq e^{\rho^2\lambda^2/2},\quad \forall
  \lambda \in \mathbb{R},\quad \forall v \in \mathbb{S}^{r-1}.$$ 
\end{defn}

\newcommand{\epsprefactor}{
  
}
\newcommand{\subgaussepsilon}{
  \rho^2 k \left(
    4c_1 k +
    2\max\left(c_3^{-1}\log\left(\frac{2c_2 k}{\delta}\right), 1\right)
  \right)r^{1/2}T^{1/2}
}
\newcommand{\subgausslembounds}{
  4k,
  r,
  \frac{2k}{c_3}\log\left(\frac{2c_2 k}{\delta}\right)
}

\newcommand{\subgaussburninT}{
  \max\Bigg\{
    &
      \subgausslembounds,
      2\overline{t}_c,\\
    &
      \left(
      \frac{8k \rho^2}{\beta^2} \left(
      4c_1 k +
      2\max\left(c_3^{-1}\log\left(\frac{2c_2 k}{\delta}\right), 1\right)
      \right)\left(\frac{\sigma^2 L^2}{1-\alpha^2} + \frac{\beta^2}{8}\right)r^{1/2}
      \right)^2
  \Bigg\}
}

\newcommand{\subgaussfinalbound}{\frac{ 8 \sigma^2 L }{ (1-\alpha) \beta^2 } \left(
  k\rho^2 \left(
    4c_1 k +
    2\max\left(c_3^{-1}\log\left(\frac{2c_2 k}{\delta}\right), 1\right)
  \right)
  +
  1
  \right)r^{1/2}T^{-1/2}
  }

We are now ready to state the sample complexity result for sub-Gaussian noise. 

\begin{thm}[Sample Complexity for sub-Gaussian Noise]\label{thm:subgauss} 
  Suppose that \Cref{assum:main} holds, 
  and $\xi_t$ is $\rho$-sub-Gaussian with $\rho\in\mathbb{R}_{>0}$, for all $t=0,1,\cdots,T$.
  There exist universal polynomials $\polyGburn(\cdot), \polyGbound(\cdot)$ such that
  for given $\delta\in(0,1)$, if
  \begin{align} 
    \begin{aligned}[t]
      T \geq \polyGburn\left(\mathcal{S} \cup \{r, \log(1/\delta), \rho \}\right),
    \end{aligned}
    \label{eqn:subgauss-assum-poly}
  \end{align}
  the following holds with probability at least $1-\delta$:
  \begin{align}
    \left\|\widehat{\Theta} - \Theta\right\| \leq \polyGbound\left( \mathcal{S} \cup \{\log(1/\delta), \rho, \log(T)\}\right)r^{1/2}T^{-1/2}
  \label{eqn:subgauss-result-poly}
  \end{align}
  where
  $r,T,\Theta$ are defined in \cref{eqn:sys},
  $\widehat{\Theta}$ is defined in \cref{eqn:est},
  $L, \alpha, \beta, \sigma, \overline{t}_c$ are defined in \Cref{assum:main},
  $k$ is defined in \cref{eqn:main-lem-assum-k}, and $\mathcal{S}$ is defined in \cref{eqn:cal-S}.
\end{thm}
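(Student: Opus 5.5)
The plan is to verify \Cref{assum:tail} with an explicit $\epsilon$ of order $k^2\rho^2 r^{1/2}T^{1/2}$ (up to logarithms in $k/\delta$), and then invoke \Cref{lem:main}. The empirical covariance of the augmented noise is split as in \cref{eqn:emp-cov-decomp} into $\Delta_1+\Delta_2$, and each piece is bounded separately.

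\textbf{Step 1: the block sums $\Delta_{1,k}$.} For each residue class $i\in\{0,\dots,k-1\}$, the vectors $\bxi^{(k,i)}_\tau$ with $\tau\ge 1$ are \gls*{iid} (their blocks are disjoint) and lie in $\mathbb{R}^{kr}$. Each is zero-mean with identity covariance. Each is also $\rho$-sub-Gaussian, since for $v=(v_1,\dots,v_k)\in\mathbb{S}^{kr-1}$ independence gives
\[
\mathbb{E}\, e^{\lambda\langle v,\bxi\rangle}\le \prod_j e^{\rho^2\lambda^2\|v_j\|^2/2}=e^{\rho^2\lambda^2/2}.
\]
I will apply the standard sub-Gaussian covariance concentration \cite[Theorem 6.5]{wainwrightHighDimensionalStatisticsNonAsymptotic2019}. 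With $n=T^{(k,i)}\approx T/k$ samples in dimension $kr$, it gives, with probability at least $1-\delta/(2k)$,
\[
\|\Delta_{1,k}\|\le \rho^2 n\Big(c_1\sqrt{kr/n}+kr/n+\sqrt{\log(2c_2k/\delta)/(c_3 n)}+\log(2c_2k/\delta)/(c_3n)\Big).
\]
To obtain an $r^{1/2}$ rather than an $r$ dependence, the linear terms $kr$ and $\log(\cdot)/c_3$ must be absorbed into $\sqrt{n}$-type terms. This is where the burn-in conditions $T\ge 4k$, $T\ge r$, and $T\ge (2k/c_3)\log(2c_2k/\delta)$ come in: they make $kr/n\lesssim \sqrt{kr/n}$-ish up to factors of $k$. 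The result is $\|\Delta_{1,k}\|\lesssim \rho^2 k\,(c_1 k+\max(c_3^{-1}\log(2c_2k/\delta),1))\,r^{1/2}T^{1/2}$.

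\textbf{Step 2: the leftover terms $\Delta_{2,k}$.} Each $\Delta_{2,k}$ is a single rank-one term minus its mean, so its norm is at most $\|\bxi^{(k,i)}_0\|^2+1$. A sub-Gaussian norm bound makes this at most $\rho^2(kr+\log(2k/\delta))$ up to constants, with probability $1-\delta/(2k)$. Under the burn-in conditions this is dominated by the $\Delta_1$ bound, or it can simply be folded into the same constant. Alternatively, $\xi^{(k,i)}_0$ can be treated as one more sample in the same concentration bound.

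\textbf{Step 3: assembling the bound.} A union bound over the $2k$ events and summation over $i$ gives \Cref{assum:tail} with $\epsilon=\subgaussepsilon$ and total failure probability $\delta$. Next I impose \cref{eqn:main-lem-assum-eps}, $\epsilon\le\epsbound$. Since $\epsilon\propto T^{1/2}$, this holds once $T$ exceeds the square of the corresponding constant, which is the last term in the burn-in. Together with $T\ge 2\overline{t}_c$, the full burn-in is
\begin{align*}
T\ge \subgaussburninT .
\end{align*}
\Cref{lem:main} then yields $\|\widehat\Theta-\Theta\|\le\subgaussfinalbound$.

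\textbf{Step 4: reducing to polynomials.} It remains to express everything through polynomials in $\mathcal S$. By \cref{eqn:main-lem-assum-k}, $k\le \overline{t}_c+\log(16\sigma^2L^2/(\beta^2(1-\alpha)^2))/\log(1/\alpha)+\log T/\log(1/\alpha)+2$. This is polynomial in $\mathcal S\cup\{\log T\}$, using that $\log x\le x$ for the constant-dependent logarithm. Consequently the error prefactor is $\polyGbound(\mathcal S\cup\{\log(1/\delta),\rho,\log T\})$, with $\log k$ bounded by $k$.

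The main obstacle is the burn-in: $k$ depends on $\log T$, so the condition on $T$ is implicit. I plan to handle this via $\log T\le C_\eta T^{\eta}$, for instance $\log^2 T\le 16\sqrt{T}$. This makes conditions of the form $T\ge \mathrm{poly}(\cdot)\log^m T$ hold whenever $T\ge\mathrm{poly}'(\cdot)$, giving $\polyGburn$. The other delicate point is keeping $r^{1/2}$ rather than $r$ in Step 1, which relies on the condition $T\ge r$ (times $k$) in the burn-in.
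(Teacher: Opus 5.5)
Your proposal is correct and follows the paper's proof essentially step for step. It uses the same $k$-blocking split into $\Delta_1+\Delta_2$, applies Wainwright's Theorem 6.5 to each independent block sum and to each single leftover vector, takes a union bound over the $2k$ events, and then invokes \Cref{lem:main} under the same burn-in, absorbing the $\log T$-dependence of $k$ via $\log T\lesssim T^{\eta}$. Two small notes: your explicit check that the stacked vectors $\bxi^{(k,i)}_\tau$ remain $\rho$-sub-Gaussian in $\mathbb{R}^{kr}$ is a step the paper leaves implicit, and the bound you state for a single $\Delta_{1,k}$ is really the size of the whole sum over $i$ (the per-class bound is $\rho^2(2c_1k+c_3^{-1/2}\log^{1/2}(2c_2k/\delta))r^{1/2}T^{1/2}$), which is harmless since only polynomial dependence matters.
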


One can observe that the estimation error is $\widetilde{\bigO}(r^{1/2}T^{-1/2})$ with probability at least $1-\delta$ 
for sufficiently large $T$, 
where the constant factor depends polynomially on $\mathcal{S}\cup\{\log(1/\delta), \rho, \log(T)\}$.
Our result is similar to the result stated by \cite{ziemannTutorialNonAsymptoticTheory2024}
for stable systems, except for an additional $\log(T)$ depedence.
This can be avoided with additional steps applying the exponentially stable property in \Cref{lem:main}
for sub-Gaussian distributions, but unavoidable 
for sub-exponential distributions or heavy-tailed distributions. 
Since we focus primarily on heavy-tailed noise, we do not pursue this idea further.

Our proof involves finding the explicit form 
of $\epsilon$ in \Cref{lem:sub-gausstail}, which is obtained by applying the standard concentration bound 
for sub-Gaussian ensembles \cite[theorem 6.5]{wainwrightHighDimensionalStatisticsNonAsymptotic2019} 
to each term in \cref{eqn:emp-cov-decomp}.
We then apply $\epsilon$ explicitly to \Cref{lem:main} 
for deriving the final sample complexity bound.
Specifically, $T$ is defined to be sufficiently large when it satisfies \cref{eqn:subgauss-assum-poly},
or more specifically \cref{eqn:subgauss-burninT}, shown in the proof of \Cref{thm:subgauss} below.
The first two terms ensures that there are enough samples 
to learn the system of noise dimension $r$ with adequate $k$ blocking.
The next term guarantees the applicability of the tail bound using concentration inequality,
and the last two term ensures the conditions in \cref{eqn:main-lem-assum-eps} 
is satisfied to apply \Cref{lem:main}.

We now discuss the formal proof. First, we introduce the following lemma 
on the concentration of the empirical covariance 
of the augmented noise vector in \Cref{assum:tail} 
when the individual noise vectors are sub-Gaussian.
\begin{lem}[sub-Gaussian Tail Bound for sub-Gaussian \glspl*{rv}]\label{lem:sub-gausstail}
  Under \Cref{assum:iid}
  and $\delta\in(0,1)$, if $\xi_t$ is $\rho$-sub-Gaussian for all $t=0,1,\cdots,T$,
  and
  \begin{align} \label{eqn:subgauss-lem-assum} 
    T \geq \max\left\{
    \subgausslembounds
    \right\}
  \end{align}
  the following holds with probability at least $1-\delta$:
  \begin{align} \label{eqn:subgauss-prop-result}
    \begin{aligned}[t]
      &\left\|\tail\right\|\\
      &\qquad\leq \subgaussepsilon,
    \end{aligned}
  \end{align}
  where $k$ is defined in \cref{eqn:main-lem-assum-k}, 
  $c_1,c_2,c_3\in\mathbb{R}_{>0}$ are universal constants 
  introduced in \cite[theorem 6.5]{wainwrightHighDimensionalStatisticsNonAsymptotic2019}.
\end{lem}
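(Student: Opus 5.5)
We plan to prove \Cref{lem:sub-gausstail} through the blocking decomposition in \cref{eqn:emp-cov-decomp}. The deviation matrix splits as $\Delta_1+\Delta_2$, each a sum of $k$ groups indexed by $i=0,\dots,k-1$. By the triangle inequality, it suffices to bound $\|\Delta_{1,k}\|$ for each $i$ with failure probability $\delta/(2k)$, and $\|\Delta_{2,k}\|$ for each $i$ with failure probability $\delta/(2k)$. A union bound over the $2k$ events then gives total failure probability $\delta$.

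\textbf{Step 1: main terms.} For fixed $i$, the vectors $\bxi^{(k,i)}_\tau$ for $\tau=1,\dots,T^{(k,i)}$ are built from disjoint windows of consecutive $\xi_t$'s. They are therefore independent, and, since $\tau\ge 1$ means no zero padding occurs, they are identically distributed with zero mean and covariance $I_{kr}$.

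Each such vector is $\rho'$-sub-Gaussian in $\mathbb{R}^{kr}$ with $\rho'^2\le k\rho^2$, and possibly $\rho'=\rho$ by independence of the blocks. To check this, take a unit $v=(v_1,\dots,v_k)$; independence gives $\mathbb{E}e^{\lambda\langle v,\bxi\rangle}=\prod_j \mathbb{E}e^{\lambda\langle v_j,\xi\rangle}\le e^{\rho^2\lambda^2\sum\|v_j\|^2/2}=e^{\rho^2\lambda^2/2}$.

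We then apply \cite[Theorem 6.5]{wainwrightHighDimensionalStatisticsNonAsymptotic2019} to the $n=T^{(k,i)}\ge T/k-2$ i.i.d.\ vectors in dimension $kr$. With probability at least $1-c_2e^{-c_3 n\min(s,s^2)}$, this gives
\[
\left\|\tfrac1n\textstyle\sum_\tau \bxi\bxi^\top - I\right\|\le \rho^2\left(c_1\left(\sqrt{kr/n}+kr/n\right)+s\right).
\]
Setting $c_2e^{-c_3 n s^2}=\delta/(2k)$ gives $s=\sqrt{\log(2c_2k/\delta)/(c_3n)}$. The condition $T\ge \tfrac{2k}{c_3}\log(2c_2k/\delta)$ ensures $s\lesssim 1$, so that $\min(s,s^2)=s^2$; this is where the $\max(\cdot,1)$ in the statement comes from. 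The condition $T\ge r$ and $T\ge 4k$ are used to absorb $kr/n$ into a term of order $k\sqrt{r/n}$, and $n\ge T/(2k)$.

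Multiplying by $n\le T/k$, each group contributes at most $\rho^2(c_1 k\sqrt{rT}+\max(\cdot,1)\sqrt{T}\cdots)$, and summing $k$ groups gives the $\rho^2k(4c_1k+\dots)r^{1/2}T^{1/2}$ shape. The constants 4 and 2 should come from $n\ge T/(2k)$ together with $\sqrt{2}$ slack. Matching the precise constants is bookkeeping.

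\textbf{Step 2: leftover terms.} Each $\Delta_{2,k}$ is a single rank-one deviation $\bxi_0\bxi_0^\top-\mathbb{E}[\cdot]$ of a partially zero-padded vector. Its norm is at most $\|\bxi_0\|^2+1$. Again applying Theorem 6.5 with $n=1$, or a standard sub-Gaussian norm bound, gives $\|\bxi_0\|^2\lesssim\rho^2(kr+\log(2k/\delta))$ with probability at least $1-\delta/(2k)$. For $T\ge r$, this is dominated by $\rho^2 k\cdot\max(c_3^{-1}\log(2c_2k/\delta),1)\,r^{1/2}T^{1/2}$, so it can be absorbed into the stated bound.

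\textbf{Main obstacle.} The step I expect to be hardest is the constant bookkeeping in Step 1. This includes converting the regime conditions of Wainwright's theorem (requiring $n\gtrsim$ dimension, and the choice of $s$) into exactly the three burn-in conditions $T\ge 4k$, $T\ge r$, and $T\ge\tfrac{2k}{c_3}\log(2c_2k/\delta)$. It also includes handling the dimension $kr$ versus the claimed $r^{1/2}$ scaling, which produces the extra factor of $k$. If the dimension term does not absorb cleanly, I would accept a looser universal constant in place of $4c_1$.
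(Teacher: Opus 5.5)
Your proposal is correct and follows essentially the same route as the paper. It uses the $k$-group blocking of \cref{eqn:emp-cov-decomp}, applies \cite[Theorem 6.5]{wainwrightHighDimensionalStatisticsNonAsymptotic2019} to each i.i.d.\ group with $s^2=\log(2c_2k/\delta)/(c_3T^{(k,i)})\le 1$, applies the same theorem with $n=1$ to each leftover vector (this is the option you need, rather than a generic norm bound, to keep the exact constants $c_1,c_2,c_3$), and takes a union bound over $2k$ events of probability $\delta/(2k)$ each.

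One bookkeeping correction. The term $\max(c_3^{-1}\log(2c_2k/\delta),1)$ arises in the $n=1$ step, where $s'$ is set equal to it so that $\min(s',s'^2)=s'$, not in the main-term step. The constants $4c_1k$ and $2\max(\cdot)$ come from adding the per-group bounds $\rho^2(2c_1k+c_3^{-1/2}\log^{1/2}(2c_2k/\delta))r^{1/2}T^{1/2}$ and $\rho^2(2c_1k+\max(\cdot))r^{1/2}T^{1/2}$, using $r\le T$ and $\sqrt{y}\le\max(y,1)$. The bound $T^{(k,i)}\ge T/(2k)$ is used only to ensure $s\le 1$.
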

\begin{proof}  
We first observe a few properties of $T^{(k,i)}$ defined in \cref{eqn:emp-cov-decomp}:
\begin{align}\label{eqn:Tki-bound} 
  T/k-2 \leq T^{(k,i)}\leq T/k. 
\end{align}
Since $T\geq 4k$ from \cref{eqn:subgauss-lem-assum}, 
we have the following simpler lower bound:
\begin{align}\label{eqn:Tki-prac-bound}
  T^{(k,i)} \geq T/2k.
\end{align}
We individually bound $\Delta_1$ and $\Delta_2$ in \cref{eqn:emp-cov-decomp}. 
One can see that the time series $\{\bxi^{(k,i)}_\tau\}_{\tau=1,2,\cdots}$ is independent for each $i=0,1,\cdots,k-1$, and their covariance is identity due to the bounded covariance condition in \Cref{assum:iid}. That is,
\begin{align}\label{eqn:unitcov}
  \mathbb{E}[\bxi^{(k,i)}_\tau (\bxi^{(k,i)}_\tau)^\top]
  \stackrel{{\cref{assum:iid}}}{=}
  I
\end{align}
.

\paragraph{Upper Bound on $\Delta_{1,k}$}
Consider the following constant:
\begin{align}\label{eqn:subgauss-phi}
  \phi= \left(\frac{1}{c_3 T^{(k,i)}}\log\left(\frac{2c_2 k}{\delta}\right)\right)^{1/2},
\end{align}
which satisfies the following property:
\begin{align}\label{eqn:subgauss-phi-observe}
  \begin{aligned}[t]
    \phi^2 &\stackrel{\mathclap{\cref{eqn:Tki-prac-bound},\cref{eqn:subgauss-lem-assum}}}{\;\;\;\leq\;\;\;} \frac{2k}{c_3 T}\log\left(\frac{2c_2 k}{\delta}\right)\\
           &\stackrel{\mathclap{\cref{eqn:subgauss-lem-assum}}}{\;\;\;\leq\;\;\;} 1.
  \end{aligned}
\end{align}
By the concentration bound for $\rho$-sub-Gaussian ensembles
\cite[theorem 6.5]{wainwrightHighDimensionalStatisticsNonAsymptotic2019} and the bounded covariance condition in \cref{eqn:unitcov},
we obtain the following:
\newcommand{\termoneupperbound}{
  \rho^2 \left(2c_1k + c_3^{-1/2}\log^{1/2}\left(\frac{2c_2 k}{\delta}\right) \right)r^{1/2}T^{1/2}
}
\begin{align} 
  \begin{aligned}[t]
    &\left\|\termonemono\right\| \\
    &\;\;\leq\;\; 
      \rho^2 T^{(k,i)}\left(
      c_1\left(\sqrt{\frac{kr}{T^{(k,i)}}} + \frac{kr}{T^{(k,i)}}\right) + \phi
    \right) \\
    &\stackrel{\mathclap{\cref{eqn:subgauss-phi}}}{\;\;\leq\;\;} \rho^2 \left(c_1kr + \left(c_1k^{1/2}r^{1/2} + c_3^{-1/2}\log^{1/2}\left(\frac{2c_2 k}{\delta}\right)\right)\left(T^{(k,i)}\right)^{1/2} \right)\\
    &
      \stackrel{\mathclap{\cref{eqn:Tki-bound}}}{\;\;\leq\;\;}
      \rho^2 \left(c_1kr + \left(c_1k^{1/2}r^{1/2} + c_3^{-1/2}\log^{1/2}\left(\frac{2c_2 k}{\delta}\right)\right)\left(\frac{T}{k}\right)^{1/2} \right)
    \\
    &
      \stackrel{\mathclap{\cref{eqn:subgauss-lem-assum},r\geq 1}}{\;\;\leq\;\;}
      \rho^2 \left(c_1kr^{1/2}T^{1/2} + \left(c_1k^{1/2}r^{1/2} + c_3^{-1/2}r^{1/2}\log^{1/2}\left(\frac{2c_2 k}{\delta}\right)\right)\left(\frac{T}{k}\right)^{1/2} \right)
    \\
    &
      \stackrel{k\geq 1}{\;\;\leq\;\;}
      \termoneupperbound
\end{aligned} 
\label{eqn:subgauss-bound1}
\end{align}
with probability at least
\begin{align}\label{eqn:term1-tail}
  \begin{aligned}[t]
    1-c_2e^{-c_3 T^{(k,i)}\min\{\phi, \phi^2\}}
    & \stackrel{\mathclap{\cref{eqn:subgauss-phi-observe}}}{=} 1-c_2e^{-c_3 T^{(k,i)}\phi^2}\\
    & \stackrel{\mathclap{\cref{eqn:subgauss-phi}}}{=} 1-\frac{\delta}{2k},
  \end{aligned}
\end{align}
 where $c_1,c_2,c_3\in\mathbb{R}_{>0}$ are universal constants 
 introduced in \cite[theorem 6.5]{wainwrightHighDimensionalStatisticsNonAsymptotic2019}.

\paragraph{Bounds on $\Delta_{2,k}$}
Fix the constant
\begin{align}\label{eqn:subgauss-phi-prime}
  \phi' = \max\left(c_3^{-1}\log\left(\frac{2c_2 k}{\delta}\right), 1\right),
\end{align}
which satisfies
\begin{align}\label{eqn:subgauss-phi-prime-observe}
  \min(\phi', \phi'^2) \geq c_3^{-1}\log\left(\frac{2c_2 k}{\delta}\right),
\end{align}
where we have $\phi' \geq 1$ from \cref{eqn:subgauss-phi-prime}, so $ \min(\phi', \phi'^2) = \phi'$,
and the inequality follows.
Similar to $\Delta_{1,k}$, we bound the $\Delta_{2,k}$ with 
the concentration bound for sub-Gaussian ensembles
\cite[theorem 6.5]{wainwrightHighDimensionalStatisticsNonAsymptotic2019}  and the unit covariance obtained from \cref{eqn:unitcov}:
\newcommand{\termtwoupperbound}{
  \rho^2\left(
    2c_1k +
    \max\left(c_3^{-1}\log\left(\frac{2c_2 k}{\delta}\right), 1\right)
  \right) r^{1/2}T^{1/2}}
\begin{align} 
  \begin{aligned}[t]
    &\left\|\termtwomono\right\|\\
      &\;\;\leq\;\; \rho^2\left(c_1\left\{\sqrt{kr} + kr\right\} + \phi' \right)\\
    &\stackrel{\mathclap{\cref{eqn:subgauss-phi-prime}}}{\;\;\leq\;\;}
      \rho^2\left(
    c_1kr +
    c_1k^{1/2}r^{1/2}+
    \max\left(c_3^{-1}\log\left(\frac{2c_2 k}{\delta}\right), 1\right)
  \right)\\
    &\stackrel{\mathclap{\cref{eqn:subgauss-lem-assum},r\geq 1}}{\;\;\leq\;\;}
        \rho^2\left(
    c_1k +
    c_1k^{1/2}+
      \max\left(c_3^{-1}\log\left(\frac{2c_2 k}{\delta}\right), 1\right)
      \right) r^{1/2}T^{1/2}\\
    &\stackrel{k\geq 1}{\;\;\leq\;\;}
      \termtwoupperbound      
\end{aligned}
\label{eqn:subgauss-bound2}
\end{align}
with probability at least
\begin{align}\label{eqn:term2-tail}
  \begin{aligned}[t]
    1-c_2e^{-c_3 \min\{\phi', \phi'^2\}}
    & \stackrel{\mathclap{\cref{eqn:subgauss-phi-prime},\cref{eqn:subgauss-phi-prime-observe}}}{\;\;\;\geq\;\;\;} 1-c_2e^{-\log\left(\frac{2c_2 k}{\delta}\right)} \\
    & \;\;\;=\;\;\; 1-\frac{\delta}{2k},
  \end{aligned}
\end{align}

\paragraph{Combining terms together}
Combining $\Delta_{1,k}$ and $\Delta_{2,k}$ and using the triangle inequality,
we obtain
\begin{align}\label{eqn:subGauss-bound}
  \begin{aligned}[t]
    &\left\|\tail\right\|\\
    &\stackrel{\mathclap{\cref{eqn:subgauss-bound1},\cref{eqn:subgauss-bound2}}}{\;\;\;\leq\;\;\;}
      \sum_{i=0}^{k-1}\termoneupperbound
    \\
    &\qquad +
      \sum_{i=0}^{k-1}\termtwoupperbound \\
    &\;\;\;\leq\;\;\;
      \subgaussepsilon,
  \end{aligned}
\end{align}
with probability at least 
\begin{align*}
  1-\sum_{i=0}^{k-1} \frac{\delta}{2k} - \sum_{i=0}^{k-1} \frac{\delta}{2k} = 1-\delta
\end{align*}
due to \cref{eqn:term1-tail,eqn:term2-tail}.
\end{proof}



Based on \Cref{lem:sub-gausstail,lem:main}, we are now ready to prove \Cref{thm:subgauss}.
{
  \def\proofname{Proof of \Cref{thm:subgauss}}
  \begin{proof}
    We choose $k$ as defined in \cref{eqn:main-lem-assum-k},
    and suppose $T$ is satisfies the following,
    \begin{align} \label{eqn:subgauss-burninT}
      \begin{aligned}
      T \geq \subgaussburninT.
      \end{aligned}
    \end{align}
    We seek to apply \Cref{lem:sub-gausstail} since it provides an upper bound on $\epsilon$.
    Assumptions of \Cref{lem:sub-gausstail} are satisfied due to \cref{eqn:subgauss-burninT},
    so we have
    \begin{align}
      &\begin{aligned}[t]
        &\epsilon \leq \subgaussepsilon
      \end{aligned}
    \end{align}
    with probability at least $1-\delta$. 
    We now provide the final bound using \Cref{lem:main}.
    We observe that 
    \begin{align*}
       \subgaussepsilon\leq \epsbound,
    \end{align*}
    by \cref{eqn:subgauss-burninT}. 
    Therefore,
    \Cref{assum:tail} is satisfied with $\epsilon$ satisfying \cref{eqn:main-lem-assum-eps}.
    Finally,
    \cref{eqn:main-lem-assum-T} is satisfied due to \cref{eqn:subgauss-burninT}, confirming
    that all conditions in \cref{eqn:main-lem-assum} hold.
    Therefore, we may apply \Cref{lem:main}, which indicates that
    \begin{align} \label{eqn:subgauss-final-result}
      \left\|\widehat{\Theta} - \Theta\right\|
      &\leq  \subgaussfinalbound.
    \end{align}
    By inspecting \cref{eqn:subgauss-burninT,eqn:subgauss-final-result}, 
    one can see that there exist universal polynomials $\polyGburn(\cdot),\polyGbound(\cdot)$ 
    such that \cref{eqn:subgauss-burninT,eqn:subgauss-final-result} hold under \cref{eqn:subgauss-assum-poly,eqn:subgauss-result-poly}.
    Specifically, the terms in $\polyGburn(\cdot)$ comes from breaking down the terms in \cref{eqn:subgauss-burninT}
    and replacing $k$ with its polynomial equivalence from the definition \cref{eqn:main-lem-assum-k}. 
    We note that $k$ depends polynomially on $\log(T)$. 
    Since $\log(T)$ grows more slowly than any positive power of $T$, 
    this dependence can be absorbed into a polynomial dependence on $T$ for sufficiently large $T$.
    $\polyGbound(\cdot)$ in \cref{eqn:subgauss-result-poly} can also be 
    derived from \cref{eqn:subgauss-final-result} using the same approach,
    thus completing the proof.
  \end{proof}
}

\section{Sub-exponential distributions}\label{sec:thm-subexp}
We establish the sample complexity bound for sub-exponential noise distributions
in this section. We refer to \cite[Definition 2.7]{wainwrightHighDimensionalStatisticsNonAsymptotic2019}  
for the definition of sub-exponential distributions stated below:
\begin{defn} \label{def:sub-exponential}
  An $r$-dimensional, zero-mean \gls*{rv} $X$ is called $(\nu,\zeta)$-sub-exponential with $\nu,\zeta \in\mathbb{R}_{>0}$ if
  \begin{align*}
    \mathbb{E}[e^{\lambda\langle X, u\rangle}] \leq e^{\nu^2 \lambda^2/2},\quad\forall u\in\mathbb{S}^{r-1},\quad |\lambda|\leq \frac{1}{\zeta}.
  \end{align*}
\end{defn}

\newcommand{\subexplembounds}{
  (6 + \sqrt{2})^8,
  4k,
  r,
  \frac{\delta e^{\nu^2/2\zeta^2}}{4rk},
  \frac{\delta e^{1/4\zeta}}{4rk},
  1024 \zeta^4 k^4 r^2\log^6\left(\frac{8rk^2}{\delta}\right)
}
\newcommand{\subexpthmboundone}{
  \frac{2^{24}\zeta^4 k^4}{\beta^8} \log^{6}\left(\frac{8rk^2}{\delta}\right) \left(\frac{\sigma^2 L^2}{1-\alpha^2} + \frac{\beta^2}{8}\right)^{4}r^{2}
}
\newcommand{\subexpthmboundtwo}{
  \frac{2^{16}\zeta^4 k^4}{\beta^4}\log^6\left(\frac{8rk^2}{\delta}\right)\left(\frac{\sigma^2 L^2}{1-\alpha^2} + \frac{\beta^2}{8}\right)^2 r^{2}
}
\newcommand{\subexpfinalbound}{
  \frac{ 8 \sigma^2 L}{ (1-\alpha) \beta^2 } \left(
  \left(
    1 + 4k\zeta \log\left(\frac{4rkT}{\delta}\right)\log^{1/2}\left(\frac{8rk^2}{\delta}\right)
  \right)^2 + 1 
  \right)
  r^{1/2}T^{-1/2}
}
We now introduce our first theorem on sample complexity bound for sub-exponential noise.
\begin{thm}[Sample Complexity for Sub-Exponential Noise]\label{thm:subexp}
  Suppose that \Cref{assum:main} holds,
  and $\xi_t$ is $(\nu,\zeta)$-sub-exponential with $\nu,\zeta\in \mathbb{R}_{>0}$, 
  for all $t=0,1,\cdots,T$. 
  There exist universal polynomials $\polyEburn(\cdot)$ and $\polyEbound(\cdot)$ such that for given $\delta\in(0,1)$, if 
  \begin{align}
    \label{eqn:subexp-thm-assum-poly}
    T \geq \polyEburn\left(\mathcal{S} \cup  \left\{r, \log(1/\delta), e^{\nu^2/2\zeta^2}, e^{1/4\zeta}, \zeta \right\}\right),
  \end{align}
  then the following holds with probability at least $1-\delta$:
  \begin{align}
    \left\|\widehat{\Theta} - \Theta\right\|
    \leq
    \polyEbound\left(\mathcal{S} \cup \left\{ \log(T), \log(r), \log(1/\delta), \zeta\right\}\right) r^{1/2}T^{-1/2},
    \label{eqn:subexp-thm}
  \end{align}
  where
  $r,T,\Theta$ are defined in \cref{eqn:sys},
  $\widehat{\Theta}$ is defined in \cref{eqn:est},
  $L, \alpha, \beta, \sigma, \overline{t}_c$ are defined in \Cref{assum:main},
  and $\mathcal{S}$ is defined in \cref{eqn:cal-S}.
\end{thm}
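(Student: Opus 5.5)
The plan mirrors the proof of \Cref{thm:subgauss}. First we establish a sub-exponential analogue of \Cref{lem:sub-gausstail}, giving an explicit $\epsilon$ in \Cref{assum:tail}. Then we verify \cref{eqn:main-lem-assum} for sufficiently large $T$ and invoke \Cref{lem:main}.

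We use the $k$-blocking decomposition \cref{eqn:emp-cov-decomp}. For each $i$, the block vectors $\bxi^{(k,i)}_\tau$, $\tau\ge1$, are i.i.d.\ in $\mathbb{R}^{kr}$ with identity covariance, by \cref{eqn:unitcov}. Since sub-exponential ensembles lack a clean analogue of \cite[Theorem 6.5]{wainwrightHighDimensionalStatisticsNonAsymptotic2019}, we truncate. Each scalar coordinate $\langle e_j,\xi_t\rangle$ is $(\nu,\zeta)$-sub-exponential, so a union bound over the $rk$ coordinates of all $T$ time indices gives, with probability at least $1-\delta/2$,
\[
\max_{t}\|\xi_t\|_\infty\le 2\zeta\log(4rkT/\delta).
\]
This step needs $\log(4rkT/\delta)$ to dominate $\nu^2/\zeta$. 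That requirement is where the burn-in conditions $T\ge \delta e^{\nu^2/2\zeta^2}/(4rk)$ and $T\ge\delta e^{1/4\zeta}/(4rk)$ come from. On this event each block satisfies
\[
\|\bxi^{(k,i)}_\tau\|\le b:=2\zeta\log(4rkT/\delta)\,(kr)^{1/2}.
\]
We then apply the matrix Bernstein / bounded-ensemble bound \cite[Corollary 6.20]{wainwrightHighDimensionalStatisticsNonAsymptotic2019} to the truncated blocks $\underline{\bxi}^{(k,i)}_\tau$, using failure probability $\delta/(4k)$ per group and the dimension factor $\log(8rk^2/\delta)$. This yields
\[
\|\Delta_{1,k}\|\lesssim b\,\log^{1/2}(8rk^2/\delta)\,(T/k)^{1/2}+b^2\log(8rk^2/\delta).
\]
For each single leftover term $\Delta_{2,k}$ we use the crude bound $\|\Delta_{2,k}\|\le b^2+1$.

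Two points remain. The first is the bias created by truncation: $\mathbb{E}[\underline{\bxi}\underline{\bxi}^\top]$ differs from $\mathbb{E}[\bxi\bxi^\top]$. We would control this difference with Cauchy--Schwarz together with the exponential tail, which makes it $O(\delta/T)$, negligible once $T$ is large. The second is that the truncated and untruncated sums coincide on the event above. Summing over the $k$ groups and absorbing lower-order terms using $T\ge r$, $T\ge 4k$, and the condition $T\ge 1024\zeta^4k^4r^2\log^6(\cdot)$ (which makes the $b^2$ terms at most order $b\,r^{1/2}T^{1/2}$), we aim for
\[
\epsilon\;\le\;\bigl(1+4k\zeta\log(4rkT/\delta)\log^{1/2}(8rk^2/\delta)\bigr)^2 r^{1/2}T^{1/2}.
\]
The two failure events together have probability at most $\delta$.

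To finish, we require $\epsilon\le$ \cref{eqn:main-lem-assum-eps}. Because $\epsilon/T$ scales like $\log^{3}(T)\,T^{-1/2}$, this holds for $T$ polynomial in $\mathcal{S}\cup\{r,\log(1/\delta),\zeta\}$; the $\log T$ factors, including those hidden in $k$ through \cref{eqn:main-lem-assum-k}, are absorbed as in the sub-Gaussian proof. \Cref{lem:main} then gives an error of order $\frac{8\sigma^2L}{(1-\alpha)\beta^2}\bigl((\cdots)^2+1\bigr)r^{1/2}T^{-1/2}$, and reading off the polynomials gives $\polyEburn$ and $\polyEbound$.

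The main obstacle is the truncation step. The delicate part is bounding the truncation bias and keeping the $b^2$ term subordinate to $T^{1/2}$ while producing only polylogarithmic dependence on $T$. If the bias bound is awkward, we would instead take the truncation level a constant factor larger, so that the bias becomes $O(1)$ and is absorbed into $\epsilon$.
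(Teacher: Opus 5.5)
Your proposal is correct and follows essentially the paper's route. You use $k$-blocking and truncate each block at radius $2\zeta\sqrt{kr}\log(4rkT/\delta)$ via the coordinatewise sub-exponential tail and a union bound, which is where the $e^{\nu^2/2\zeta^2}$ and $e^{1/4\zeta}$ burn-in terms enter. You then apply Wainwright's Corollary 6.20 to each of the $k$ groups with failure probability $\delta/(4k)$, and finish with \Cref{lem:main}, absorbing the $\log T$ dependence of $k$ into the polynomials.

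Your explicit treatment of the truncation bias $\mathbb{E}[\overline{\bxi}\,\overline{\bxi}^\top]$ is in fact more careful than the paper, whose final triangle-inequality step silently drops that term. Note, however, that Cauchy--Schwarz alone gives only a bias of order $kr\sqrt{\delta/T}$ per block (up to $\nu,\zeta$ factors), which after summing over $T$ blocks costs an extra factor $r^{1/2}$. So either integrate the exponential tail directly to obtain the $O(\delta/T)$ per block that you want, or use your fallback of a constant-factor larger truncation radius.
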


Our estimation error is bounded by $\widetilde{\bigO}((r/T)^{1/2})$ 
with probability at least $1-\delta$ for sufficiently large $T$. 
This result can be interpreted as the following. 
For sufficiently large $T$,
when $\delta \rightarrow 0$,
the estimate error becomes larger,
which comes from $\log(1/\delta)$ term in \cref{eqn:subexp-thm}.
This means that the failure probability is higher
for \gls*{ols} to produce a parameter estimate that has a better accuracy
with respect to the true parameter.
The estimation error also increases polynomially with $\zeta$. 
By \Cref{def:sub-exponential},
increasing $\zeta$ implies the tail of sub-exponential distributions becomes increasingly
heavier than the tail of a Gaussian distribution.
Our result is similar to that of the sub-Gaussian distributions 
from \cite{ziemannTutorialNonAsymptoticTheory2024},
albeit with an extra $\log(T)$ factor.

The proof is detailed below,
and we provide a sketch here. 
The derivation involves finding the explicit form of $\epsilon$
using \Cref{lem:sub-exptail} and applying this result to \Cref{lem:main}
for the final sample complexity bound. 
In \Cref{lem:sub-exptail}, $\epsilon$ is obtained 
by applying the general concentration inequality 
\cite[corollary 6.20]{wainwrightHighDimensionalStatisticsNonAsymptotic2019} 
separately to $\Delta_1$ and $\Delta_2$ in \cref{eqn:emp-cov-decomp}. 
Since the corollary assumes bounded distributions
and sub-exponential distributions generally do not satisfy this property, 
we first ``clip'' the covariance so that it can be decomposed into a bounded part 
and an unbounded tail part, 
where the tail has a small probability due to the sub-exponential property
established in \Cref{prop:sub-exponential-tail}.

\begin{remark}
  Our result on sub-exponential distribution is generalizable to 
  sub-Weibull distributions addressed in \cite{Faradonbeh2017-ju}.
  Specifically, \cite[proposition 3]{Faradonbeh2017-ju} indicates that the tail of sub-Weibull distributions 
  is bounded exponentially.
  Adapting this bound in \Cref{thm:subexp} will achieve $\widetilde{\bigO}(r^{1/2}T^{-1/2})$, 
  which they arrived at for exponentially stable systems.
\end{remark}

We now discuss the formal proof. The structure will follow closely to \Cref{sec:thm-subgauss}.
First, we state a result that bounds the tail behavior of a sub-exponential distribution.
\begin{prop}\label{prop:sub-exponential-tail}
  An $r$-dimensional zero-mean $(\nu,\zeta)$-sub-exponential \gls*{rv} $X$ with $\nu,\zeta\in \mathbb{R}_{>0}$ satisfies the following:
  \begin{align}\label{eqn:sub-exponential-tail}
    \mathbb{P}[ \|X \|\geq t ] \leq 
    2r e^{-\min\left\{\frac{t}{2\zeta\sqrt{r}}, \frac{t^2}{2\nu^2r}\right\}}
  \end{align}
  \label{prop:subexpbound}
\end{prop}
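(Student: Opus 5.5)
The plan is to reduce the vector tail bound to scalar tail bounds along the coordinate directions, and then take a union bound over the $r$ coordinates. Write $X=(X_1,\dots,X_r)$ with $X_j=\langle X,e_j\rangle$, where $e_j$ is the $j$th standard basis vector in $\mathbb{S}^{r-1}$. Since $\|X\|^2=\sum_{j=1}^r X_j^2$, the event $\|X\|\geq t$ forces $X_j^2\geq t^2/r$ for at least one $j$, that is, $|X_j|\geq t/\sqrt{r}$. Hence
\begin{align*}
\mathbb{P}[\|X\|\geq t]\leq \sum_{j=1}^r \mathbb{P}\left[|\langle X,e_j\rangle|\geq t/\sqrt{r}\right].
\end{align*}
This accounts for the factor $r$ in front and for the replacement of $t$ by $t/\sqrt{r}$ inside the exponent of \cref{eqn:sub-exponential-tail}.

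Next, fix $u=e_j$. By \Cref{def:sub-exponential}, the scalar variable $Y=\langle X,u\rangle$ satisfies $\mathbb{E}[e^{\lambda Y}]\leq e^{\nu^2\lambda^2/2}$ for all $|\lambda|\leq 1/\zeta$. This is exactly the scalar sub-exponential condition, so I apply the standard one-sided Chernoff argument \citep[Proposition 2.9]{wainwrightHighDimensionalStatisticsNonAsymptotic2019}. For $s>0$ and $\lambda\in(0,1/\zeta]$, Markov's inequality gives
\begin{align*}
\mathbb{P}[Y\geq s]\leq e^{-\lambda s+\nu^2\lambda^2/2}.
\end{align*}
I optimize over $\lambda$ in two cases. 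If $s\leq \nu^2/\zeta$, take $\lambda=s/\nu^2$, which gives $e^{-s^2/(2\nu^2)}$. If $s>\nu^2/\zeta$, take $\lambda=1/\zeta$; then $\nu^2/\zeta^2< s/\zeta$, so the bound is at most $e^{-s/(2\zeta)}$. In both cases
\begin{align*}
\mathbb{P}[Y\geq s]\leq e^{-\min\{s/(2\zeta),\,s^2/(2\nu^2)\}}.
\end{align*}
The same bound holds for $-Y$, because the mgf condition is symmetric in $\lambda$. Therefore $\mathbb{P}[|Y|\geq s]\leq 2e^{-\min\{s/(2\zeta),\,s^2/(2\nu^2)\}}$.

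Finally, substitute $s=t/\sqrt{r}$. This gives $s/(2\zeta)=t/(2\zeta\sqrt{r})$ and $s^2/(2\nu^2)=t^2/(2\nu^2 r)$. Summing over the $r$ coordinates yields \cref{eqn:sub-exponential-tail}. Every step is routine; the only place requiring care is the case split in the Chernoff optimization, where the constraint $\lambda\leq 1/\zeta$ must be respected. That split produces the $\min$ of the linear and quadratic regimes.
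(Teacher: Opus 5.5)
Your proof is correct and follows the same route as the paper: a union bound over the $r$ coordinates (since $\|X\|\geq t$ forces some $|X_j|\geq t/\sqrt{r}$), combined with the scalar sub-exponential tail bound of Wainwright's Proposition 2.9 at level $s=t/\sqrt{r}$. The only difference is that you re-derive that scalar bound through the Chernoff case split, whereas the paper cites it; your derivation also shows directly that the piecewise bound matches the $\min$ form.
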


\begin{proof}
  We observe that
  \begin{align*}
    \|X\| \geq t \Longrightarrow |X_i| \geq \frac{t}{\sqrt{r}} \text{ for some } i=1,\ldots,r.
  \end{align*}
  Therefore,
  \begin{align}\label{eqn:sub-exponential-tail-sum}
    \begin{aligned}[t]
      \mathbb{P}[\|X\| \geq t] 
      &\leq 
        \mathbb{P}\left[\bigcup_{i=1}^{r}
        \left(|X_i| \geq \frac{t}{\sqrt{r}}\right)
        \right]\\
      &\leq \sum_{i=1}^r \mathbb{P}\left[|X_i| \geq \frac{t}{\sqrt{r}}\right].
    \end{aligned}
  \end{align}
  Since $X$ is $(\nu,\zeta)$-sub-exponential, 
  it follows that the elements of $X$ are also $(\nu,\zeta)$-sub-exponential.
  By \cite[proposition 2.9]{wainwrightHighDimensionalStatisticsNonAsymptotic2019}, we have
  \begin{align}\label{eqn:sub-exponential-tail-1d}
    \mathbb{P}\left[|X_i| \geq \frac{t}{\sqrt{r}}\right] \leq 
    \begin{cases*}
      2e^{-t^2/2\nu^2 r} \text{ for } 0 \leq t \leq \nu^2\sqrt{r}/\zeta \\
      2e^{-t/2\zeta\sqrt{r}} \text{ for } t>\nu^2\sqrt{r}/\zeta.
    \end{cases*}    
  \end{align}
  By combining \cref{eqn:sub-exponential-tail-sum,eqn:sub-exponential-tail-1d}, 
  we obtain 
  \begin{align}
    \mathbb{P}[ \|X \|\geq t ] \leq 
    \begin{cases*}
      2r e^{-t^2 / 2\nu^2 r}, \quad 0 \leq t \leq \nu^2\sqrt{r}/\zeta \\
      2r e^{-t / 2\zeta\sqrt{r}}, \quad t>\nu^2\sqrt{r}/\zeta.
    \end{cases*}
  \end{align}
  which is equivalent to \cref{eqn:sub-exponential-tail}.
\end{proof}

\newcommand{\subexpepsilonone}{
  4k\zeta \log\left(\frac{4rkT}{\delta}\right)\log^{1/2}\left(\frac{8rk^2}{\delta}\right)r^{1/2}T^{1/2}
}
\newcommand{\subexpepsilontwo}{
  16k^2 \zeta^2 \log^2\left(\frac{4rkT}{\delta}\right)\log\left(\frac{8rk^2}{\delta}\right)r
}  

\newcommand{\subexpepsilon}{
  \subexpepsilonone + \subexpepsilontwo
}

Now we are ready to derive the tail bound of sub-exponential distribution 
and find $\epsilon$ explicitly in \Cref{lem:sub-exptail}. Our proof heavily relies on 
the concentration inequality for general matrices 
as stated in \cite[corollary 6.20]{wainwrightHighDimensionalStatisticsNonAsymptotic2019},
which requires bounded noise vectors.
This is where \cref{prop:subexpbound} comes in to 
bound the probability of large noise perturbation.
\begin{lem}[Tail Bound for Sub-Exponential \glspl*{rv}]\label{lem:sub-exptail}
  Under \Cref{assum:iid}, $k$ as defined in \cref{eqn:main-lem-assum-k}, 
  and $\delta\in(0,1)$, if $\xi_t$ is $(\nu,\zeta)$-sub-exponential for all $t=0,1,\cdots,T$,
  and
  \begin{align}
    T \geq \max\left\{ 
    \subexplembounds
    \right\}
    \label{eqn:subexp-assum}
  \end{align}
  the following holds with probability at least $1-\delta$:
  \begin{align*}
    \begin{aligned}[t]
      &\left\|\tail\right\|\\
      &\qquad\leq \subexpepsilon.
    \end{aligned}
  \end{align*}
\end{lem}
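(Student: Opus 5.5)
The plan is to reuse the blocking decomposition in \cref{eqn:emp-cov-decomp} and bound each $\Delta_{1,k}$ and $\Delta_{2,k}$ with failure probability at most $\delta/(2k)$. A union bound over the $2k$ pieces then gives total failure probability $\delta$. As in the proof of \Cref{lem:sub-gausstail}, $T\ge 4k$ gives $T/2k\le T^{(k,i)}\le T/k$. Within each group $i$ the vectors $\bxi^{(k,i)}_\tau\in\mathbb{R}^{kr}$, $\tau\ge 1$, are i.i.d. with identity covariance, by \cref{eqn:unitcov}. Since \cite[Corollary 6.20]{wainwrightHighDimensionalStatisticsNonAsymptotic2019} needs bounded vectors, we truncate the augmented vectors.

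\textbf{Step 1 (truncation event).} Set a threshold $t_0=2\zeta\sqrt{r}\log(4rkT/\delta)$. By \Cref{prop:sub-exponential-tail}, once $t_0$ lies in the linear regime of the tail, $\mathbb{P}[\|\xi_t\|\ge t_0]\le 2re^{-t_0/(2\zeta\sqrt r)}=\delta/(2kT)$. The lower bounds on $T$ involving $e^{\nu^2/2\zeta^2}$ and $e^{1/4\zeta}$ in \cref{eqn:subexp-assum} are what place $t_0$ in the regime where the minimum in \cref{eqn:sub-exponential-tail} is the linear term; checking this is routine. A union bound over the at most $T+1$ noise vectors shows that, except with probability about $\delta/2$, every $\|\xi_t\|\le t_0$. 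On this event $G$, every augmented vector satisfies $\|\bxi^{(k,i)}_\tau\|\le \sqrt{k}\,t_0=:b$.

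\textbf{Step 2 (concentration of the truncated ensemble).} Define the truncated vectors $\tilde{\bxi}_\tau=\bxi^{(k,i)}_\tau\mathds{1}(\|\bxi^{(k,i)}_\tau\|\le b)$. These are still i.i.d. within group $i$, bounded by $b$, and satisfy $\|\mathbb{E}[\tilde{\bxi}\tilde{\bxi}^\top]\|\le 1$. Apply Corollary 6.20 to their empirical covariance with deviation level $\phi$ chosen so that the failure probability $2kr\exp(-n\phi^2/(2b^2(1+\phi)))$ equals $\delta/(4k)$. With $n=T^{(k,i)}\approx T/k$ and $\phi\le 1$, this gives
\[
n\phi\lesssim b\sqrt{n\log(8rk^2/\delta)}+b^2\log(8rk^2/\delta).
\]
Substituting $b^2=4k\zeta^2 r\log^2(4rkT/\delta)$ and summing over $i=0,\dots,k-1$ produces exactly the two terms of the claimed $\epsilon$: the $r^{1/2}T^{1/2}$ term and the $r$ term. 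The condition $T\ge 1024\zeta^4k^4r^2\log^6(\cdot)$ is used to ensure $\phi\le 1$, or to absorb the lower-order term.

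\textbf{Step 3 (bias from truncation and leftover terms).} On $G$ the truncated and untruncated empirical sums coincide. It remains to control the difference $\mathbb{E}[\bxi\bxi^\top]-\mathbb{E}[\tilde{\bxi}\tilde{\bxi}^\top]=\mathbb{E}[\bxi\bxi^\top\mathds{1}(\|\bxi\|>b)]$. Its norm is at most $\int_b^\infty 2s\,\mathbb{P}[\|\bxi\|>s]\,ds$, which is exponentially small by \Cref{prop:sub-exponential-tail}; multiplied by $T$, it is $O(1)$ and is absorbed into the main term using the condition $T\ge(6+\sqrt2)^8$. Each single-matrix term $\Delta_{2,k}$ satisfies, on $G$, $\|\Delta_{2,k}\|\le b^2+1$. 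Summed over the $k$ groups this is dominated by the $r$-term, up to constants that I would fold in.

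\textbf{Main obstacle.} The hardest part is the bookkeeping in Step 3 and in the constants of Step 2. The exponential-regime conditions and the $T$ lower bounds must make the truncation bias and the $\Delta_2$ remainder fit inside the stated $\epsilon$ with exactly the displayed constants. The failure probabilities must also be split so that they sum to $\delta$: $\delta/2$ for the truncation event and $\delta/(2k)$ per group for the concentration step. I would first carry out the argument with loose constants and then tune $t_0$ and $\phi$ to match the stated expression.
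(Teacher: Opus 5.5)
Your plan has the same structure as the paper's proof. Your truncation radius matches: your $b^2=4k\zeta^2r\log^2(4rkT/\delta)$ is exactly the paper's $b$. You use the same $\phi$ in Corollary 6.20 on each of the $k$ i.i.d.\ groups, which gives the $r^{1/2}T^{1/2}$ term with its exact constant, and you finish with a union bound.

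It differs from the paper in three places:
\begin{itemize}
\item You get the good event $G$ by a union bound over the individual $\xi_t$, rather than by applying \Cref{prop:sub-exponential-tail} to the $rk$-dimensional augmented vectors. The radius is the same and the failure probability is smaller.
\item You bound $\Delta_{2,k}$ deterministically by $b^2+1$ on $G$, instead of invoking Corollary 6.20 with $n=1$.
\item Most importantly, you account for the truncation bias $\mathbb{E}[\bxi\bxi^\top\mathds{1}(\|\bxi\|>b)]$. The paper's combining step treats $\|\sum_t\overline{\bxi}_{t:t-k+1}\overline{\bxi}_{t:t-k+1}^\top-\mathbb{E}[\sum_t\overline{\bxi}_{t:t-k+1}\overline{\bxi}_{t:t-k+1}^\top]\|$ as zero on the no-overflow event, although its expectation part is nonzero.
\end{itemize}
Your Step~3 closes that gap. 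The deterministic $\Delta_2$ bound is what makes room for it: $k(b^2+1)$ uses only a small fraction, roughly $1/(4\log(8rk^2/\delta))$, of the $r$-term budget $4kb^2\log(8rk^2/\delta)$.

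A few details need fixing when you write it out.
\begin{itemize}
\item The tail identity is $\mathbb{E}[X^2\mathds{1}(X>b)]=b^2\,\mathbb{P}[X>b]+\int_b^\infty 2s\,\mathbb{P}[X>s]\,ds$, so your bound is missing the first term.
\item After multiplying by $T$, the bias is at most $\delta(b^2/2+cb+c^2)$ with $c=2\zeta\sqrt{rk}$. This is of order $b^2$, not $O(1)$, but it still fits in the slack described above.
\item The condition $T\ge(6+\sqrt2)^8$ is not used for the bias. It enters via $\log(aT)\le T^{1/4}\log a$, together with $T\ge 1024\zeta^4k^4r^2\log^6(8rk^2/\delta)$, to guarantee $\phi\le 1$. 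That is what gives the first term with its stated constant and leaves the whole $r$-term for $\Delta_2$ and the bias.
\item Only the $e^{\nu^2/2\zeta^2}$ condition places $t_0$ in the linear tail regime. The $e^{1/4\zeta}$ condition gives $b^2\ge rk/4$, which absorbs the $+k$ in $k(b^2+1)$.
\item Make the probability budget consistent. Use $\mathbb{P}[G^c]\le (T+1)\delta/(2kT)\le\delta/2$, which holds since $k\ge 2$. Add $\delta/(2k)$ for each of the $k$ concentration events. $\Delta_2$ then costs nothing, since it is bounded deterministically on $G$.
\end{itemize}
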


\begin{proof}
  Our goal is to apply the concentration inequality 
  provided by \cite[corollary 6.20]{wainwrightHighDimensionalStatisticsNonAsymptotic2019}
  which assumes that $\bxi_\tau^{(k,i)}$ is bounded.
  Our overarching idea of the proof is to differentiate between 
  bounded augmented noise vectors and the ones that are not. 
  For the bounded augmented noise vectors, we can directly apply the concentration inequality,
  and for the augmented noise vectors that are not bounded,
  we will show that the probability of this case is so small that it's becomes negligible
  using \Cref{prop:subexpbound}.

  Formally, for given $b>0$, we consider the following
  \begin{align*}
    \begin{aligned}
      \underline{\bxi}_{t:t-k+1} &= \bxi_{t:t-k+1} \cdot\mathds{1}\left\{\|\bxi_{t:t-k+1}\| \leq \sqrt{b}\right\},\\
      \underline{\bxi}_\tau^{(k,i)} &= \bxi_\tau^{(k,i)} \cdot\mathds{1}\left\{\|\bxi_\tau^{(k,i)}\| \leq \sqrt{b}\right\},
    \end{aligned}
    \quad
    \begin{aligned}
      \overline{\bxi}_{t:t-k+1} &= \bxi_{t:t-k+1} \cdot\mathds{1}\left\{\|\bxi_{t:t-k+1}\| > \sqrt{b}\right\}\\
      \overline{\bxi}_\tau^{(k,i)} &= \bxi_\tau^{(k,i)} \cdot\mathds{1}\left\{\|\bxi_\tau^{(k,i)}\| > \sqrt{b}\right\}.
    \end{aligned}
  \end{align*}
  We now can write
  \begin{align}\label{eqn:clipping}
    \begin{aligned}
      \bxi_{t:t-k+1}\bxi_{t:t-k+1}^\top &= \underline{\bxi}_{t:t-k+1}\underline{\bxi}_{t:t-k+1}^\top + \overline{\bxi}_{t:t-k+1}\overline{\bxi}_{t:t-k+1}^\top,\\
      \bxi_\tau^{(k,i)}(\bxi_\tau^{(k,i)})^\top &= \underline{\bxi}_\tau^{(k,i)}(\underline{\bxi}_\tau^{(k,i)})^\top + \overline{\bxi}_\tau^{(k,i)}(\overline{\bxi}_\tau^{(k,i)})^\top.
    \end{aligned}
  \end{align}

  The covariance of $\underline{\bxi}_\tau^{(k,i)}$ is bounded by the identity matrix. That is,
  \begin{align}\label{eqn:subexp-cov-bounded}
    \begin{aligned}
      \mathbb{E}\left[\underline{\bxi}_\tau^{(k,i)}(\underline{\bxi}_\tau^{(k,i)})^\top\right]
      &\preceq
        \mathbb{E}\left[\bxi_\tau^{(k,i)}(\bxi_\tau^{(k,i)})^\top\right]
        \stackrel{{\cref{assum:iid}}}{\preceq} I\\
    \end{aligned}
  \end{align}

  \paragraph{Bounding the individual tail probability}
  We now aim to bound the tail probability of $\bxi_{t:t-k+1}$.
  We fix
  \begin{align} 
    b = 4\zeta^2r k\log^2\left(\frac{4rkT}{\delta}\right),
    \label{eqn:subexp-b}
  \end{align}
  which satisfies
  \begin{align}
    \begin{aligned}[t]
      \left(\frac{\sqrt{b}}{2\zeta\sqrt{rk}} \right)/\left( \frac{b}{2\nu^2rk}\right)
      &
        \stackrel{\mathclap{\cref{eqn:subexp-b}}}{=}
        \log\left(\frac{4rkT}{\delta}\right) / \left( \frac{4\zeta^2r k}{2\nu^2rk} \log^2\left(\frac{4rkT}{\delta}\right)\right) \\
      &
        = 
        \frac{\nu^2}{2\zeta^2} \log^{-1}\left(\frac{4rkT}{\delta}\right) \\
      &
        \stackrel{\mathclap{\cref{eqn:subexp-assum}}}{\leq}
        1.
    \end{aligned}
    \label{eqn:subexp-b-observe}
  \end{align}
  From \Cref{prop:subexpbound}, we have
  \begin{align}
    \begin{aligned}[t]
      \mathbb{P}\left[ \left\|\bxi_{t:t-k+1} \right\|\geq \sqrt{b} \right]
      &\leq 2rk e^{-\min\left\{\frac{\sqrt{b}}{2\zeta\sqrt{rk}}, \frac{b}{2\nu^2rk}\right\}}\\
      &\stackrel{\mathclap{\cref{eqn:subexp-b-observe}}}{\leq} 2rk e^{-\frac{\sqrt{b}}{2\zeta\sqrt{rk}}}\\
      &\stackrel{\mathclap{\cref{eqn:subexp-b}}}{\leq} \frac{\delta}{2T}.
    \end{aligned}
    \label{eqn:subexp-prop-b-prob}
  \end{align}

  \paragraph{Bounds on $\Delta_{1,k}$}  
  We fix constant 
  \begin{align}
    \phi = \left(\frac{4b}{T^{(k,i)}}\log\left(\frac{8rk^2}{\delta}\right)\right)^{1/2}
    \label{eqn:subexp-prop-delta-def}
  \end{align}
  and observe the following fact:
  \begin{align}\label{eqn:subexp-assum-log}
    \log(aT) \leq T^{1/4}\log(a) \text{ for all } T \geq (6 + \sqrt{2})^8, a \geq e.
  \end{align}
  Since $4rk/\delta \geq e$ by \cref{eqn:main-lem-assum}, and $T \geq (6 + \sqrt{2})^8$ by \cref{eqn:subexp-assum},
  \begin{align}\label{eqn:subexp-assum-log-apply}
    \log\left(\frac{4rkT}{\delta}\right)
    \stackrel{\cref{eqn:subexp-assum-log},\cref{eqn:subexp-assum},\cref{eqn:main-lem-assum}}{\leq}
    T^{1/4}\log\left(\frac{4rk}{\delta}\right).
  \end{align}
  Therefore, $\phi$ in \cref{eqn:subexp-prop-delta-def} satisfies
  \begin{align}    \label{eqn:subexp-prop-delta-def-observe}
    \begin{aligned}[t]
      \phi^2
      &\stackrel{\cref{eqn:Tki-prac-bound}}{\leq} \frac{8kb}{T}\log\left(\frac{8rk^2}{\delta}\right)\\
      &\stackrel{{\cref{eqn:subexp-b}}}{=}\frac{32\zeta^2k^2 r}{T} \log^2 \left(\frac{4rkT}{\delta}\right)\log\left(\frac{8rk^2}{\delta}\right)\\
      &\stackrel{\cref{eqn:subexp-assum-log-apply}}{\leq} \frac{32 \zeta^2 k^2 r}{T^{1/2} } \log^3\left(\frac{8rk^2}{\delta}\right)\\
      &\stackrel{\cref{eqn:subexp-assum}}{\leq} 1.
    \end{aligned}
  \end{align}  
  Since $\|\underline{\bxi}_\tau^{(k,i)}\|$ is upper bounded, 
  one can apply \cite[corollary 6.20]{wainwrightHighDimensionalStatisticsNonAsymptotic2019} 
  and bounded covariance property in \cref{eqn:subexp-cov-bounded} to obtain
  \begin{align}\label{eqn:subexp-prop-term1-bound}
    \begin{aligned}[t]
      \left\|\termonemonounder\right\|
      &\;\;\leq\;\; \phi T^{(k,i)}\\
      &
        \stackrel{\mathclap{\cref{eqn:subexp-prop-delta-def},\cref{eqn:Tki-bound}}}{\;\;\leq\;\;}
        \left(\frac{4bT}{k}\log\left(\frac{8rk^2}{\delta}\right)\right)^{1/2},
    \end{aligned}
  \end{align}
  which holds with probability at least
  \begin{align}\label{eqn:subex-term1-tail-prob}
    \begin{aligned}[t]
      1-2rke^{-\frac{T^{(k,i)}\phi^2}{2b(1+\phi)}}
      &\stackrel{\mathclap{\cref{eqn:subexp-prop-delta-def-observe}}}{\geq}
        1-2rke^{-\frac{T^{(k,i)}\phi^2}{4b}}\\
      &\stackrel{\mathclap{\cref{eqn:subexp-prop-delta-def}}}{\geq}
        1 - \frac{\delta}{4k}.
    \end{aligned}
  \end{align}

  \paragraph{Bounds on $\Delta_{2,k}$}
  We fix
  \begin{align}\label{eqn:subexp-phi-prime}
    \phi' = 4b\log\left(\frac{8rk^2}{\delta}\right).
  \end{align}
  and observe that
  \begin{align}\label{eqn:subexp-phi-prime-observe}
    \begin{aligned}[t]
      \phi' 
      &\stackrel{\mathclap{\cref{eqn:subexp-b}}}{=} 16\zeta^2k r \log^2 \left(\frac{4rkT}{\delta}\right)\log\left(\frac{8rk^2}{\delta}\right)\\
      &\stackrel{\mathclap{\cref{eqn:subexp-assum}}}{\geq} kr \log\left(\frac{8rk^2}{\delta}\right)\\
      &\geq 1,
    \end{aligned}
  \end{align}
  where the last inequality follows from $k,r\geq 1$ and $\delta\in(0,1)$.
  By applying \cite[corollary 6.20]{wainwrightHighDimensionalStatisticsNonAsymptotic2019} and bounded covariance property in \cref{eqn:subexp-cov-bounded}, we obtain
  \begin{align} \label{eqn:subexp-prop-term2-bound}
    \begin{aligned}[t]
      \left\|\termtwomonounder \right\| 
      &\leq \phi'\\ 
      &
        \stackrel{\mathclap{\cref{eqn:subexp-phi-prime}}}{=}
        4b\log\left(\frac{8rk^2}{\delta}\right)
    \end{aligned}
  \end{align}
  with probability at least 
  \begin{align}\label{eqn:subex-term2-tail-prob}
    \begin{aligned}
      1- 2rk e^{-\frac{\phi'^2}{2b(1+\phi')}}
      &\stackrel{\mathclap{\cref{eqn:subexp-phi-prime-observe}}}{\geq}
      1- 2rk e^{-\frac{\phi'}{4b}}\\
      &\stackrel{\mathclap{\cref{eqn:subexp-phi-prime}}}{\geq}
      1 - \frac{\delta}{4k}.
    \end{aligned}
  \end{align}
  
  \paragraph{Combining terms together}
  Combining $\Delta_{1,k}$ and $\Delta_{2,k}$ and using the triangle inequality,
  we obtain
  \begin{align}\label{eqn:subExp-bound}
    \begin{aligned}[t]
      &\left\|\tail\right\|\\
      &\stackrel{\cref{eqn:clipping}}{\qquad\leq\qquad}
        \left\|\sum_{t=1}^T \overline{\bxi}_{t:t-k+1} \overline{\bxi}_{t:t-k+1}^\top - \mathbb{E}\left[\sum_{t=1}^T \overline{\bxi}_{t:t-k+1} \overline{\bxi}_{t:t-k+1}^\top\right]\right\| \\
      &\qquad\qquad+
        \left\|\sum_{i=0}^{k-1} \left(
        \sum_{\tau=1}^{T^{(k,i)}} \underline{\bxi}^{(k,i)}_\tau ( \underline{\bxi}^{(k,i)}_\tau)^\top
        -\mathbb{E}\left[\sum_{\tau=1}^{T^{(k,i)}} \underline{\bxi}^{(k,i)}_\tau ( \underline{\bxi}^{(k,i)}_\tau)^\top\right]
          \right)\right\| \\
      &\qquad\qquad+
        \left\|\sum_{i=0}^{k-1}  \left(\underline{\bxi}^{(k,i)}_{0}(\underline{\bxi}^{(k,i)}_{0})^\top - \mathbb{E}\left[\underline{\bxi}^{(k,i)}_{0}(\underline{\bxi}^{(k,i)}_{0})^\top\right] \right)
        \right\|      
      \\
      & \stackrel{\mathclap{\cref{eqn:subexp-prop-b-prob},\cref{eqn:subexp-prop-term1-bound},\cref{eqn:subexp-prop-term2-bound}}}{\qquad\leq\qquad}
        \sum_{i=0}^{k-1}\left(\frac{4bT}{k}\log\left(\frac{8rk^2}{\delta}\right)\right)^{1/2} + \sum_{i=0}^{k-1} 4b\log\left(\frac{8rk^2}{\delta}\right)\\
      &\stackrel{\cref{eqn:subexp-b}}{\qquad\leq\qquad}
        \subexpepsilonone+\subexpepsilontwo \\
    \end{aligned}
  \end{align}
  with probability at least 
  \begin{align*}
    1- \sum_{t=1}^T \frac{\delta}{2T} -\sum_{i=0}^{k-1} \frac{\delta}{4k} - \sum_{i=0}^{k-1} \frac{\delta}{4k} = 1-\delta.
  \end{align*}
  due to \cref{eqn:subexp-prop-b-prob,eqn:subex-term1-tail-prob,eqn:subex-term2-tail-prob}.
  Here, $\delta/2T$ bounds the probability that $\overline{\bxi}_{t:t-k+1}$ exceeds the bound $\sqrt{b}$,
  which allows us to apply the concentration inequality, and $\delta/4k$ bounds the deviation of empirical covariance
  from the covariance.

\end{proof}
We are now ready to state the sample complexity bound of sub-exponential distribution
by applying \Cref{lem:main}.
{
  \def\proofname{Proof of \Cref{thm:subexp}} 
  \begin{proof} 
    Suppose that we choose $k$ as in \cref{eqn:main-lem-assum-k} and that $T$ satisfies
    \begin{align}
      \label{eqn:subexp-thm-assum}
      T \geq 
      \max
      \Bigg\{
      & 
        \subexplembounds,
        2\overline{t}_c,\\
      &
        \subexpthmboundone,
        \subexpthmboundtwo
        \Bigg\}.\nonumber
    \end{align}
    To provide an upper bound on $\|\widehat{\Theta} - \Theta\|$,
    we will apply $\epsilon$ from \Cref{lem:sub-exptail}.
    Since the conditions of \Cref{lem:sub-exptail}, specifically \cref{eqn:subexp-assum},
    are satisfied by \cref{eqn:subexp-thm-assum},
    we just have to show that the proposed $\epsilon$ from \Cref{lem:sub-exptail} 
    satisfies \cref{eqn:main-lem-assum-eps}:
    \begin{align}\label{eqn:subexp-eps-check-1}
      \begin{aligned}[t]
        \subexpepsilonone&\stackrel{\mathclap{\cref{eqn:subexp-assum-log-apply}}}{\leq}
                           4 \zeta k \log^{3/2}\left(\frac{8rk^2}{\delta}\right) r^{1/2} T^{3/4}
        \\
                         &\leq
                           \frac{1}{2}\cdot\epsbound,
      \end{aligned}
    \end{align}
    where the last inequality follows from
    \begin{align*}
      \subexpthmboundone
      \stackrel{\mathclap{\cref{eqn:subexp-thm-assum}}}{\leq}
      T.
    \end{align*}
    Furthermore,
    \begin{align}\label{eqn:subexp-eps-check-2}
      \begin{aligned}[t]
        \subexpepsilontwo
        &\stackrel{\mathclap{\cref{eqn:subexp-assum-log-apply}}}{\leq}
          16k^2\zeta^2 \log^3\left(\frac{8rk^2}{\delta}\right) rT^{1/2}
        \\
        &\leq
          \frac{1}{2}\cdot\epsbound,
      \end{aligned}
    \end{align}
    where the last inequality follows from
    \begin{align*}
      \subexpthmboundtwo
      \stackrel{\mathclap{\cref{eqn:subexp-thm-assum}}}{\leq}
      T.
    \end{align*}
    \Cref{lem:sub-exptail}, \cref{eqn:subexp-eps-check-1}, and \cref{eqn:subexp-eps-check-2} indicate that \Cref{assum:tail} is satisfied with $\epsilon$ satisfying \cref{eqn:main-lem-assum-eps}.
    Finally, \cref{eqn:main-lem-assum-T} is satisfied due to \cref{eqn:subexp-thm-assum}, confirming
    that all conditions in \cref{eqn:main-lem-assum} are satisfied.
    Therefore, we may apply \Cref{lem:main}, which results in
    \begin{align}\label{eqn:thm-1-final-result}
      \begin{aligned}[t]
        \left\|\widehat{\Theta} - \Theta\right\|
        &\leq \frac{ 8 \sigma^2 L}{ (1-\alpha) \beta^2 T}\subexpepsilonone\\
        &\qquad+\frac{ 8 \sigma^2 L}{ (1-\alpha) \beta^2 T}\subexpepsilontwo 
          + \frac{8\sigma^2 L}{ (1-\alpha) \beta^2 } T^{-1/2}\\
        &\stackrel{\cref{eqn:subexp-thm-assum}}{\leq} \subexpfinalbound.
      \end{aligned}
    \end{align}
    By inspecting \cref{eqn:subexp-thm-assum,eqn:thm-1-final-result}, 
    one can see that there exist universal polynomials $\polyEburn(\cdot),\polyEbound(\cdot)$ 
    such that \cref{eqn:subexp-thm-assum,eqn:thm-1-final-result} hold under \cref{eqn:subexp-thm-assum-poly,eqn:subexp-thm}.
    Specifically, the terms in $\polyEburn(\cdot)$ comes from breaking down the terms in \cref{eqn:subexp-thm-assum}
    and replacing $k$ with its polynomial equivalence from the definition \cref{eqn:main-lem-assum-k}. 
    We note that $k$ depends polynomially on $\log(T)$. Since $\log(T)$ grows more slowly than any positive power of $T$, this dependence can be absorbed into a polynomial dependence on $T$ for sufficiently large $T$.
    $\polyEbound(\cdot)$ in \cref{eqn:subexp-thm} can also be 
    derived from \cref{eqn:thm-1-final-result} using the same approach,
    thus completing the proof.
  \end{proof}
}

\section{Proof of \Cref{thm:pthmom}}\label{sec:thm-pthmom}
The proof structure will follow closely to \Cref{sec:thm-subexp}. 
First, we state a result that bounds the tail decay of a noise vector
with bounded $p$th moment.
\begin{prop} \label{prop:pthmom-indcov}
  If $\xi_t$ for all $t=0,1,\cdots,T$ satisfies
  \begin{align}
    \mathbb{E}[\|\xi_t\|^p] \leq \kappa r^{p/2},
  \end{align}
  the following holds:
  \begin{align}
    \mathbb{P}[\|\bxi_{\tau}^{(k,i)}\| \geq b^{1/2}] \leq \kappa r^{p/2} k^{p/2+1}b^{-p/2}.
  \end{align}
\end{prop}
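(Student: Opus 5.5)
The plan is to reduce the statement about the augmented vector to a statement about its $k$ constituent noise vectors. The augmented vector is
\[
\bxi_\tau^{(k,i)}=(\xi_{s},\xi_{s-1},\dots,\xi_{s-k+1}), \qquad s=\tau k+i+1,
\]
with zero padding when an index falls outside $\{0,\dots,T\}$, so that
\[
\|\bxi_\tau^{(k,i)}\|^2=\sum_{j=0}^{k-1}\|\xi_{s-j}\|^2 ,
\]
where padded entries contribute zero. If the sum is at least $b$, then some summand satisfies $\|\xi_{s-j}\|^2\ge b/k$. This gives the inclusion
\[
\{\|\bxi_\tau^{(k,i)}\|\ge b^{1/2}\}\subseteq\bigcup_{j=0}^{k-1}\{\|\xi_{s-j}\|\ge (b/k)^{1/2}\}.
\]
A union bound over the at most $k$ nonzero components then controls the left-hand probability by $k$ times a single-vector tail probability. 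This mirrors the argument in \Cref{prop:sub-exponential-tail}, with coordinates replaced by blocks.

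Next I bound each single-vector tail with Markov's inequality applied to the $p$th moment:
\[
\mathbb{P}\bigl[\|\xi_t\|\ge (b/k)^{1/2}\bigr]=\mathbb{P}\bigl[\|\xi_t\|^p\ge (b/k)^{p/2}\bigr]\le \mathbb{E}[\|\xi_t\|^p]\,k^{p/2}b^{-p/2}\le \kappa r^{p/2}k^{p/2}b^{-p/2}.
\]
Multiplying by the $k$ terms from the union bound gives exactly $\kappa r^{p/2}k^{p/2+1}b^{-p/2}$.

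No step here is a genuine obstacle. The only points needing care are the zero-padded components, which can only shrink the norm and so remain covered by the union bound, and the choice of the threshold $b/k$ in the pigeonhole step, since this is what produces the exponent $k^{p/2+1}$. I would also remark that the same bound holds verbatim for $\bxi_{t:t-k+1}$ for every $t$, because the argument uses nothing specific to the indexing $(k,i)$. That version is what the later truncation argument will union-bound over $t=1,\dots,T$.
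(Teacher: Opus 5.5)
Your proof is correct and matches the paper's argument step for step. The paper also uses the pigeonhole step to find a component with $\|\xi_j\|\ge (b/k)^{1/2}$, Markov's inequality on the $p$th moment giving $\kappa r^{p/2}k^{p/2}b^{-p/2}$ per component, and a union bound over the $k$ components; your remarks on zero padding and on $\bxi_\tau^{(k,i)}$ being a re-indexing of $\bxi_{t:t-k+1}$ (which the later truncation step union-bounds over) are accurate.
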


\begin{proof}
  To satisfy $\| \bxi_\tau^{(k,i)}\| \geq b^{1/2}$ for $b \geq 0$,
  there exists $j \in[(\tau-1) k + i + 2, \tau k + i + 1]$ such that 
  $\| \xi_{j} \| \geq (b/k)^{1/2}$.
  Applying Markov Inequality to \cref{eqn:pthmom-assum-moment}, we have
  \begin{align*}
    \mathbb{P}[\|\xi_{t}\| \geq (b/k)^{1/2}] \leq {\kappa r^{p/2}}{(b/k)^{-p/2}}
  \end{align*}
  for all $t=0,1,\cdots,T$.
  Therefore,
  \begin{align*}
    \mathbb{P}\left[\left\| \bxi_\tau^{(k,i)}\right\| \geq b^{1/2}\right] 
    &\leq \mathbb{P} \left[\bigcup_{j=(\tau-1) k + i + 2}^{ \tau k + i + 1} \left\|\xi_{j} \right\| \geq (b/k)^{1/2}\right] \\
    &\leq \sum_{j=(\tau-1) k + i + 2}^{ \tau k + i + 1}\mathbb{P} \left[\left\|\xi_{j} \right\| \geq (b/k)^{1/2}\right] \\
    &\leq k\left({\kappa r^{p/2} k^{p/2}}{b^{-p/2}} \right),
  \end{align*}
  which completes the proof.
\end{proof}

We can then derive the tail bound of distributions with bounded $p$th moment 
using \cite[corollary 6.20]{wainwrightHighDimensionalStatisticsNonAsymptotic2019} 
and \Cref{prop:pthmom-indcov}.
\begin{lem}[Tail Bound for \glspl*{rv} with Bounded $p$th Moment]\label{lem:pthmom}
  Under \Cref{assum:iid}, 
  $k$ as defined in \cref{eqn:main-lem-assum-k}, 
  and $\delta\in(0,1)$, if $\xi_t$ is $\kappa$-bounded $p$th moment with $p>2$ and
  \begin{align}
    \begin{aligned}[t]
      T \geq \max\Bigg\{
      &4k,
        \pthmomboundone,\\
      &\pthmomboundtwo
        \Bigg\},
    \end{aligned}
    \label{eqn:pthmom-assum}
  \end{align}
  the following holds with probability at least $1-\delta$:
  \begin{align*}
    &\left\|\tail\right\|\\
    &\leq \pthmomepsilon.
  \end{align*}
\end{lem}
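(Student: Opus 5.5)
The argument follows the template of \Cref{lem:sub-exptail}, with the sub-exponential tail estimate of \Cref{prop:subexpbound} replaced by the polynomial tail estimate of \Cref{prop:pthmom-indcov}.

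\textbf{Truncation.} Fix a truncation level $b>0$ and split each augmented vector into a clipped part $\underline{\bxi}$ (norm at most $\sqrt b$) and an overflow part $\overline{\bxi}$, exactly as in \cref{eqn:clipping}. The clipped vectors still have covariance $\preceq I$, as in \cref{eqn:subexp-cov-bounded}.

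The overflow part vanishes if no $\xi_j$, $j=0,\dots,T$, has $\|\xi_j\|\ge (b/k)^{1/2}$. By Markov's inequality and a union bound over the at most $2T$ indices (as in \Cref{prop:pthmom-indcov}), this fails with probability at most $2T\kappa r^{p/2}k^{p/2}b^{-p/2}$. Choose $b$ so that this equals $\delta/2$:
\begin{align*}
b = k r\left(\frac{4\kappa T}{\delta}\right)^{2/p}.
\end{align*}
The exact constants will be tuned to match the factor $(2^{1+p}\kappa k^{p+1}/\delta)^{2/p}$ in the statement. That expression suggests a union bound over $k$ blocks, each contributing $k^{p/2+1}$ through \Cref{prop:pthmom-indcov}, with the budget split as $\delta/(2k)$ per group. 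Either way, $b$ scales like $r\,k^{c}(\kappa/\delta)^{2/p}T^{2/p}$. This is the source of the $T^{1/p}$ loss.

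\textbf{One subtlety.} The expectation of the overflow part, $\mathbb{E}[\overline{\bxi}\,\overline{\bxi}^\top]$, is not zero even on the good event. It is bounded in norm by $\mathbb{E}[\|\bxi\|^2\mathds{1}\{\|\bxi\|>\sqrt b\}]\le \mathbb{E}\|\bxi\|^p\, b^{1-p/2}$. Summed over $T$ terms, this gives $T\kappa (rk)^{p/2}k\, b^{1-p/2}$. With the chosen $b$ this is of order $r\,T^{2/p}$ times constants, so it can be absorbed into the second term of the bound. I would handle it this way, or alternatively note that the clipped mean shift enters both the clipped and overflow pieces and cancels in the decomposition of \cref{eqn:subExp-bound}.

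\textbf{Concentration of the clipped sums.} Apply \cite[corollary 6.20]{wainwrightHighDimensionalStatisticsNonAsymptotic2019} to each group $\Delta_{1,k}$ of $T^{(k,i)}$ independent clipped vectors in dimension $rk$. Use
\begin{align*}
\phi=\left(\frac{4b}{T^{(k,i)}}\log\frac{8rk^2}{\delta}\right)^{1/2},
\end{align*}
so that each group deviates by more than $\phi\,T^{(k,i)}\le (4bT\log(8rk^2/\delta)/k)^{1/2}$ with probability at most $\delta/(4k)$. This requires $\phi\le1$, which reads $8kb\log(8rk^2/\delta)\le T$. Substituting $b\propto T^{2/p}$, the condition becomes $T^{1-2/p}\gtrsim k\,(\cdots)^{2/p}\log(\cdots)$. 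That is exactly the second burn-in condition in \cref{eqn:pthmom-assum}, with the exponent $p/(p-2)$.

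For the leftover terms $\Delta_{2,k}$, take $\phi'=4b\log(8rk^2/\delta)\ge1$; this is where the third burn-in condition enters. Each leftover term is then at most $4b\log(8rk^2/\delta)$ with probability $1-\delta/(4k)$.

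\textbf{Assembling.} Summing over the $k$ groups gives the bound
\begin{align*}
k\left(\frac{4bT}{k}\log\frac{8rk^2}{\delta}\right)^{1/2}+4kb\log\frac{8rk^2}{\delta}.
\end{align*}
Plugging in $b$ yields the two terms of the lemma, $r^{1/2}T^{1/2+1/p}(\cdot)^{1/p}\log^{1/2}$ and $rT^{2/p}(\cdot)^{2/p}\log$. A union bound gives total failure probability $\delta$.

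\textbf{Main obstacle.} The hard part is the bookkeeping of constants so that the powers of $k$, the factor $2^{1+p}$ and the burn-in exponents come out exactly as stated, together with the truncation-bias term above. I expect the lower-bound condition on $T$ involving the $-p/2$ power to be what makes $b$ large enough that $\phi'\ge 1$.
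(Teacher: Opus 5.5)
Your proposal is correct and follows the paper's proof essentially step for step: truncation at a level $b\propto r(\kappa T/\delta)^{2/p}$, Wainwright's Corollary~6.20 on each of the $k$ independent groups with the same $\phi$ and $\phi'$, and a union bound. Your bound $\bigl\|\mathbb{E}\bigl[\sum_{t}\overline{\bxi}_{t:t-k+1}\overline{\bxi}_{t:t-k+1}^\top\bigr]\bigr\|\le T\,k^{p/2}\kappa r^{p/2}\,b^{1-p/2}=\bigO(b\delta)$ on the truncation bias is a genuine repair, because the paper's combining step treats the overflow term as zero on the good event although this mean does not vanish.

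Drop your alternative claim that the mean shift cancels; it does not. To land exactly on the stated constants, take the paper's $b=(2T\kappa r^{p/2}k^{p/2+1}/\delta)^{2/p}$. Then bound each leftover term surely by $\max(b,1)=b$ (a difference of two PSD matrices, with $b\ge1$ since $\kappa\ge1$ by Jensen). This leaves far more than $b\delta/(2k)$ of room in the term $4kb\log(8rk^2/\delta)$ to absorb the bias.
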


\begin{proof}
  Similar to sub-exponential distributions, 
  we aim to apply \cite[corollary 6.20]{wainwrightHighDimensionalStatisticsNonAsymptotic2019}, 
  so we use \Cref{prop:pthmom-indcov} to bound the tail probability
  of a distribution with bounded $p$th moment. 
  Let 
  \begin{align} 
    b = \left(\frac{2T\kappa r^{p/2} k^{p/2+1}}{\delta}\right)^{2/p}.
    \label{eqn:pthmom-b-def}
  \end{align}
  Due to \Cref{prop:pthmom-indcov}, we have that
  \begin{align}\label{eqn:pthmom-prop-prob-bound-0}
    \|\bxi_\tau^{(k,i)} \| \leq b^{1/2}
  \end{align}
  with probability at least
  \begin{align}\label{eqn:pthmom-prop-prob-bound-1}
    1 - \frac{\kappa r^{p/2} k^{p/2+1}}{b^{p/2}} = 1 - \frac{\delta}{2T}.
  \end{align}
  
  \paragraph{Bounding $\Delta_{1,k}$} 
  We fix
  \begin{align}\label{eqn:pthmom-phi-def}
    \phi = \left(\frac{4b}{T^{(k,i)}}\log\left(\frac{8rk^2}{\delta}\right)\right)^{1/2}.
  \end{align}
  and observe that
    \begin{align}\label{eqn:pthmom-phi-observe}
      \begin{aligned}[t]
        \phi^2
        &\stackrel{\mathclap{\cref{eqn:pthmom-b-def}}}{=}
          \left(\frac{2T\kappa r^{p/2} k^{p/2+1}}{\delta}\right)^{2/p}\frac{4}{T^{(k,i)}}\log\left(\frac{8rk^2}{\delta}\right)\\
        &\stackrel{\mathclap{\cref{eqn:Tki-prac-bound}}}{\leq}
          \left(\frac{2T\kappa r^{p/2} k^{p/2+1}}{\delta}\right)^{2/p}\frac{8k}{T}\log\left(\frac{8rk^2}{\delta}\right)\\
        &\stackrel{\mathclap{\cref{eqn:pthmom-assum}}}{\leq}
          1.
      \end{aligned}
  \end{align}  
  We now apply \cite[corollary 6.20]{wainwrightHighDimensionalStatisticsNonAsymptotic2019} 
  and the bounded covariance condition \cref{eqn:subexp-cov-bounded} to obtain
  \begin{align}\label{eqn:pthmom-prop-term1-bound}
    \begin{aligned}[t]
      \left\|\termonemonounder\right\|
      &\leq \phi T^{(k,i)}\\
      & \stackrel{\cref{eqn:pthmom-phi-def},\cref{eqn:Tki-bound}}{\leq} \left(\frac{4bT}{k}\log\left(\frac{8rk^2}{\delta}\right)\right)^{1/2}
    \end{aligned}
  \end{align}
  with probability at least 
  \begin{align} \label{eqn:pthmom-prop-prob-bound-2}
    \begin{aligned}
      1- 2rke^{-\frac{T^{(k,i)}\phi^2}{2b(1+\phi)}}
      &\stackrel{\cref{eqn:pthmom-phi-observe}}{\geq} 1- 2rke^{-\frac{T^{(k,i)}\phi^2}{4b}}\\
      &\stackrel{\cref{eqn:pthmom-phi-def}}{=} 1 - \frac{\delta}{4k}
    \end{aligned}
  \end{align}

  
  \paragraph{Bounding $\Delta_{2,k}$}
  We fix
  \begin{align}\label{eqn:pthmom-phi-prime-def}
    \phi' = 4b\log\left(\frac{8rk^2}{\delta}\right).
  \end{align}
  and observe
  \begin{align}\label{eqn:pthmom-phi-prime-observe}
    \begin{aligned}[t]
      \phi'
      &\stackrel{\mathclap{\cref{eqn:pthmom-b-def}}}{=}
        4\left(\frac{2\kappa r^{p/2} k^{p/2+1}}{\delta}\right)^{2/p}\log\left(\frac{8rk^2}{\delta}\right) T^{2/p}\\
      &\stackrel{\mathclap{\cref{eqn:pthmom-assum}}}{\geq}
        1.
    \end{aligned}
  \end{align}
  Similarly to $\Delta_{1,k}$, we have
  \begin{align}
    \left\| \termtwomonounder \right\| 
    \leq \phi' = 4b\log\left(\frac{8rk^2}{\delta}\right)
    \label{eqn:pthmom-prop-term2-bound}
  \end{align}
  with probability at least 
  \begin{align}\label{eqn:pthmom-prop-prob-bound-3} 
    1- 2rke^{-\frac{\phi'^2}{2b(1+\phi')}} \geq  1- 2rke^{-\frac{\phi'}{4b}} \geq  1 - \frac{\delta}{4k},
  \end{align}
  \paragraph{Combining both terms}

  Combining $\Delta_{1,k}$ and $\Delta_{2,k}$ and using the triangle inequality,
  we obtain
  \begin{align}\label{eqn:subexp-bound}
    \begin{aligned}[t]
      &\left\|\tail\right\|\\
      &\stackrel{\cref{eqn:clipping}}{\qquad\leq\qquad}
        \left\|\sum_{t=1}^T \overline{\bxi}_{t:t-k+1} \overline{\bxi}_{t:t-k+1}^\top - \mathbb{E}\left[\sum_{t=1}^T \overline{\bxi}_{t:t-k+1} \overline{\bxi}_{t:t-k+1}^\top\right]\right\| \\
      &\qquad\qquad+
        \left\|\sum_{i=0}^{k-1} \left(
        \sum_{\tau=1}^{T^{(k,i)}} \underline{\bxi}^{(k,i)}_\tau ( \underline{\bxi}^{(k,i)}_\tau)^\top
        -\mathbb{E}\left[\sum_{\tau=1}^{T^{(k,i)}} \underline{\bxi}^{(k,i)}_\tau ( \underline{\bxi}^{(k,i)}_\tau)^\top\right]
          \right)\right\| \\
      &\qquad\qquad+
        \left\|\sum_{i=0}^{k-1}  \left(\underline{\bxi}^{(k,i)}_{0}(\underline{\bxi}^{(k,i)}_{0})^\top - \mathbb{E}\left[\underline{\bxi}^{(k,i)}_{0}(\underline{\bxi}^{(k,i)}_{0})^\top\right] \right)
        \right\|      
      \\
      & \stackrel{\mathclap{\cref{eqn:pthmom-prop-prob-bound-0},\cref{eqn:pthmom-prop-term1-bound},\cref{eqn:pthmom-prop-term2-bound}}}{\qquad\leq\qquad}
        \sum_{i=0}^{k-1}\left(\frac{4bT}{k}\log\left(\frac{8rk^2}{\delta}\right)\right)^{1/2} + \sum_{i=0}^{k-1} 4b\log\left(\frac{8rk^2}{\delta}\right)\\
      &\qquad\leq\qquad
        \left(4kbT\log\left(\frac{8rk^2}{\delta}\right)\right)^{1/2} + 4kb\log\left(\frac{8rk^2}{\delta}\right)\\
      &\qquad\leq\qquad
        \pthmomepsilon.
    \end{aligned}
  \end{align}
  with probability at least 
  \begin{align*}
    1- \sum_{t=1}^T \frac{\delta}{2T} -\sum_{i=0}^{k-1} \frac{\delta}{4k} - \sum_{i=0}^{k-1} \frac{\delta}{4k} = 1-\delta.
  \end{align*}
  due to \cref{eqn:pthmom-prop-prob-bound-1,eqn:pthmom-prop-prob-bound-2,eqn:pthmom-prop-prob-bound-3}.

\end{proof}

Finally, we apply \Cref{lem:main} by using the explicit form of $\epsilon$ 
derived in \Cref{lem:pthmom} to conclude the sample complexity bound on
distributions with bounded $p$th moment.
{
\def\proofname{Proof of \Cref{thm:pthmom}} 
\begin{proof}
  Suppose that we choose $k$ as in \cref{eqn:main-lem-assum-k}  and  that the following inequality hold:
  \begin{align} 
    \begin{aligned}[t]
      T \geq \max\Bigg\{
      &
        r^{p/(p-2)}, 4k, \pthmomboundone,\\
      &
        \pthmomboundtwo, 2\overline{t}_c, \\
      &
        \pthmomthmboundone, \\
      &
        \pthmomthmboundtwo
        \Bigg\}.
    \end{aligned}
    \label{eqn:pthmom-thm-assum}
  \end{align}
  To provide an upper bound on $\|\widehat{\Theta} - \Theta\|$,
  we will apply the proposed $\epsilon$ derived in \Cref{lem:pthmom}.
  Now we proceed to prove the theorem
  by assuming \cref{eqn:pthmom-thm-assum} holds.
  Since the  conditions of \Cref{lem:pthmom} (which are \cref{eqn:pthmom-assum}) 
  are satisfied by \cref{eqn:pthmom-thm-assum},
  then to use \Cref{lem:main}, we have to show \cref{eqn:main-lem-assum-eps} holds.
  We have:
    \begin{align} \label{eqn:thm-pthmom-assum1}
      \pthmomepsilonone
      &\stackrel{\mathclap{\cref{eqn:pthmom-thm-assum}}}{\leq}
      \frac{1}{2}\cdot\epsbound,
    \end{align}
    and
    \begin{align} \label{eqn:thm-pthmom-assum2}
      \pthmomepsilontwo
      &\stackrel{\mathclap{\cref{eqn:pthmom-thm-assum}}}{\leq}
        \frac{1}{2}\cdot\epsbound.
    \end{align}
    \Cref{eqn:thm-pthmom-assum1,eqn:thm-pthmom-assum2} indicate that 
    $\epsilon$ from \Cref{lem:pthmom} satisfies \cref{eqn:main-lem-assum-eps},
    so \Cref{assum:tail} holds.
    Finally, \cref{eqn:main-lem-assum-T} is satisfied due to \cref{eqn:pthmom-thm-assum}, confirming
    that all conditions in \cref{eqn:main-lem-assum} are satisfied.
    Therefore, we may apply \Cref{lem:main}, which indicates that
    \begin{align}\label{eqn:thm-2-final-result}
      \begin{aligned}[t]
        &\left\|\widehat{\Theta} - \Theta\right\|\\
        &\leq \frac{ 8 \sigma^2 L}{ (1-\alpha) \beta^2 T}\left( \pthmomepsilon\right) \\
        &\qquad + \frac{8\sigma^2 L}{ (1-\alpha) \beta^2 } T^{-1/2}\\
        &\leq \pthmomfinalbound,
      \end{aligned}
    \end{align}
  where the last inequality follows from  $r^{1/2}T^{1/2+1/p} \geq rT^{2/p}$. 
  By inspecting \cref{eqn:pthmom-thm-assum,eqn:thm-2-final-result}, 
  one can see that there exist $p$-dependent polynomials $\polyPburn(\cdot),\polyPbound(\cdot)$ 
  such that \cref{eqn:pthmom-thm-assum,eqn:thm-2-final-result} hold 
  under \cref{eqn:pthmom-thm-assum-poly,eqn:pthmom-thm}.
  Specifically, 
  the terms in $\polyPburn(\cdot)$ comes from breaking down the terms in \cref{eqn:pthmom-thm-assum}
  and replacing $k$ with its polynomial equivalence from the definition \cref{eqn:main-lem-assum-k}. 
  We note that $k$ depends polynomially on $\log(T)$. Since $\log(T)$ grows more slowly than any positive power of $T$, this dependence can be absorbed into a polynomial dependence on $T$ for sufficiently large $T$.
  $\polyPbound(\cdot)$ in \cref{eqn:pthmom-thm} can also be 
  derived from \cref{eqn:thm-2-final-result} using the same approach,
  thus completing the proof.
\end{proof}




}

\section{Rewriting \gls*{arx} Model as \gls*{var} Model}
\label{sec:arx-appendix}
Assume that the input sequence $\{v_t\}$ is a series of independent \glspl*{rv} such as $v_t \sim \mathcal{N}(0,1)$.
We now observe that \cref{eqn:arx} can be rewritten as follows for $t=1,2,\cdots,T$:
\begin{align}\label{eqn:arx-augmented}
  \underbrace{\begin{bmatrix}
    y_{t} \\
    y_{t-1} \\
    \vdots    \\
    y_{t-q+1}     \\
    u_{t} \\
    u_{t-1} \\
    \vdots    \\
    u_{t-q+1}     \\
  \end{bmatrix}}_{z_{t}}
  =
  \underbrace{\begin{bmatrix}
    A_1    & \cdots & A_{q-1} & A_q    & B_1    & \cdots & B_{q-1} & B_q    \\
    I      & \cdots & 0       & 0      & 0      & \cdots & 0       & 0      \\
    \vdots & \ddots & \vdots  & \vdots & \vdots & \ddots & \vdots  & \vdots \\
    0      & \cdots & I       & 0      & 0      & \cdots & 0       & 0      \\
    0      & \cdots & 0       & 0      & 0      & \cdots & 0       & 0      \\
    0      & \cdots & 0       & 0      & I      & \cdots & 0       & 0      \\
    \vdots & \ddots & \vdots  & \vdots & \vdots & \ddots & \vdots  & \vdots \\
    0      & \cdots & 0       & 0      & 0      & \cdots & I       & 0
  \end{bmatrix}}_{\Theta}
  \underbrace{\begin{bmatrix}
    y_{t-1}   \\
    y_{t-2}   \\
    \vdots    \\
    y_{t-q}   \\
    u_{t-1}   \\
    u_{t-2} \\
    \vdots    \\
    u_{t-q}   \\
  \end{bmatrix}}_{z_{t-1}}
  +
  \underbrace{\begin{bmatrix}
    F & 0          \\
    0 & 0          \\
    \vdots & \vdots     \\
    0 & 0          \\
    0 & G        \\
    0 & 0          \\
    \vdots & \vdots     \\
    0 & 0          \\
  \end{bmatrix}}_{\Psi}
  \underbrace{\begin{bmatrix}
    w_t \\
    v_t        \\
  \end{bmatrix}}_{\xi_t}. 
\end{align}

\section{Proof of \Cref{thm:arx}}\label{sec:thm-arx}
\subsection{Verifying \Cref{assum:iid}}
Follows directly from \Cref{assum:arx-iid}.
\subsection{Verifying \Cref{assum:stab}}
From the explicit form of $\Theta$ \cref{eqn:arx-augmented},
we observe that $\Theta$ can be partitioned as follows:
\begin{align*}
  \Theta &\stackrel{\mathclap{\cref{eqn:arx-ABCD-def}}}{=}
         \begin{bmatrix}
           \bA & \bB \\
           0 & \bE
         \end{bmatrix},
\end{align*}
where $\bE$ is a nilpotent matrix with order $q$ and $\|\bE\| = 1$. 
We now observe that
\begin{align}\label{eqn:phi-t}
  \Theta^t &=
           \begin{bmatrix}
             \bA^t & \sum_{i=0}^{t-1} \bA^{t-1-i} \bB \bE^{i} \\
             0   & \bE^t
           \end{bmatrix}.
\end{align}
Furthermore, the stability assumption in \Cref{assum:arx-main} states
\begin{align}\label{eqn:arx-proof-1}
  \|\bA^t\| \stackrel{\mathclap{\cref{assum:arx-main}}}{\;\;\leq\;\;} c_A \gamma^t, 
  \quad \forall t=0,1,\cdots;
\end{align}
By the nilpotent property of $\bE$, we have that
\begin{align}\label{eqn:arx-proof-2}
  \|\bE^t\| \leq \mathds{1}\{t< q\}, \quad \forall t=0,1,\cdots;
\end{align}
Then by stability of $\bA$ and nilpotent property of $\bE$,
\begin{align}\label{eqn:arx-proof-3}
  \begin{aligned}
  \left\|\sum_{i=0}^{t-1} \bA^{t-1-i} \bB \bE^{i} \right\| 
  &\stackrel{\mathclap{\cref{assum:arx-main}}}{\quad\leq\quad\quad} 
    c_B \sum_{i=0}^{t-1} \left\|\bA^{t-1-i}\right\| \left\| \bE^{i} \right\| \\
  &\stackrel{\mathclap{\cref{eqn:arx-proof-1},\cref{eqn:arx-proof-2}}}{\quad\leq\quad\quad} 
    c_A c_B\sum_{i=0}^{q-1} \gamma^{t-1-i}  \\
  &\quad\leq\quad\quad c_A c_B \gamma^{t-1}\frac{\gamma^{-q}-1}{\gamma^{-1}-1}.
  \end{aligned}
\end{align}
By the block stucture of $\Theta$ in \cref{eqn:phi-t}, we have that:
\begin{align*}
  \|\Theta^t\|
  &\quad\leq\quad \max(\|\bA^t\|,\|\bE^t\|) + \left\|\sum_{i=0}^{t-1} \bA^{t-1-i} \bB \bE^{i} \right\|\\
  &\stackrel{\mathclap{\cref{eqn:arx-proof-1},\cref{eqn:arx-proof-2},\cref{eqn:arx-proof-3}}}{\quad\leq\quad} 
    \max(c_A \gamma^t, \mathds{1}\{t< q\}) + c_A c_B \gamma^{t-1}\frac{\gamma^{-q}-1}{\gamma^{-1}-1}\\ 
  &\quad\leq\quad \left(\max(c_A, \gamma^{-q}) + c_A c_B \gamma^{-1}\frac{\gamma^{-q}-1}{\gamma^{-1}-1} \right)\gamma^{t},
\end{align*}
where the last inequality follows from $\gamma \in (0,1)$.
\subsection{Verifying \Cref{assum:bounded-cov}}
Directly follows from the structure of $\Psi$.
\subsection{Verifying \Cref{assum:ctrb-sigma}}
Let $\overline{t}_c = q$.
Consider the linear map defined by the matrix
$\begin{bmatrix}\Psi & \Theta\Psi & \cdots & \Theta^{q-1}\Psi\end{bmatrix}$ in \cref{eqn:ctrb-sigma},
which maps $(\xi_q,\ldots,\xi_1)$ to the state $z_q$ reached at time $q$
by the recursion in \cref{eqn:arx-augmented} started from the zero state:
\begin{align} \label{eqn:arx-sys}
  \begin{aligned}
    \begin{bmatrix}
    y_q\\
    \vdots \\
    y_{1}\\
    u_q\\
    \vdots \\
    u_{1}\\
  \end{bmatrix}
  \stackrel{\mathclap{\cref{eqn:arx-augmented}}}{\;\;=\;\;} 
  \Psi 
  \begin{bmatrix}
    w_{q}\\
    v_{q}
  \end{bmatrix}
  + 
  \Theta \Psi
  \begin{bmatrix}
    w_{q-1}\\
    v_{q-1}
  \end{bmatrix}
  + \cdots 
  + \Theta^{q-1} \Psi
  \begin{bmatrix}
    w_{1}\\
    v_{1}
  \end{bmatrix},
  \end{aligned}
\end{align}
where $\Theta, \Psi$ are also defined in \cref{eqn:arx-augmented}.
For any $z_q$ such that $\|z_q\| = 1$,
we observe the following is a solution to \cref{eqn:arx-sys}:
\begin{align}
  \begin{aligned}
    w_{t} = F^{-1} \left(y_{t} - \sum_{i=1}^{t-1} (A_i y_{t-i} + B_i u_{t-i})\right), \quad
    v_t = G^{-1} u_{t}, \quad
    t = 1, \cdots, q
  \end{aligned}
\end{align}
We note that $F$ and $G$ are invertible due to \Cref{assum:arx-noise}.
Explicitly, we can write the solution to \cref{eqn:arx-sys} as
\begin{align}\label{eqn:M-def}
  \begin{aligned}[t]
    \begin{bmatrix}
      w_{q}\\
      \vdots \\
      w_{1}\\
      v_{q}\\
      \vdots \\
      v_{1}\\
    \end{bmatrix}
    &= 
    \underbrace{
    \left[
    \begin{array}{ccc;{2pt/2pt}ccc}
      F^{-1} &        &        &        &        & \\
             & \ddots &        &        &        & \\
             &        & F^{-1} &        &        & \\ \hdashline[2pt/2pt]
             &        &        & G^{-1} &        & \\
             &        &        &        & \ddots & \\
             &        &        &        &        & G^{-1}
    \end{array}
    \right]
    \left[
    \begin{array}{c;{2pt/2pt}c}
      \bC & \bD \\ \hdashline[2pt/2pt]
      0 & I
    \end{array}
    \right]
    }_{M}
      \begin{bmatrix}
        y_q\\
        \vdots \\
        y_{1}\\
        u_q\\
        \vdots \\
        u_{1}\\
      \end{bmatrix}
  \end{aligned}
\end{align}
where $\bC,\bD$ are defined in \cref{eqn:arx-ABCD-def}. 
Since $\begin{bmatrix}\Psi & \Theta \Psi & \cdots & \Theta^{q-1} \Psi\end{bmatrix}$
is an $\mathbb{R}^{(d_y+d_u)q\times(d_y+d_u)q}$ square matrix, $M$ is its inverse (up to permutation of rows and columns).  Thus,
\begin{align*}
  \begin{bmatrix}\Psi & \Theta \Psi & \cdots & \Theta^{q-1} \Psi\end{bmatrix} \begin{bmatrix}\Psi & \Theta \Psi & \cdots & \Theta^{q-1} \Psi\end{bmatrix}^\top
  &\;\;\succeq\;\; \frac{1}{\|M\|^2} I\\ 
  &\stackrel{\mathclap{\cref{eqn:M-def}}}{\;\;\succeq\;\;} \frac{1}{\max(\|F^{-1}\|,\|G^{-1}\|)^2(\max(1, \|\bC\|) + \|\bD\|)^2} I\\
  &\stackrel{\mathclap{\cref{assum:arx-main}}}{\;\;\succeq\;\;} \frac{\min(\underline{\sigma}_F,\underline{\sigma}_G)^2}{(\max(1, c_C) + c_D)^2} I.
\end{align*}
which concludes the proof.

\section{Numerical Results}\label{sec:numerical}
\paragraph{Simulation Design} 
We performed numerical simulations of the \gls*{arx} model to verify the sample complexity bounds
we obtained theoretically for
distributions with bounded $p$th moments in \Cref{thm:pthmom},
sub-exponential distributions in \Cref{thm:subexp}, 
and sub-Gaussian distributions in \Cref{thm:subgauss}.
We explore an ARX model defined in \cref{eqn:arx} 
with autoregressive order $q=5$ and dimensions $d_y=d_u=10$.
For all simulations, the input signal $u_t=v_t$ follows a Gaussian distribution 
with unit standard deviation,
and the following noise distributions for $w_t$ are considered:
\begin{itemize}
  \item sub-Gaussian distribution: Gaussian distribution,
  \item sub-exponential distribution: Laplace distribution,
  \item Bounded $p$th moment with $p>2$: Student's t distribution ($p=2.1, 4.1$), Pareto distribution ($p=2.1, 4.1$),
  \item No Bounded Moments: Cauchy distribution.
\end{itemize}
We set all noise distributions 
with unit standard deviation except for distributions without finite covariance.
For each noise distribution 
and sample sizes $T = [100, 200, 500, 10^3, 10^4, 4\times 10^4, 10^5, 10^6, 5\times 10^6]$, 
we ran $200$ simulations generating random single-trajectory data.
\gls*{ols} is used to estimate the parameters $\{A_i\}_{i=1}^5$ and $\{B_i\}_{i=1}^5$
with \cref{eqn:theta-hat}.
The error bound of the $99$th, $95$th, and $90$th percentiles of the relative error
are approximated using \texttt{numpy.percentile()}
with the ``higher'' method,
and the slope is approximated using linear fitting or \texttt{numpy.polyfit()}.
\Cref{fig:T} shows the loglog plot of the 
total relative error of the parameters for increasing sample size
plotted against a reference line of slope $-1/2$.

\paragraph{Discussion} 
Our results are shown in \Cref{fig:T}.
Below $1000$ samples, 
$T$ is not sufficiently large to exceed the thresholds in our theoretical results,
so the relative errors are much larger.
To verify the theoretical sample complexity bounds 
when the failure probability $\delta$ is small,
we plot error bounds for the $99$th, $95$th, $90$th percentiles of the relative error, 
where the deviation from the $T^{-1/2}$ rate is not significant.
For Gaussian and Laplace distributions, the sample complexity exhibits a clear $T^{-1/2}$ dependence
compared with the reference line.
As the number of moments decrease, Pareto and Student's t distributions with $p = 2.1$
at the $99$th percentile
exhibit larger variance,
where the fitted slope is $-0.43$ for Student's t and $-0.46$ for Pareto distributions.
While \Cref{thm:pthmom} predicts that the sample complexity upper bound
should deteriorate by $T^{1/p}$,
our numerical results do not clearly indicate whether these distributions actually have slower convergence.
Finally, we explore the sample complexity scaling of distributions without finite covariance.
The Cauchy distribution with no bounded moments exhibit no convergence,
suggesting that the parameters are not estimated properly.

The numerical results indicate that when the covariance is unbounded, 
the empirical error bound can deviate substantially from the $T^{-1/2}$ rate,
such as the Cauchy distribution, which failed to converge. 
When the noise is heavy-tailed with bounded $p$th moments for $p > 2$, 
simulations do not provide clear evidence that the decay rate is slower than $T^{-1/2}$.
We observe a mild deviation from the $T^{-1/2}$ rate for the Student's t
distribution with $p = 2.1$, 
but this behavior may be attributable to rare tail events that fall into the failure
probability of our theoretical bounds. 
We also note that the error decay appears faster than $T^{-1/2}$ for some distributions,
but this could be attributed to sample sizes below the burn-in period.
Thus, our numerical results do not clearly show 
that heavy-tailed distributions result in $T^{1/p}$ deterioration 
of the error bound.

\begin{figure}
  \centering
  \includegraphics[width=1\linewidth]{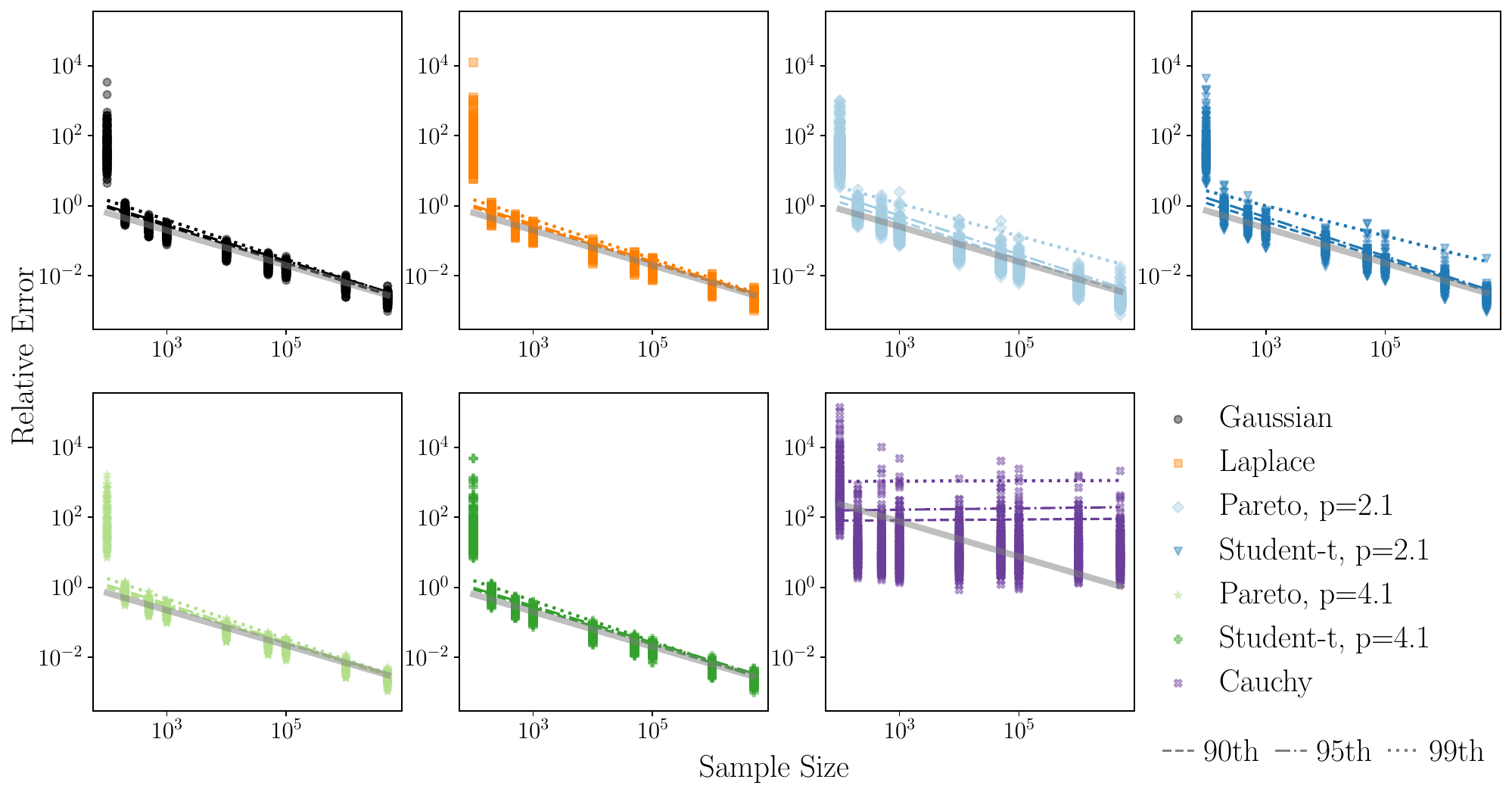}
  \caption{Log log plot of relative parameter estimate error for an ARX Model with OLS estimator 
    of sample sizes $T = [100, 200, 500, 10^3, 10^4, 4\times 10^4, 10^5, 10^6, 5\times 10^6]$.
    The fitted slope for the $99$th, $95$th, and $90$th percentiles are shown in dashed lines,
    and the reference gray line has slope $-1/2$.}
  \label{fig:T}
\end{figure}
\end{document}